\documentclass[11pt]{article}
\usepackage[utf8]{inputenc}

\usepackage{mystyle}

\begin{document}

\title{\Huge A Theoretical Analysis of  Deep Q-Learning}

\author{Jianqing Fan\thanks{Department of Operations Research and Financial Engineering, Princeton University.  Research supported by the NSF grant DMS-1662139 and DMS-1712591, the ONR grant N00014-19-1-2120, and the NIH grant 2R01-GM072611-14.}\quad\qquad Zhaoran Wang\thanks{Department of Industrial Engineering and Management Sciences, Northwestern  University}\quad\quad Yuchen Xie$^\dagger$ \quad\quad Zhuoran Yang$^*$ }


\maketitle


\begin{abstract}
Despite the great empirical success of deep reinforcement learning, its theoretical foundation is less well understood. In this work, we make the first attempt to theoretically understand the deep Q-network (DQN) algorithm \citep{mnih2015human} from both algorithmic and statistical perspectives. In specific, we focus on a  slight simplification of DQN that fully captures its key features. Under mild assumptions, we establish the algorithmic and statistical rates of convergence for the action-value functions of the iterative policy sequence  obtained by DQN. In particular, the statistical error characterizes the bias and variance that arise from approximating the action-value function using deep neural network, while the algorithmic error converges to zero at a geometric rate. As a byproduct, our analysis provides justifications for the techniques of experience replay and target network, which are crucial to the empirical success of DQN. Furthermore, as a simple extension of  DQN, we   propose the Minimax-DQN algorithm for zero-sum Markov game with two players.  Borrowing the analysis of DQN, we also quantify the difference between  the   policies   obtained by Minimax-DQN  and  the Nash equilibrium of the Markov game     in terms of both
 the algorithmic and statistical rates of convergence.
\end{abstract}

\section{Introduction}

Reinforcement learning (RL) attacks the multi-stage decision-making problems by interacting with the environment and learning from the experiences. With the breakthrough in deep learning, deep reinforcement learning (DRL) demonstrates tremendous success in solving highly challenging problems, such as the game of Go \citep{silver2016mastering,silver2017mastering}, computer games \citep{vinyals2019grandmaster},  robotics \citep{kober2012reinforcement},  dialogue systems \citep{chen2017survey}. In DRL, the value or policy functions are often represented as deep neural networks and the related deep learning techniques can be readily applied. For example, deep Q-network (DQN) \citep{mnih2015human},  asynchronous advantage actor-critic (A3C) \citep{mnih2016asynchronous},  trust region policy optimization (TRPO) \citep{schulman2015trust},  proximal policy optimization (PPO) \citep{schulman2017proximal}  
build upon classical RL methods  \citep{watkins1992q, sutton2000policy,konda2000actor}
and  have become benchmark  algorithms for artificial intelligence.

Despite its great empirical success, there exists a substantial gap between the theory and practice of DRL. In particular, most existing theoretical work on reinforcement learning focuses on the tabular case where the state and action spaces are finite, or the case where the value function is linear. Under these restrictive settings, the algorithmic and statistical perspectives of reinforcement learning are well-understood via the tools developed for convex optimization and linear regression. However, in presence of nonlinear function approximators such as deep neural network, the theoretical analysis of reinforcement learning becomes intractable as it involves solving a highly nonconvex statistical optimization problem.

To bridge such a gap in DRL, we make the first attempt to theoretically understand DQN, which can be cast as an extension of the classical Q-learning algorithm \citep{watkins1992q} that uses deep neural network to approximate the  action-value function. Although the algorithmic and statistical properties of the classical Q-learning algorithm are well-studied,   theoretical analysis of DQN is highly challenging due to its differences in the following two aspects.

  First, in online gradient-based temporal-difference reinforcement learning algorithms, approximating the action-value function often leads to instability. \cite{baird1995residual} proves that this is the case even with linear function approximation. The key technique to achieve stability in DQN is experience replay \citep{lin1992self, mnih2015human}. In specific, a replay memory is used to store the trajectory of the Markov decision process (MDP). At each iteration of DQN, a mini-batch of states, actions, rewards, and next states are sampled from the replay memory as observations to train the Q-network, which approximates the action-value function. The intuition behind experience replay is to achieve stability by breaking the temporal dependency among the observations used in  training  the deep neural network.


  Second, in addition to the aforementioned Q-network, DQN uses another neural network named the target network to obtain an unbiased estimator of the mean-squared Bellman error used in training the Q-network. The target network is synchronized with the Q-network after each period of iterations, which leads to a coupling between the two networks. Moreover, even if we fix the target network and focus on updating the Q-network, the subproblem of training a neural network still remains less well-understood in theory.

In this paper, we focus on a slight simplification of DQN, which is amenable to theoretical analysis while fully capturing the above two aspects. In specific, we simplify the technique of experience replay with an independence assumption, and focus on deep neural networks with rectified linear units (ReLU) \citep{nair2010rectified} and large batch size. Under this setting, DQN is reduced to the neural fitted Q-iteration (FQI) algorithm \citep{riedmiller2005neural} and the technique of target network can be cast as the value iteration. More importantly, by adapting the approximation results for ReLU networks to the analysis of Bellman operator, we establish the algorithmic and statistical rates of convergence for the iterative policy sequence obtained by DQN. As shown in the main results in \S\ref{sec:algo}, the statistical error characterizes the bias and variance that arise from approximating the action-value function using neural network, while the algorithmic error geometrically decays to zero as the number of iteration goes to infinity.

Furthermore, we extend   DQN to two-player zero-sum Markov games \citep{shapley1953stochastic}. The proposed algorithm, named Minimax-DQN, can be viewed as a combination of the  Minimax-Q learning algorithm for tabular zero-sum Markov games \citep{littman1994markov} and deep neural networks for function approximation. Compared with DQN, the main difference lies in the approaches to compute the target values. In DQN, the target is computed via maximization over the action space. In contrast, the target obtained  computed by solving the Nash equilibrium of  a zero-sum matrix game in Minimax-DQN, which can be efficiently attained via linear programming. Despite such a difference,   both these two methods  can be viewed as approximately applying   the Bellman  operator to the Q-network. Thus, borrowing the analysis of DQN, we also establish theoretical results for Minimax-DQN. Specifically, we quantify the suboptimality of  policy returned by the algorithm by the difference between the action-value functions  associated with   this policy and   with  the Nash equilibrium policy of the Markov game. For this notion of suboptimality, we establish the both algorithmic and statistical rates of convergence, which implies that the action-value function converges to the optimal counterpart up to an unimprovable statistical error in geometric rate.

Our contribution is three-fold. First, we establish  the algorithmic and statistical errors of the neural FQI algorithm, which can be viewed as a slight simplification of DQN. Under mild assumptions, our results show that the proposed algorithm obtains a sequence of Q-networks that geometrically converges to the optimal action-value function up to an intrinsic statistical error induced by the approximation bias of ReLU network and finite sample size. Second, as a byproduct, our analysis justifies the techniques of experience replay and target network used in DQN, where the latter can be viewed as a single step of  the value iteration. Third, we propose the Minimax-DQN algorithm that extends DQN to two-player zero-sum Markov games. Borrowing the analysis for DQN, we establish the algorithmic and statistical convergence rates of the action-value functions associated with the sequence of policies  returned by the Minimax-DQN algorithm.

\subsection{Related Works}

There is a huge body of literature on deep reinforcement learning, where these  algorithms are based on  Q-learning or policy gradient \citep{sutton2000policy}. We refer the reader to \citet{arulkumaran2017brief} for a survey of the recent developments of DRL.
In addition, the DQN algorithm is first proposed in \citet{mnih2015human}, which applies DQN to Artari 2600 games \citep{bellemare2013arcade}. The extensions of DQN include double DQN \citep{van2016deep}, dueling DQN \citep{wang2016dueling}, deep recurrent Q-network \citep{hausknecht2015deep},  asynchronous  DQN \citep{mnih2016asynchronous}, and variants designed for  distributional reinforcement learning  \citep{bellemare2017distributional, dabney2018distributional, dabney2018implicit}.  All of these algorithms  are  corroborated only by numerical experiments, without theoretical guarantees. Moreover, these algorithms not only  inherit the tricks of experience replay  and the target network proposed in the original DQN, but develop even more tricks to   enhance the performance. Furthermore, recent works such as \citet{schaul2016prioritized, andrychowicz2017hindsight, liu2017effects, zhang2017deeper, novati2019remember} study the effect of experience replay  and propose various modifications.

In addition, our work is  closely related to the literature on batch reinforcement learning \citep{lange2012batch}, where the goal is to  estimate  the   value function   given transition data. These problems are usually formulated into least-squares regression, for which  various algorithms are proposed with  finite-sample analysis.  However, most  existing works focus on the settings where the   value function are approximated by linear functions.  See \citet{bradtke1996linear,boyan2002technical,lagoudakis2003least,lazaric2010analysis,farahmand2010error, lazaric2012finite,tagorti2015rate}  and the references therein for results of the least-squares policy iteration (LSPI) and Bellman residue minimization (BRM) algorithms. Beyond linear function approximation, a recent work \citep{farahmand2016regularized} studies the performance of LSPI and BRM when the value function belongs to a reproducing kernel Hilbert space. However, we study  the fitted Q-iteration algorithm, which is a batch RL counterpart of DQN.
The fitted Q-iteration algorithm  is  proposed in \cite{ernst2005tree}, and \cite{riedmiller2005neural} proposes the neural FQI algorithm. Finite-sample bounds for   FQI   have been established in \cite{murphy2005generalization, munos2008finite} for large classes of regressors. However, their  results are not applicable to  ReLU networks  due to the huge capacity of deep neural networks. Furthermore, various extensions of FQI are
  studied  in  \cite{antos2008fitted, farahmand2009regularized,tosatto2017boosted, geist2019theory} to handle continuous actions space,  ensemble learning, and entropy regularization. The empirical performances of various batch RL methods have been examined in 
   \cite{levine2017shallow, agarwal2019striving, fujimoto2019benchmarking}.

Moreover, Q-learning, and reinforcement  learning methods in general, have been widely applied to  dynamic treatment regimes (DTR)  \citep{chakraborty2013statistical,laber2014dynamic, tsiatis2019dynamic}, where the goal is to find  sequential decision rules for individual patients that   adapt to time-evolving illnesses.  There is a huge body of literature on this line of research. See, e.g., \cite{murphy2003optimal, zhao2009reinforcement, qian2011performance, zhao2011reinforcement, zhang2012robust, zhao2012estimating, goldberg2012q, nahum2012q, goldberg2013adaptive, schulte2014q,  song2015penalized, zhao2015new, linn2017interactive, zhou2017residual, shi2018high, zhu2019proper} and the references therein. Our work provides a theoretical underpinning for the application of DQN to DTR \citep{liu2019learning} and motivates the principled usage of DRL methods in healthcare applications \citep{yu2019reinforcement}.

Furthermore, our work is also related to works that apply reinforcement learning to zero-sum Markov games. The Minimax-Q learning is proposed by \cite{littman1994markov}, which is an online algorithm that is an extension Q-learning.  Subsequently,  for Markov games, various online algorithms are also proposed with theoretical guarantees. These works consider either the tabular case or linear function approximation. See, e.g.,   \cite{bowling2001rational,conitzer2007awesome,  prasad2015two, wei2017online, perolat2018actor,srinivasan2018actor, wei2017online}   and the references therein. In addition, batch reinforcement learning is also applied to zero-sum Markov games by  \cite{lagoudakis2002value,perolat2015approximate, perolat2016softened,perolat2016use,zhang2018finite}, which are closely related to our work. All of these works consider either   linear function approximation  or a general function class with bounded   pseudo-dimension \citep{anthony2009neural}.  However, there results cannot directly imply finite-sample bounds for Minimax-DQN due to the  huge capacity of deep neural networks.

Finally, our work is also related a line of research on the model capacity of ReLU deep neural networks, which leads to understanding the generalization property of deep learning \citep{mohri2012foundations, kawaguchi2017generalization}. Specifically, \cite{bartlett1998sample, neyshabur2015norm, neyshabur2015path,bartlett2017spectrally,golowich2017size, liang2017fisher} propose various norms computed from the networks parameters and establish capacity bounds based upon these norms. In addition,  \cite{maass1994neural, bartlett1999almost, schmidt2017nonparametric, harvey2017nearly, klusowski2016risk, barron2018approximation, suzuki2018adaptivity, bauer2019deep} study the Vapnik-Chervonenkis (VC) dimension of  neural networks and \cite{dziugaite2017computing, neyshabur2017pac} establish the    PAC-Bayes bounds for neural networks.   Among these works, our work is more related to \cite{schmidt2017nonparametric,suzuki2018adaptivity}, which relate the VC dimension of the ReLU  networks to a set of hyperparameters used to define the networks. Based on the VC dimension, they study the statistical error of nonparametric regression using ReLU networks. In sum, theoretical understanding of deep learning is pertinent to the study of DRL algorithms. See \cite{kawaguchi2017generalization, neyshabur2017exploring, fan2019selective} and the references therein for recent developments on theoretical analysis of the generalization property of deep learning.

\subsection{Notation}

 For  a measurable space with domain $\cS$, we denote by $\cB(\cS, V)$ the set of   measurable functions on $\cS$ that are bounded by $V  $ in absolute value.
  Let $\cP(\cS)$ be the set of all probability measures over $\cS$.  For any $\nu \in \cP(\cS)$ and any measurable function $f \colon \cS \rightarrow \RR$, we denote by $\| f\|_{\nu, p}$ the $\ell_p$-norm of $f$ with respect to measure $\nu$ for $ p \geq 1$. In addition, for simplicity, we write $\| f \|_{\nu}$ for $\| f \|_{2, \nu} $. In addition, let $\{ f(n), g(n) \}_{n \geq 1} $ be two positive series. We write $f(n) \lesssim g(n)$ if there exists a constant $C$ such that $f(n) \leq C\cdot  g(n)$ for all $n   $ larger than some $n_0 \in \NN$. In addition, we write $f(n) \asymp g(n)$ if $f(n) \lesssim g(n)$ and $g(n) \lesssim f(n)$.


\section{Background}
In this section, we introduce the background. We first lay out the formulation of the reinforcement learning problem, and then define the family of  ReLU neural networks.

\subsection{Reinforcement Learning} \label{eq:rl_background}
A  discounted Markov decision process is defined by a tuple $(\cS, \cA, P, R, \gamma )$. Here $\cS$ is the set of all states, which can be countable or uncountable,  $\cA$ is the set of all  actions, $P\colon \cS\times \cA \rightarrow \cP(\cS)$ is the Markov transition kernel,  $R \colon \cS \times \cA \rightarrow \cP(\RR)$ is the distribution of the immediate reward, and $\gamma \in (0,1)$ is the discount factor.
In specific, upon taking any action $a\in \cA$ at any state $s\in \cS$, $P(\cdot\given s,a)$ defines the probability distribution of the next state and $R(\cdot \given s, a)$ is the distribution of the immediate reward. Moreover, for regularity, we further assume that $\cS $ is a compact subset of $\RR^r$ which can be infinite,  $\cA = \{ a_1, a_2, \ldots, a_M\}$ has finite cardinality $M$, and the rewards are uniformly bounded by $R_{\max}$, i.e., $R(\cdot~| s, a)$ has a range on $[-R_{\max}, R_{\max}]$ for any $s\in \cS$ and $a \in \cA$.

A policy $\pi \colon \cS \rightarrow \cP(\cA)$ for the MDP   maps any state  $s\in \cS $ to a probability distribution  $\pi(\cdot \given s)$ over $\cA$.
For a given policy $\pi$, starting from the initial state $S_0 = s$, the actions, rewards, and states evolve according to the law as follows:
$$
 \Big \{ (A_t, R_t, S_{t+1}):  A_t \sim \pi(\cdot \given S_t) , R_{t}\sim R(\cdot \given S_t, A_t), S_{t+1}\sim P(\cdot \given S_t, A_t)  \Big \}, t=0, 1, \cdots, 
$$
and the corresponding value function $V^{\pi}\colon \cS \rightarrow \RR$ is defined as the cumulative discounted reward obtained by taking the actions according to $\pi$ when starting from a fixed state, that is,
\# \label{eq:value_fun}
V^\pi(s) = \EE  \bigg [\sum_{t=0}^\infty \gamma^t \cdot R_t\bigggiven S_0 = s \bigg ]. 
\#
The policy $\pi$ can be controlled by decision makers, yet the functions $P$ and $R$ are given by the nature or the system that are unknown to decision makers.

By the law of iterative expectation,  for any policy $\pi$,  
\# \label{fan1}
  V^\pi(s) = \EE \big[ Q^\pi(s, A) \biggiven A \sim \pi(\cdot \given s) \big ], \qquad \forall s\in \cS,
\#
where  $Q^\pi(s, a)$, called an action value function, is given by
\# \label{eq:Q_fun}
   Q^\pi(s, a)   = \EE\bigg[\sum_{t=0}^\infty \gamma^t \cdot  R_t \bigggiven S_0 = s, A_0 = a \bigg] = r(s, a) + \gamma \cdot \EE \bigl [ V^\pi(S')  \biggiven S' \sim P(\cdot \given s,a) \bigr ] ,
\#
with $r(s,a) = \int r R(\ud r \given s,a)$ is the expected reward at state $s$ given action $a$.
Moreover, we define an operator $P^{\pi}$ by
\#\label{eq:operator_p}
(P^\pi Q) (s, a) =  \EE \bigl [Q(S^\prime, A^\prime) \biggiven S^\prime \sim P(\cdot \given s,a), A^\prime \sim \pi(\cdot \given S^\prime) \bigr ],
\#
and define the
 Bellman operator $T^\pi  $ by
$
(T^\pi Q) (s, a) = r(s, a) + \gamma \cdot (P^\pi Q) (s, a).%
$
Then $Q^{\pi}$ in \eqref{eq:Q_fun} is the unique fixed point of $T^\pi$. 

The goal of reinforcement learning is to find the optimal policy, which achieves the largest cumulative reward via  dynamically learning from the acquired data.   To characterize optimality, 
by \eqref{fan1}, we naturally define the optimal action-value function $Q^*$ as
\#\label{eq:optimal_Q}
	Q^*(s,a) = \sup_{\pi } Q^\pi(s,a), \qquad \forall (s,a) \in \cS\times \cA.
\#
where the supremum is taken over all policies.
In addition, for any given action-value function $Q\colon \cS \times \cA \rightarrow \RR$, we define the  greedy policy $\pi_Q$ as any policy that selects the action with the largest $Q$-value, that is, for any $s \in \cS$, $\pi_Q(\cdot ~ | s)$ satisfies
\#\label{eq:greedy_policy}
\pi_{Q}(a \given s ) =0 ~~\text{if }~~Q(s,  a )   \neq  \max_{a'\in \cA} Q(s,a')   .
	 \#
Based on $Q^*$, we  define the optimal policy $\pi^*$ as any policy that is greedy with respect to $Q^*$.  It can be shown that $Q^* = Q^{\pi^*}$.
Finally, we define the Bellman optimality operator $T$  via
\# \label{eq:optimal_bellman}
	(TQ) (s,a) = r(s, a) + \gamma \cdot \EE \Bigl[ \max_{a^\prime \in \cA} Q(S^\prime, a^\prime) \biggiven S^\prime \sim P(\cdot  \given s,a) \Bigr].
\#
Then we have the Bellman optimality equation $T Q^* = Q^*$.

Furthermore, 
 it can be verified that
the Bellman operator $T$ is $\gamma$-contractive with respect to the supremum norm over $\cS\times \cA$. That is, for any two action-value functions $Q, Q' \colon \cS\times \cA\rightarrow \RR$, it holds that $\| TQ - TQ' \|_{\infty} \leq \gamma \cdot \| Q - Q '  \|_{\infty} $.
Such a contraction property yields the celebrated value iteration algorithm \citep{sutton2011reinforcement}, which constructs a sequence of action-value functions $\{ Q_k \} _{k \geq 0}$ by letting $Q_k = T Q_{k-1}$ for all $k \geq 1$, where the initialization function  $Q_0$ is  arbitrary.
Then it holds that $\| Q_k - Q^* \|_{\infty} \leq \gamma^k \cdot \| Q_0 - Q^*\|_{\infty}$, i.e., $\{ Q_k \}_{ k\geq 0}$ converges to the optimal value function at a linear rate.
 This approach forms the basis of the  neural FQI algorithm, where the Bellman operator is empirically learned from a batch of data dynamically and the action-value functions are approximated by  deep neural networks.

%
%

\subsection{Deep Neural Network}

We study the performance of DQN  with rectified linear unit (ReLU) activation function $\sigma(u) = \max(u, 0)$.  For any positive integer $L $ and $\{ d_j \}_{ i=0}^{L+1} \subseteq \NN $, a ReLU network $f \colon  \RR^{d_0} \rightarrow \RR^{d_{L+1}}$ with $L$ hidden layers and width  $\{ d_j \}_{ i=0}^{L+1}$ is of form
\#\label{eq:relu_net}
f(x) = W_{L+1} \sigma( W_{L} \sigma(W_{L-1}  \ldots \sigma( W_2 \sigma(W_1 x + v_1 ) + v_2)\ldots v_{L-1})+v_L) ,
\#
where $W_{\ell} \in \RR^{d_{\ell} \times d_{\ell-1}}$ and $v_\ell \in \RR^{d_\ell}$ are the weight matrix and the shift (bias) vector in the $\ell$-th layer, respectively.
Here we apply $\sigma$ to to each entry of its argument in \eqref{eq:relu_net}.
In deep learning, the network structure is fixed, and the goal is to learn the network parameters (weights) $\{W_\ell, v_\ell\}_{\ell \in [L+1]}$ with the convention that $v_{L+1}  =0$. For deep neural networks, the number of parameters  greatly exceeds the input dimension $d_0$. To restrict the model class, we focus on the class of ReLU  networks where most parameters are zero.
 \begin{definition} [Sparse ReLU Network] \label{def:relu}
For any  $L, s \in \NN$,   $\{ d_j \}_{ i=0}^{L+1} \subseteq \NN $ ,   and $V>0$, the family of sparse ReLU networks bounded by $V$  with $L$ hidden layers, network width $d$, and  weight sparsity   $s$ is defined as
\#\label{eq:relu_func_class}
\cF  ( L, \{ d_j \}_{ i=0}^{L+1} , s, V  ) = \biggl \{ f \colon \max _{\ell \in [L+1]}    \| \tilde W_{\ell}    \|_{\infty}    \leq 1,  \sum _{\ell =1}^{L+1}    \| \tilde W_{\ell}  \|_0    \leq s, \max_{j \in [d_{L+1}]} \| f_j \|_{\infty} \leq V \biggr  \},
\#
where we denote   $(W_{\ell} , v_{\ell} ) $ by $\tilde W_{\ell}$. Moreover, $f $ in \eqref{eq:relu_func_class} is   expressed as in  \eqref{eq:relu_net}, and $f_j$ is the  $j$-th component of~$f$.
 \end{definition}
Here we focus on functions that are uniformly bounded because the value functions in \eqref{eq:value_fun} and \eqref{eq:Q_fun} are always bounded by $V_{\max} = R_{\max} / (1- \gamma)$.
We also assume that the network weights are uniformly bounded and bounded by one without loss of generality. 
In the sequel, we write $\cF(L,\{ d_j \}_{ j=0}^{L+1}, s, V_{\max} ) $ as $\cF(L,  \{ d_j \}_{j=0}^{L+1} , s)$ to simplify the notation.
In addition, we restrict the networks weights to be sparse, i.e., $s$ is much smaller compared with the total number of parameters. Such an assumption implies that the network has  sparse connections, which are useful for applying  deep learning in memory-constrained situations such as mobile devices \citep{han2015deep,liu2015sparse}.
Empirically, sparse  neural networks are realized via various regularization techniques  such as 
Dropout \citep{srivastava2014dropout}, which randomly sets a fixed portion of the network weights to zero. Moreover, sparse network architectures have recently been advocated by the intriguing lottery ticket hypothesis  \citep{frankle2018lottery},  which states that each dense network has a subnetwork with the sparse connections, when trained in isolation, achieves comparable performance as the original network. Thus, focusing on the class of sparse ReLU networks does not sacrifice the statistical accuracy.

Moreover, we introduce the notion of  H\"older smoothness as follows, which is a generalization of Lipschitz continuity, and is widely used to characterize the regularity of functions.

\begin{definition} [H\"older Smooth Function] \label{def:holder_func_class}
Let $\cD$ be a  compact subset of $\RR^r$, where $r\in \NN$. We define the set of H\"older smooth functions   on $\cD$ as
\$
\cC_r(\cD, \beta, H) = \biggl  \{ f\colon \cD \rightarrow \RR \colon \sum_{\balpha \colon | \balpha | < \beta } \| \partial ^{\balpha } f \|_{\infty} +  \sum_{\balpha \colon \| \balpha\|_1 =  \lfloor \beta \rfloor }  \sup_{x, y \in \cD, x\neq y}  \frac{ |  \partial ^{\balpha}  f(x) - \partial ^{\balpha} ( y) | }{ \| x - y \|_{\infty} ^{\beta   - \lfloor \beta \rfloor }}  \leq H \biggr \},
\$
where $\beta> 0$ and $H>0$ are parameters and $\lfloor \beta \rfloor$ is the largest integer no greater than $\beta$. In addition, here  we use the multi-index notation by letting  $\balpha = ( \alpha_1, \ldots, \alpha_r) ^{\top} \in \NN^{r}$,  and $\partial ^{\balpha } = \partial ^{\alpha_1} \ldots \partial ^{\alpha_r} $.
\end{definition}

Finally, we conclude this section by defining functions that can be written as a composition of multiple  H\"older functions, which captures  complex mappings in real-world applications such as multi-level feature extraction.

\begin{definition} [Composition of H\"older Functions]\label{def:comp:holder}
Let  $q \in \NN $ and $\{ p_j \}_{ j \in [q] } \subseteq \NN $ be integers, and let $ \{ a_j ,b_j \}_{j \in [q] }  \subseteq \RR $ such that $a_j < b_j $  $ j  \in [q]$. Moreover,  let   $g_{j} \colon [ a_j, b_j]^{p_j} \rightarrow [ a_{j+1}, b_{j+1} ]^{p_{j+1} } $ be a function,  $\forall j  \in [q]$. Let $( g_{jk} )_{k\in [ p_{j+1}]}$ be the components of $g_j$,  and we assume that each $g_{jk}$ is H\"older smooth, and  depends on at most $t_j$ of its input variables, where $t_j$ could be much smaller than $p_j$, i.e.,    $g_{jk} \in \cC_{t_j} ( [ a_j, b_j]^{t_j}, \beta_j, H_j)$. Finally, we denote by
$\cG( \{ p_j, t_j , \beta_j ,  H_j \}_{j \in[q]} )$ the family of functions that can be written as compositions of  $\{ g_j \}_{j \in [q]}$, with the convention that $p_{q+1} = 1$.  That is,  for any $f \in \cG( \{ p_j, t_j , \beta_j ,   H_j \}_{j \in[q]} )$, we can write
\#\label{eq:composition}
 f = g_{q} \circ g_{q-1} \circ \ldots \circ g_2 \circ g_1,
\#
with $g_{jk } \in \cC_{t_j} ( [ a_j, b_j]^{t_j}, \beta_j, H_j)$ for each $k\in [p_{j+1}]$ and $j \in [q]$.
\end{definition}

Here $f$ in \eqref{eq:composition} is a composition of $q$ vector-valued mappings $\{g_ j \}_{j\in [q]}$ where each $g_j$ has $p_{j+1}$ components and its $k$-th component, $g_{jk}$, $\forall k\in [p_{j+1}]$, is a H\"older smooth function defined on $[a_j, b_j]^{p_j}$.
Moreover, it is well-known the statistical rate for  estimating a H\"older smooth function depends on the input dimension \citep{tsybakov2008introduction}.  
Here we assume 
that $g_{jk}$ only depends on $t_j$ of its inputs, where $t_j \in [p_j]$ can be much smaller than $p_j$,  
which enables us to obtain a more refined analysis that adapts to the effective smoothness of $f$. 
In particular, Definition \ref{def:comp:holder} covers the family of H\"older smooth functions  and  the additive model \citep{friedman1981projection} on  $[0,1]^r$ as two special cases, where the former suffers from the curse of dimensionality whereas the latter does not.


\section{Understanding Deep Q-Network} \label{sec:algo}
In the  DQN algorithm,  a deep neural network $Q_{\theta} \colon \cS \times \cA \rightarrow \RR$ is used to approximate $Q^*$, where $\theta$ is the parameter.
For completeness, we state     DQN as  Algorithm \ref{algo:dqn} in   \S\ref{sec:state_dqn}.  As shown in the experiments in \citep{mnih2015human}, two tricks are pivotal for the empirical success of DQN.

First, DQN use the trick of experience replay \citep{lin1992self}. Specifically, at each time $t$, we store the  transition $(S_t, A_t, R_t , S_{t+1}) $ into the  replay memory $\cM$, and then sample a minibatch of independent samples from $\cM$ to train the neural network via stochastic gradient descent. Since the trajectory of MDP has strong temporal correlation, the goal of experience replay is to
obtain uncorrelated  samples, which yields accurate gradient estimation for the stochastic optimization problem.

Another trick is to use a target network $Q_{\theta^\star}  $ with parameter $\theta^\star$ (current estimate of parameter).  With  independent samples $\{(s_i,a_i,r_i,s'_i)\}_{i \in [n]}$  from the replay memory (we use $s'_i$ instead of $s_{i+1}$ for the next state right after $s_i$ and $a_i$ to avoid notation crash with next independent sample $s_{i+1}$ in the state space), to update the parameter $\theta$ of the Q-network, we  compute the target
$$
 Y_i  = r_i +\gamma \cdot  \max_{a\in \cA} Q_{\theta^\star} (s_i', a )
$$
(compare with Bellman optimality operator \eqref{eq:optimal_bellman}), and update $\theta$ by the gradient of
\#\label{eq:target_loss}
L(\theta) = \frac{1}{n} \sum_{i=1}^n [ Y_i - Q _{\theta} (s_i, a_i ) ] ^2.
\#
 Whereas parameter $\theta^{\star}$ is updated once every $T_{\text{target}}$ steps by letting  $\theta^\star = \theta$. That is, the target network is hold fixed for  $T_{\text{target}}$ steps and then updated it by the current weights of the Q-network. 

To demystify DQN, it is crucial to understand the role played by these two tricks. For experience replay, in practice, the replay memory size is usually very large. For example, the replay memory size   is $10^6$ in   \cite{mnih2015human}. Moreover, DQN use the $\epsilon$-greedy policy, which enables exploration over $\cS \times \cA$. Thus, when  the replay memory is large, experience replay is close to sampling independent transitions from an explorative policy.  This  reduces the variance of the $\nabla L(\theta)$, which is used to update $\theta$.
Thus,  experience replay stabilizes the training of DQN, which benefits the algorithm in terms of computation.

To understand the statistical property of DQN, we   replace the experience replay by sampling independent transitions from   a given distribution $\sigma \in \cP( \cS\times \cA) $.    That is, instead of sampling  from the replay memory, we sample i.i.d. observations $\{ (S_i, A_i ) \}_{i\in [n]}$ from $\sigma $. Moreover, for any $i \in [n]$, let $R_i $ and $S_i'$ be the immediate reward  and the next state when taking action $A_i$ at state $S_i$. Under this setting, we have
$ 
\EE ( Y_i \given S_i, A_i ) = (TQ_{\theta^\star} )( S_i, A_i ) ,
$
where $T$ is the Bellman optimality operator in \eqref{eq:optimal_bellman} and $Q_{\theta^\star}$ is the target network.  

Furthermore, to further understand the necessity of the target network,  let us first neglect the target network and set $\theta^\star = \theta$. Using bias-variance decomposition,  the
 the expected value of $L(\theta) $ in \eqref{eq:target_loss} is
 \#\label{eq:bias_variance}
\EE \bigl[L(\theta)\bigr]  = \| Q_{\theta}  -  T Q _{\theta }    \|_{\sigma}^2 + \EE \Bigl \{  \bigl [ Y_1 - (T Q _{\theta }) (S_1, A_1) \bigr ]^2 \Bigr \}.
\#
Here the first term in \eqref{eq:bias_variance} is known as the mean-squared Bellman error (MSBE), and the second term is the variance of $Y_1$. Whereas $L(\theta)$ can be viewed as the empirical version of the MSBE, which has   bias $\EE   \{  [ Y_1 - (T Q _{\theta }) (S_1, A_1)  ]^2   \}$ that also depends on $\theta$. Thus, without the target network, minimizing $L(\theta)$ can be drastically different from minimizing the MSBE.

To resolve this problem, we use a target network in \eqref{eq:target_loss}, which has expectation
\$
\EE \bigl[L(\theta)\bigr]= \| Q_{\theta}  -  T Q _{\theta ^*}    \|_{\sigma}^2 + \EE \Bigl \{  \bigl [ Y_1 - (T Q _{\theta^* }) (S_1, A_1) \bigr ]^2 \Bigr \},
\$
where the variance of $Y_1$ does not depend on $\theta$. Thus, minimizing $L(\theta)$ is close to solving
\#\label{eq:ideal_case}
\minimize _{ \theta \in \Theta}  \| Q_{\theta}  -  T Q _{\theta^\star}    \|_{\sigma}^2,
\#
where $\Theta$ is the parameter space.
Note that in DQN we hold $ \theta^\star $ still and update $\theta$ for $T_{\text{target}}$ steps. When $T_{\text{target}}$ is sufficiently large and we neglect the fact that the objective in \eqref{eq:ideal_case} is nonconvex,  we would update $\theta$ by the minimizer of \eqref{eq:ideal_case}
for fixed $\theta^\star$.

Therefore, in the ideal case, DQN aims to solve the minimization problem \eqref{eq:ideal_case} with $\theta^\star$ fixed, and then update $\theta^\star$ by the minimizer $\theta$.
Interestingly, this view of DQN  offers a statistical  interpretation of the target network.
In specific,  if  $\{ Q_{\theta} \colon\theta \in  \Theta\} $ is sufficiently large such that it contains $TQ _{\theta^\star}$, then \eqref{eq:ideal_case} has solution $Q_{\theta } = T Q_{\theta^\star}$, which can be viewed as one-step of  value iteration    \citep{sutton2011reinforcement} for neural networks. In addition,   in the sample setting,
 $Q_{\theta^\star}$     is used to construct $\{ Y_i \}_{i \in [n]}$, which serve as the response in the regression problem defined in \eqref{eq:target_loss}, with $(TQ_{\theta^\star}) $ being the regression function.

 Furthermore,  turning the above discussion   into a realizable algorithm, we obtain the  neural fitted Q-iteration (FQI) algorithm, which generates a sequence of value functions.   Specifically, let $\cF$ be a class of function defined on $\cS\times \cA$. In the  $k$-th   iteration of FQI,     let $\tilde Q_k$ be current    estimate of $Q^*$.
Similar to \eqref{eq:target_loss} and \eqref{eq:ideal_case}, we  define $Y_i = R_i + \gamma \cdot \max _{a\in \cA} \tilde Q_k (S_i', a)$, and update $\tilde Q_k$ by
 \#\label{eq:iteration2}
\tilde Q_{k+1} =  \argmin_{f \in \cF} \frac{1}{n} \sum_{i=1}^n \bigl  [ Y_i - f(S_i, A_i) \bigr ]^2.
\#
This gives the fitted-Q iteration algorithm, which is stated in Algorithm \ref{algo:fit_Q}.

The step of minimization problem in  \eqref{eq:iteration2} 
essentially finds $\tilde Q_{k+1}$ in $\cF$ such that 
 $\tilde Q_{k+1}  \approx T \tilde Q_{k}$.  Let us denote $\tilde Q_{k+1}  = \hat{T} _k \tilde Q_{k}$ where  $\hat{T}_k$ is an approximation of the  Bellman optimality operator $T$  learned from the training data  in the $k$-th iteration.
With the above notation, we can now understand our  Algorithm \ref{algo:fit_Q} as follows.  Starting from the initial estimator $\tilde Q_{0}$, collect the data $\{(S_i,A_i,R_i,S'_i)\}_{i \in [n]}$ and learn the map $\hat{T}_1$ via \eqref{eq:iteration2} and get $\tilde Q_{1} = \hat{T}_1 \tilde Q_{0}$.  Then, get a new batch of sample and learn the map $\hat{T}_2$ and get $\tilde Q_{2} = \hat{T}_2 \tilde Q_{1}$, and so on.  Our final estimator of the action value is $\tilde Q_{K} =   \hat{T}_K \cdots \hat{T}_1 \tilde Q_{0}$, which resembles the updates of the value iteration algorithm at the population level.

When $\cF$ is the family of neural networks, Algorithm \ref{algo:fit_Q} is known as the neural FQI algorithm, which is proposed in  \citet{riedmiller2005neural}.  Thus, we can view neural FQI  as a modification  of DQN, where we replace experience replay with sampling from a fixed distribution $\sigma$,  so as to understand the   statistical property.  As a byproduct,  such a modification naturally  justifies the trick of   target network in DQN.
 In addition, note that the optimization problem in \eqref{eq:iteration2} appears in each iteration of FQI, which is nonconvex when neural networks are used. However, since we focus  solely  on  the statistical aspect, we make the assumption that the global optima of \eqref{eq:iteration2} can be reached, which is also contained $\cF$.
 Interestingly, a recent line of research on deep learning  \citep{du2018gradient2, du2018gradient, zou2018stochastic, chizat2018note, allen2018learning, allen2018convergence,  jacot2018neural, cao2019generalization, arora2019fine,  ma2019comparative,mei2019mean, yehudai2019power} has established
 global convergence of gradient-based  algorithms for empirical risk minimization  when the neural networks are overparametrized.
 We provide more discussions on the computation aspect in \S\ref{sec:computation}.
Furthermore, we make the i.i.d. assumption   in Algorithm \ref{algo:fit_Q} to simplify the analysis.
 \cite{antos2008learning} study the performance of fitted value iteration with    fixed data  used in the regression sub-problems repeatedly, where the data is  sampled from a single trajectory based on a fixed   policy such that the induced Markov chain satisfies certain conditions on the mixing time.
 Using similar analysis as  in \citet{antos2008learning},  our algorithm can also be extended to handled fixed data that is collected beforehand.

\begin{algorithm} [h]
\caption{Fitted Q-Iteration Algorithm} 
\label{algo:fit_Q} 
\begin{algorithmic} 
\STATE{{\textbf{Input:}} MDP $(\cS, \cA, P, R, \gamma)$, function class $\cF$, sampling distribution $\sigma$, number of  iterations $K$, number of samples $n$, the initial estimator $\tilde Q_0$.}
\FOR{$k = 0, 1, 2, \ldots, K-1$}
\STATE{Sample i.i.d. observations $\{(S_i, A_i, R_i, S_i')\}_{i\in[n]} $ with $(S_i, A_i) $ drawn from distribution $ \sigma$.}
\STATE{Compute $Y_i = R_i + \gamma \cdot \max _{a\in \cA} \tilde Q_k (S_i', a)$.}
\STATE{Update the action-value function: \$\tilde Q_{k+1} \leftarrow  \argmin_{f \in \cF} \frac{1}{n}\sum_{i=1}^n \bigl [ Y_i - f(S_i, A_i) \bigr ]^2.\$}
\ENDFOR
\STATE{Define policy $\pi_K$ as the greedy policy with respect to $\tilde Q_{K}$.}
\STATE{{\textbf{Output:}} An estimator $\tilde Q_{K }$ of $Q^*$ and policy $\pi_K$.}
\end{algorithmic}
\end{algorithm}

 \section{Theoretical Results} \label{sec:theory}

 We establish statistical guarantees for DQN with     ReLU networks. Specifically, let $Q^{\pi_K}$ be the action-value function corresponding to $\pi_K$, which is returned by Algorithm \ref{algo:fit_Q}. In the following, we obtain an upper bound for  $\| Q^{\pi_K} - Q^* \|_{1, \mu} $, where $\mu \in \cP(\cS\times \cA)$ is allowed to be different from $\nu$.      In addition, we  assume that the state space $\cS$ is a compact subset in $\RR^r$ and the action space $\cA$ is finite. Without loss of generality, we let  $\cS = [0,1]^{r}$ hereafter, where $r$ is a fixed integer. To begin with,  we first specify the function class $\cF$   in Algorithm \ref{algo:fit_Q}.

\begin{definition}[Function Classes] \label{def:func_class}
Following Definition \ref{def:relu}, let $\cF(L, \{d_j \}_{j=0}^{L+1} , s  ) $ be the family of sparse  ReLU networks defined on $\cS$   with $d_0 = r$ and  $d_{L+1} =1$.  Then we define $\cF_0$ by
\#\label{eq:define_cF}
\cF_0 = \bigl \{ f \colon \cS \times \cA \rightarrow \RR   \colon   f(\cdot, a) \in \cF(L, \{ d_j\}_{i=0}^{L+1} , s ) ~\text{for any}~a\in \cA \bigr \}.
\#
 In addition, let $\cG( \{ p_j, t_j , \beta_j ,  H_j \}_{j \in[q]} )  $ be set of   composition of  H\"older smooth functions defined on $\cS\subseteq \RR^r$. Similar to $\cF_0$, we define a function class $\cG_0$ as
\#\label{eq:define_cG}
\cG _0= \bigl \{ f\colon \cS \times \cA \rightarrow \RR  \colon f(\cdot, a) \in \cG( \{ p_j, t_j , \beta_j ,  H_j \}_{j \in[q]} ) ~\text{for any}~a\in \cA \bigr \}.
\#
\end{definition}

By this definition,  for any function $f \in \cF_0$ and any action $a\in \cA$,  $f(\cdot, a)$ is a   ReLU network defined on $\cS$, which is standard for Q-networks. Moreover, $\cG_0$ contains a broad  family of smooth functions on $\cS\times \cA$. In the following, we make a  mild assumption on $\cF_0$ and $\cG_0$.

\begin{assumption}  \label{assume:closedness}
 We assume that for any $f \in \cF_0$, we have $Tf \in \cG_0$, where $T$ is the Bellman optimality operator defined in \eqref{eq:optimal_bellman}. That is, for any $f \in \cF$ and any $a \in \cA$, $(Tf)(s,a) $  can be written as compositions of H\"older smooth functions as a function of $s \in \cS$.
\end{assumption}

This assumption 
specifies that the target function $\cT \tilde Q_k$  in each FQI step stays in function class $\cG_0$. 
When $\cG_0$ can be approximated by functions in $\cF_0$  accurately, this assumption essentially implies that $Q^*$ is close to $\cF_0$ and that $\cF_0$ is  approximately closed  under  Bellman operator $T$. Such an  completeness assumption is commonly made in the  literature on batch reinforcement learning under various forms  and is conjectured to be indispensable in \cite{chen2019information}.

 We remark that this Assumption \eqref{assume:closedness} holds when the MDP satisfies some smoothness conditions.  For any state-action pair $(s,a) \in \cS \times \cA$, let $P(\cdot \given s,a)$ be the density of the next state.  By the definition of the Bellman optimality operator in \eqref{eq:optimal_bellman}, we have
 \#\label{eq:bellman_oper_density}
 (T f) (s,a)  = r(s,a) + \gamma \cdot \int_{\cS} \Bigl [ \max_{a'\in \cA} f(s', a') \Bigr] \cdot P(s'\given s,a) \ud s'.
 \#
 For any  $s' \in \cS$ and  $a \in \cA$,  we define functions $g_1, g_2  $ by letting $g_1 (s) = r(s,a) $ and $g_2(s) =   P(s' \given  s  , a) $. Suppose both $g_1$ and $g_2$ are   H\"older smooth  functions  on $\cS = [0,1]^r$ with parameters $\beta $ and $H $. Since $\| f\|_{\infty} \leq V_{\max}$, by  changing the order of integration and differentiation with respect to $s$ in \eqref{eq:bellman_oper_density}, we obtain that function $s\rightarrow (Tf)(s,a)$ belongs to the H\"older class $\cC_r(\cS, \beta , H')$ with $H'= H ( 1+ V_{\max}   )$.
 Furthermore, in the more general case, suppose for any fixed $a\in \cA$, we can write  $P(s' \given  s  , a) $ as $  h_1[  h_2(s, a), h_3(s') ]  $, where  $h_2 \colon \cS \rightarrow \RR^{r_1}$, and  $h_3 \colon \cS \rightarrow \RR^{r_2}$ can be viewed as feature mappings, and   $h_1 \colon \RR^{r_1+r_2} \rightarrow \RR$ is a bivariate function. We define   function $h_4 \colon \RR^{r_1} \rightarrow \RR$    by
  \$
  h_4 (u ) = \int_{\cS} \Bigl [ \max_{a'\in \cA} f(s', a') \Bigr] h_1 \bigl [ u, h_3(s')  \bigr ]  \ud s'.
  \$ Then by \eqref{eq:bellman_oper_density} we have
  $ (Tf)(s,a) = g_1(s) + h_4\circ h_2(s, a)$.  Then Assumption \ref{assume:closedness} holds  if $h_4$ is H\"older smooth and both $g_1$ and $h_2$ can be represented as compositions of H\"older functions.
  Thus, Assumption \ref{assume:closedness} holds if both the reward function and the transition density of the MDP are sufficiently smooth.

Moreover, even when the transition density   is not smooth, we could also expect Assumption \ref{assume:closedness} to hold. Consider  the extreme case where the MDP has   deterministic transitions, that is, the next state $s' $ is a function of $s$ and $a$, which is denoted by $s' = h(s,a)$. In this case, for any ReLU network $f$, we have
$
(Tf) (s,a) = r(s,a) + \gamma \cdot \max _{a'\in \cA} f[ h(s,a) , a'].
$
Since  \$ \Bigl | \max_{a'\in \cA}  f(s_1, a') - \max_{a'\in \cA}  f(s_2, a') \Bigr |  \leq \max _{a' \in \cA} \bigl | f(s_1, a') - f(s_2, a') \bigr |\$ for any $s_1, s_2 \in \cS$,  and network $f(\cdot, a)$ is Lipschitz continuous for any fixed $a \in \cA$, function $m_1(s) =    \max_{a'} f(s, a')$ is   Lipschitz  on $\cS$. Thus,
for any fixed $a\in \cA$, if both $g_1(s) = r(s,a)$ and $m_2(s) = h(s,a) $ are  compositions    of  H\"older functions, so is    $(Tf)(s,a) = g_1(s) + m_1 \circ m_2(s)$.  Therefore, even if the MDP has deterministic dynamics, when both the reward function $r(s,a)$ and the transition function $h(s,a)$ are sufficiently nice, Assumption \ref{assume:closedness} still holds true.

In the following, we define the  concentration coefficients, which  measures the similarity  between two probability distributions under the   MDP.



 \begin{assumption} [Concentration Coefficients]
 \label{assume:concentrability}
  Let $\nu_1, \nu_2 \in \cP(\cS \times \cA)$ be two probability measures that are  absolutely continuous with respect to the Lebesgue measure on $\cS\times \cA$.  Let   $\{ \pi_t \}_{t\geq 1} $ be a sequence of policies. Suppose the initial state-action pair  $(S_0, A_0)$ of the MDP has distribution $ \nu_1$, and we take action $A_t$ according to policy $\pi_{ t}$.    For any integer $m$, we denote by $ P^{\pi_m} P^{\pi_{m-1} } \cdots P^{\pi_1}\nu_1$ the   distribution of $\{(S_t, A_t)\}_{t=0}^m$.
Then we  define the $m$-th  concentration coefficient as
 \#\label{eq:concentration_coef}
 \kappa(m; \nu_1, \nu_2) = \sup_{\pi_1, \ldots, \pi_m } \biggl [\EE _{\nu_2} \biggl | \frac{    \ud ( P^{\pi_m} P^{\pi_{m-1} } \cdots P^{\pi_1}\nu_1)   } { \ud \nu_2} \biggr | ^2     \biggr ] ^{1/2},
 \#
 where the supremum is taken over all possible policies.

 Furthermore, let $\sigma$ be the sampling distribution in Algorithm \ref{algo:fit_Q} and let $\mu$  be a fixed distribution on $\cS \times \cA$.  We assume that there exists a constant $\phi_{\mu, \sigma}< \infty$ such that
  \#\label{eq:assume:concentrability}
  ( 1- \gamma)^{2}\cdot \sum_{m \geq 1} \gamma^{m-1} \cdot m \cdot \kappa ( m; \mu, \sigma) \leq \phi_{\mu, \sigma},
  \#
where  $(1-\gamma)^{2}$ in \eqref{eq:assume:concentrability}  is a normalization term, since $\sum_{m\geq 1} \gamma^{m-1} \cdot m = ( 1- \gamma)^{-2}$.
 \end{assumption}
By definition, concentration coefficients in \eqref{eq:concentration_coef} quantifies  the similarity between $\nu_2$ and  the distribution of the future states of the MDP when starting from $\nu_1$.  Moreover, \eqref{eq:assume:concentrability} is a standard assumption in the literature. See, e.g.,  \cite{munos2008finite, lazaric2010analysis, scherrer2015approximate, farahmand2010error, farahmand2016regularized}.  This assumption holds for   large class of systems MDPs and specifically for MDPs whose top-Lyapunov exponent   is finite. 
Moreover, this assumption essentially requires that the sampling distribution $\sigma$ has sufficient coverage over $\cS\times \cA$ and is shown to be necessary for the success of batch RL methods \citep{chen2019information}. 
See    \cite{munos2008finite,antos2007value,chen2019information} for more detailed discussions on this assumption.

Now we are ready to present the main theorem.
 %
%
  \begin{theorem}  \label{thm:main}
Under Assumptions  \ref{assume:closedness} and  \ref{assume:concentrability}, let   $\cF_0$  be  defined in \eqref{eq:define_cF} based on the family of sparse ReLU networks $\cF( L^*,\{ d_j^*\}_{j =0}^{L^*+1} , s^*)$ and  let $\cG_0 $ be given in \eqref{eq:define_cG}
with $\{ H_j\}_{j\in[q]}$ being absolute constants. 
 Moreover, for any $j \in [q-1]$, we define $\beta_j^* = \beta_j \cdot \prod_{\ell = j+1} \min ( \beta_{\ell}, 1)$; let $\beta_q^* = 1$.  In addition, let $
\alpha^* =  \max_{ j \in [q] } t_j / (2 \beta_j^* + t_j ) .
$
 For the parameters of $\cG_0$,  we assume that  the sample size $n$ is sufficiently large such that there exists a constant $\xi>0$ satisfying 
\#\label{eq:CG_param}
\max \bigg\{ \sum_{j=1}^q    (t_j + \beta_j + 1) ^{3 + t_j}, ~~\sum_{j \in [q] } \log ( t_j + \beta_j)    , ~~\max_{j \in [q] } p_j \bigg\} \lesssim  (\log n )^{\xi}.
\#
For the hyperparameters $L^*$, $ \{ d_j^*\}_{j =0}^{L^*+1} $, and $s^*$ of the ReLU network, we set    $d_0^* = 0$ and  $d_{L^*+1}^* = 1$.  Moreover, we set
\#\label{eq:dnn_hyperparam}
   L^* \lesssim (\log n)^{\xi^*} , ~~r \leq  \min _{j \in [L^*]} d_j^*  \leq \max_{j \in [L^*]} d_j^* \lesssim  n^{\xi^*} , ~~\text{and} ~~s^* \asymp n^{\alpha^*} \cdot  (\log n)^{\xi^*}
\#
for some constant $\xi^* > 1 + 2\xi$.  For any $K \in \NN$, let   $Q^{\pi_K}$ be the action-value function corresponding to policy $\pi_K$, which is returned by Algorithm \ref{algo:fit_Q} based on function class $\cF_0$. Then there exists a constant      $C  > 0 $  such that
      \#\label{eq:stat_rate_final}
   \|  Q^* - Q^{ \pi_{K}  }\|_{1, \mu}  \leq  C  \cdot \frac{ \phi_{\mu, \sigma} \cdot \gamma} {(1 -\gamma)^2 } \cdot  | \cA| \cdot  ( \log n )^{1 + 2\xi^* } \cdot n^{(\alpha^* - 1)/2 }+ \frac{4   \gamma^{K+1}   }{(1- \gamma) ^2 }  \cdot R_{\max}.
\#
\end{theorem}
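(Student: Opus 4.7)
The plan is to split the analysis into (i) an outer error-propagation step that aggregates the per-iteration regression errors into a bound on $\|Q^* - Q^{\pi_K}\|_{1,\mu}$, and (ii) an inner statistical step that controls the single-iteration error $\|\tilde Q_{k+1} - T\tilde Q_k\|_\sigma$ using nonparametric regression theory for sparse ReLU networks. Step (i) is an adaptation of the classical approximate value iteration analysis of Munos and Szepesv\'ari, and it is where the discount factor, the concentration coefficient $\phi_{\mu,\sigma}$, and the $\gamma^{K+1}$ algorithmic decay arise. Step (ii) is where Assumption~\ref{assume:closedness} and the ReLU approximation theory combine to produce the $n^{(\alpha^*-1)/2}$ statistical rate.

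\textbf{Step 1: Error propagation.} Define the one-step Bellman residual $\varepsilon_k = \tilde Q_{k+1} - T\tilde Q_k$. Using the $\gamma$-contractivity of $T$ with respect to $\|\cdot\|_\infty$ and the elementary identity
\[
Q^* - \tilde Q_{k+1} = T Q^* - T\tilde Q_k - \varepsilon_k,
\]
I would iterate to write $Q^* - \tilde Q_K$ as a weighted telescoping sum of $\{\varepsilon_k\}_{k=0}^{K-1}$ plus the initialization term $\gamma^K(Q^* - \tilde Q_0)$. The greedy policy $\pi_K$ induced by $\tilde Q_K$ satisfies the well-known pointwise bound
\[
0 \le Q^* - Q^{\pi_K} \le (I - \gamma P^{\pi^*})^{-1}(Q^* - T^{\pi_K}Q^*) + (I - \gamma P^{\pi_K})^{-1}\gamma P^{\pi_K}(Q^* - \tilde Q_K).
\]
Taking $L^1(\mu)$ norms, applying a change of measure through the operators $P^{\pi_1}\cdots P^{\pi_m}$, and invoking Assumption~\ref{assume:concentrability} turns these occupation measures into the concentration coefficient $\kappa(m;\mu,\sigma)$. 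The geometric series in $\gamma$ combined with \eqref{eq:assume:concentrability} yields a bound of the form
\[
\|Q^* - Q^{\pi_K}\|_{1,\mu} \lesssim \frac{\phi_{\mu,\sigma}\,\gamma}{(1-\gamma)^2}\cdot \max_{0\le k<K} \|\varepsilon_k\|_\sigma + \frac{4\gamma^{K+1}}{(1-\gamma)^2}R_{\max},
\]
where the second term comes from bounding $\|Q^* - \tilde Q_0\|_\infty \le 2V_{\max} = 2R_{\max}/(1-\gamma)$.

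\textbf{Step 2: One-step regression error.} For fixed $\tilde Q_k$, the FQI update \eqref{eq:iteration2} is a standard least squares problem with response $Y_i = R_i + \gamma \max_a \tilde Q_k(S_i',a)$ and regression function $(T\tilde Q_k)(s,a)$. Condition on $\tilde Q_k$ (so the response is bounded by $V_{\max}$ with bounded noise). By Assumption~\ref{assume:closedness}, $(T\tilde Q_k)(\cdot,a)\in \cG(\{p_j,t_j,\beta_j,H_j\}_{j\in[q]})$ for every $a\in\cA$. A bias-variance decomposition gives
\[
\EE\bigl[\|\varepsilon_k\|_\sigma^2\bigr] \lesssim \underbrace{\inf_{f\in\cF_0}\|f - T\tilde Q_k\|_\infty^2}_{\text{approximation error}} + \underbrace{\frac{V_{\max}^2\,\mathrm{Pdim}(\cF_0)\,\log n}{n}}_{\text{estimation error}},
\]
where the estimation term is obtained via a standard uniform-deviation / symmetrization argument using the covering number of the sparse ReLU class. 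The approximation term is controlled by the Schmidt-Hieber-Suzuki bound for approximating compositions of H\"older smooth functions by sparse ReLU networks; the hyperparameter choice \eqref{eq:dnn_hyperparam} is calibrated so that both terms balance at the order $n^{\alpha^*-1}$ up to polylogarithmic factors, producing $\|\varepsilon_k\|_\sigma \lesssim (\log n)^{1/2 + \xi^*}\cdot n^{(\alpha^*-1)/2}$. The factor $|\cA|$ enters through a union bound over actions, since the regression effectively proceeds action-wise by the product structure of $\cF_0$ in \eqref{eq:define_cF}.

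\textbf{Combining and main obstacle.} Substituting the one-step bound into the propagation inequality yields \eqref{eq:stat_rate_final}. The most delicate step is the approximation-theoretic part of Step 2: one must show that a sparse ReLU network with depth, width and sparsity scaling as in \eqref{eq:dnn_hyperparam} can approximate an arbitrary composition of H\"older functions from $\cG_0$ up to error $n^{-\alpha^*/2}$ in the sup norm. This requires carefully translating the results of Schmidt-Hieber (originally stated for nonparametric regression with a fixed target) to our setting where the target $T\tilde Q_k$ itself varies with $k$, using only that it lies uniformly in $\cG_0$ by Assumption~\ref{assume:closedness}. The second subtle point is ensuring that the empirical process deviation bound holds uniformly across the $K$ iterations despite the reuse of the function class; since we draw fresh i.i.d.\ data at every iteration this is handled by a union bound over $k$, which only affects the constants.
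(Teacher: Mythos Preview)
Your proposal is correct and follows essentially the same two-stage architecture as the paper: an error-propagation bound (the paper's Theorem~\ref{thm:err_prop}) reducing everything to $\max_k\|\tilde Q_{k+1}-T\tilde Q_k\|_\sigma$, followed by a one-step bias--variance analysis (the paper's Theorem~\ref{thm:each_term_error}) that combines Schmidt-Hieber's ReLU approximation of composite H\"older functions with a complexity bound on $\cF_0$. The only cosmetic difference is that the paper controls the estimation term via the $\ell_\infty$-covering number $\log N_\delta$ rather than the pseudo-dimension you invoke; for sparse ReLU networks both scale as $s^* L^* \log(\max_j d_j^*)$, and the $|\cA|$ factor enters in the paper exactly as you describe, through $|\cN(\delta,\cF_0,\|\cdot\|_\infty)|\le |\cN_\delta|^{|\cA|}$.
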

This theorem implies that the statistical rate of convergence is the sum of a statistical error and an  algorithmic error. The algorithmic error  converges to zero in linear rate   as the algorithm proceeds, whereas the statistical error reflects the fundamental difficulty of the problem.
Thus, when the number of iterations satisfy
\$K \geq C'   \cdot   \bigl [ \log |A| + ( 1 -\alpha^*) \cdot \log n \bigr ] / \log (1/ \gamma) \$
iterations, where $C'$ is a sufficiently large constant, the algorithmic error is dominated by the statistical error. In this case, if we view both $\gamma$ and $\phi_{\mu, \sigma}$ as constants and ignore the polylogarithmic  term, Algorithm \ref{algo:fit_Q}  achieves error rate
\#\label{eq:rate}
   |\cA| \cdot    n^{ ( \alpha^* -1)/2  } = | \cA | \cdot \max_{j \in [q]} n^{-  \beta_j^* / ( 2\beta_j^* + t_j)} ,
  \#  which scales linearly with the capacity of the action space, and decays to zero when the  $n$ goes to infinity.  Furthermore, the rates $\{ n^{-  \beta_j^* / ( 2\beta_j^* + t_j)}\}_{j \in [q]} $ in \eqref{eq:rate} recovers the  statistical rate of nonparametric regression in $\ell_2$-norm,   whereas  our statistical rate   $n^{(\alpha^* -1)/2}$ in  \eqref{eq:rate} is the   fastest among these nonparametric rates, which illustrates the benefit of compositional structure of $\cG_0$.

Furthermore, as a concrete example, we assume that both the reward function and the Markov transition kernel are  H\"older smooth with smoothness parameter $\beta$. As stated below Assumption \ref{assume:closedness}, for any $f \in \cF_0$, we have $(Tf)(\cdot, a) \in \cC_r( \cS, \beta, H')$. Then Theorem \ref{thm:main} implies that  Algorithm~\ref{algo:fit_Q} achieves error rate $| \cA| \cdot n^{-   \beta / ( 2\beta +r )}$ when $K$ is sufficiently large. Since $|\cA|$ is finite, this rate achieves the minimax-optimal statistical rate of convergence within the class of H\"older smooth functions defined on $[0,1]^d$ \citep{stone1982optimal} and thus cannot be further improved.
As another example, when $(Tf)(\cdot, a) \in \cC_r( \cS, \beta, H')$ can be represented as an additive model over $[0,1]^r$ where each component has smoothness parameter $\beta$,  \eqref{eq:rate} reduces to    $| \cA| \cdot n^{-   \beta / ( 2\beta +1 )}$, which does not depends on the input dimension $r$ explicitly. Thus, by having a composite structure in $\cG_0$, Theorem \ref{thm:main} yields more refined statistical rates that adapt to the intrinsic difficulty of solving each iteration of Algorithm~\ref{algo:fit_Q}.

 In the sequel, we conclude this section by sketching   the proof of Theorem \ref{thm:main}; the detailed proof is deferred to \S\ref{proof:thm:main}.

\begin{proof}[Proof Sketch of Theorem  \ref{thm:main}]

Recall that $\pi_k$ is the greedy policy with respect to $\tilde Q_k$ and $Q^{\pi_K}$ is the
action-value function associated with $\pi_K$, whose definition is given in \eqref{eq:Q_fun}.
Since $\{ \tilde Q_k \}_{k \in [K]}$ is constructed by a iterative algorithm, it is crucial to relate $\|   Q^* - Q^{\pi_K}  \|_{1, \mu}$,   the quantity of interest, to the errors incurred in the previous steps, namely $\{ \tilde Q_k - T \tilde Q_{k-1}  \}_{k \in [K]}$. Thus, in the first step of the  proof, we establish  Theorem \ref{thm:err_prop}, also
known as the error propagation \citep{munos2008finite,lazaric2010analysis, scherrer2015approximate, farahmand2010error, farahmand2016regularized} in the   batch reinforcement learning literature, which provides an upper bound on
$\|   Q^* - Q^{\pi_K}  \|_{1, \mu}$ using $\{\|\tilde Q_k - T \tilde Q_{k-1}  \|_{\sigma} \}_{k \in [K]}$. In particular, Theorem \ref{thm:err_prop} asserts that
 \#\label{eq:first_step_proof001}
\|  Q^* - Q^{\pi_K }\|_{1, \mu}  \leq  &\frac{2\phi_{\mu, \sigma}\cdot \gamma} {(1 -\gamma)^2 } \cdot \max_{k \in [K]} \|\tilde Q_k - T \tilde Q_{k-1}  \|_{\sigma}
  + \frac{4   \gamma^{K+1}   }{(1- \gamma) ^2 }  \cdot R_{\max},
\#
where $\phi_{\mu, \sigma}$, given in \eqref{eq:assume:concentrability},  is a constant that only depends on distributions $\mu$ and $\sigma$.

%
The upper bound in \eqref{eq:first_step_proof001} shows that
the total error of Algorithm \ref{algo:fit_Q} can be viewed as a sum of a  statistical error and an algorithmic error, where
$ \max_{k \in [K]} \|\tilde Q_k - T \tilde Q_{k-1}  \|_{\sigma} $ is essentially the statistical error
and the second term on the right-hand side of \eqref{eq:first_step_proof001} corresponds to the algorithmic error. Here,  the statistical error diminishes as the sample size $n$ in each iteration grows, whereas the algorithmic error decays to zero geometrically as  the number of iterations $K$ increases.
 This implies that  the fundamental difficulty  of DQN is captured by the error incurred in a single  step.   Moreover, the  proof of this theorem depends on  bounding $\tilde Q_\ell - T \tilde Q_{\ell-1}$  using  $\tilde Q_{k}  - T \tilde Q_{k-1}$  for any $ k < \ell$, which characterizes how the one-step error  $\tilde Q_{k}  - T \tilde Q_{k-1}$ propagates as the algorithm proceeds.  See \S \ref{proof:thm:err_prop} for a detailed proof.

It remains to bound $ \|\tilde Q_k - T \tilde Q_{k-1}  \|_{\sigma} $
  for any $k \in [K]$. We achieve such a goal using
     tools from nonparametric regression. Specifically, as we will show in Theorem \ref{thm:each_term_error},
     under Assumption
   \ref{assume:closedness},
for any $k \in [K]$  we have   
  \#\label{eq:one_iter_bound00}
& \| \tilde Q_{k+1} -T\tilde  Q_k \|_{\sigma}^2  \leq   4 \cdot [  \text{dist}_{\infty}  (\cF_0, \cG_0) ]^2      + C \cdot  V_{\max }^2  / n \cdot \log    N_{\delta}
 + C  \cdot V_{\max} \cdot \delta
 \#
 for any $\delta > 0$,
where $C>0$ is an absolute constant, 
\#\label{eq:space_bias}
\text{dist} _{\infty} (\cF_0, \cG_0) =    \sup_{f' \in \cG_0   }   \inf_{f \in \cF_0 }    \| f -   f'   \|_{\infty}
\# is the $\ell_{\infty} $-error of approximating functions in $\cG_0$ using functions in $\cF_0$, and  $N_\delta$ is the minimum number of cardinality of the balls required to cover $\cF_0$ with respect to $\ell_{\infty}$-norm.

Furthermore,  \eqref{eq:one_iter_bound00}   characterizes  the bias and variance that arise in estimating the action-value functions using deep ReLU networks.
Specifically,
$
  [  \text{dist}_{\infty}  (\cF_0, \cG_0) ]^2
$
corresponds to the bias incurred by approximating functions in $\cG_0$ using ReLU networks. We note that such a bias is measured in the $\ell_{\infty}$-norm. In addition,
$
 V_{\max }^2  / n \cdot \log    N_{\delta}  + V_{\max} \cdot \delta
$
 controls the variance of the estimator.

In the sequel, we fix $\delta = 1/n$ in \eqref{eq:one_iter_bound00}, which implies that
\#\label{eq:second_step_proof00}
& \| \tilde Q_{k+1} -T\tilde  Q_k \|_{\sigma}^2   \leq    4\cdot  \text{dist}_{\infty}^2 (\cF_0, \cG_0) + C' \cdot V_{\max}^2  / n \cdot \log  N_{\delta},
\#
where $C'>0$ is an absolute constant.

In the subsequent proof, we establish upper bounds for $\text{dist}(\cF_0, \cG_0)$ defined in \eqref{eq:space_bias} and $\log N_{\delta}$, respectively. Recall that the family of composite H\"older smooth functions $\cG_0$ is defined in \eqref{eq:define_cG}.

%
%
 By the definition of $\cG_0$ in \eqref{eq:define_cG}, for any $f \in \cG_0$ and any $a \in \cA$, $f(\cdot, a) \in \cG(\{(p_j, t_j, \beta_j, H_j)\}_{j \in [q]})$ is a composition of H\"older smooth functions,
that is,
$
f(\cdot ,a) = g_q \circ \cdots \circ g_1.
$  Recall that, as defined in Definition \ref{def:comp:holder}, $g_{jk}$ is the $k$-th entry of the vector-valued function $g_j$. Here $g_{jk} \in \cC_{t_j} ( [ a_j, b_j]^{t_j} , \beta_j, H_j)$ for each $k\in [p_{j+1}]$ and $j \in [q]$.
To  construct a ReLU network that  is  $f(\cdot, a)$,   we
  first show that $f(\cdot, a)$ can be reformulated as a composition of  H\"older functions defined on a hypercube.
Specifically, let
$h_1 = g_1 / (2 H_1) + 1/2$,  $h_q (u) = g_q( 2 H_{q-1} u - H_{q-1} ) $, and
\$
h_j(u) = g_j ( 2H_{j-1} u - H_{j-1} ) / ( 2H_j) + 1/2 \$
for all  $ j \in \{ 2, \ldots, q-1\} $.  Then we immediately have
\#\label{eq:composition_form00}
f(\cdot, a)  = g_q \circ \cdots \circ g_1 = h_q \circ \cdots \circ h_1.
\#
Furthermore,  by the definition of H\"older smooth functions in Definition \ref{def:holder_func_class}, for any $ j\in [ q ] $ and  $k \in [p_{j+1}]$, it is not hard to verify  that     $
 h_{jk} \in \cC _{t_j} \bigl (  [0, 1]^{t_j}, W \bigr) ,
  $
  where  we define $W> 0$ by
  \#\label{eq:define_const_Q00}
W=\max  \Bigl \{ \max_{ 1 \leq j \leq q-1 } (2 H_{j-1} )^{\beta_j} , H_q ( 2 H_{q-1} )^{\beta_q} \Bigr \},
\#

   Now we employ Lemma \ref{eq:rate_approx_single_layer}, obtained    from  \cite{schmidt2017nonparametric},  to construct a ReLU network that approximates each $h_{jk}$, which,  combined with \eqref{eq:composition_form00}, yields a ReLU network that is close to  $f(\cdot ,a)$ in the $\ell_{\infty}$-norm.

To   apply Lemma \ref{eq:rate_approx_single_layer} we set $m = \eta \cdot \lceil \log_2 n \rceil$ for a sufficiently large constant $\eta > 1$, and set $N$ to be a sufficiently large integer that depends on $n$, which will be specified later. In addition, we set
$
L_j = 8  + (m+5) \cdot (1 + \lceil \log _2 (t_j + \beta_j )  \rceil ).
$
 Then, by Lemma \ref{eq:rate_approx_single_layer},  there exists  a ReLU network $\tilde  h_{jk}$ such that
$
\| \tilde h_{jk} - h_{jk}  \|_{\infty} \lesssim   
 N^{- \beta_j / t_j}.
$ 
 Furthermore, we have $\tilde h_{jk} \in \cF(L_j, \{t_j, \tilde d_j  , \ldots, \tilde d_j  , 1\} , \tilde s_j) , $ with 
 \#\label{eq:sj_bound}
\tilde d_j = 6 (t_j + \lceil \beta_j \rceil)   N, \qquad  \tilde  s_j \leq  141    \cdot (t_j + \beta_j + 1) ^{3 + t_j} \cdot N\cdot  ( m+6)  . 
\#
Now we define $  \tilde f \colon \cS \rightarrow \RR$ by $\tilde f = \tilde h_{q} \circ \cdots \circ \tilde h_1$ and set
 \#\label{eq:choice_N00}
  N = \bigl \lceil \max _{1\leq j \leq q} C \cdot n^{t_j / ( 2 \beta_j^* + t_j )}  \bigr\rceil .
  \#
  For this choice of $N$, we show that $\tilde f$ belongs to function class
  $ \cF(L^*, \{d_j^*\}_{j=1}^{L^*+1}, s^*  )$.
  Moreover, we define
   $\lambda_j = \prod_{\ell = j+1}^q  ( \beta_{\ell } \wedge 1) $  for any $j \in [q-1]$, and set $\lambda_q = 1$.
   Then we have $\beta_j \cdot \lambda_j = \beta_j^* $ for all $j \in [q]$.
   Furthermore, we show that $\tilde f $ is a good approximation of $f(\cdot, a)$. Specifically, we have
 \$
 \|   f (\cdot , a) - \tilde f \|_{\infty}  \lesssim  \sum_{j=1}^q  \| \tilde h_j - h_j \|_{\infty}^{\lambda_j}  .
 \$ Combining this with    \eqref{eq:choice_N00} and the fact that $\| \tilde h_{jk} - h_{jk}  \|_{\infty} \lesssim   
 N^{- \beta_j / t_j}$, we obtain that
 \#\label{eq:bias_trash}
 \bigl[\text{dist}(\cF_0, \cG_0)\bigr]^2 \lesssim n^{\alpha^* - 1}.
 \#

 Moreover, using classical results on the covering number of neural networks \citep{anthony2009neural}, we further show that
 \#\label{eq:vc_trash}
  \log  N_{\delta}  & \lesssim | \cA |  \cdot s^* \cdot L^* \max_{j \in [L^*]} \log (d_j^*) \lesssim  | \cA |  \cdot  n^{\alpha^*} \cdot (\log n)^ {1 + 2\xi^* }, 
  \#
 where $\delta  = 1/n$.
  Therefore,  combining \eqref{eq:one_iter_bound00},  \eqref{eq:second_step_proof00}, \eqref{eq:bias_trash}, and \eqref{eq:vc_trash},  we conclude the proof.
 \end{proof}


\section{Extension to Two-Player Zero-Sum Markov Games} \label{sec:zerosum}

In this section, we propose the Minimax-DQN algorithm, which combines  DQN and the Minimax-Q learning for two-player zero-sum Markov games.  We first present the background of zero-sum Markov games and introduce the the algorithm in \S\ref{sec:minimax_dqn}. Borrowing the analysis for DQN in the previous section, we provide theoretical guarantees for the proposed algorithm in \S\ref{sec:minimax_theory}.

\subsection{Minimax-DQN Algorithm}\label{sec:minimax_dqn}

As one of the simplistic  extension  of  MDP to the  multi-agent setting, two-player zero-sum Markov game  is denoted  by $(\cS, \cA, \cB,  P, R,  \gamma )$, where
  $\cS$ is  state space,  $\cA$ and $\cB$ are the action spaces of the first and second player, respectively.  In addition,  $P\colon \cS\times \cA \times \cB \rightarrow \cP(\cS)$ is the Markov transition kernel,  and $R   \colon \cS \times \cA \times \cB \rightarrow  \cP(\RR)$ is the distribution of    immediate reward  received by the first  player. At any time $t$,  the two players simultaneously take actions $A_t\in \cA$ and $B_t\in \cB$ at state $S_t \in \cS$, then the first player receives reward $R_t \sim R(S_t,A_t,B_t)$ and the second player obtains $ - R_t$.  The goal of each agent is to maximize its own cumulative discounted return.

  Furthermore, let  $\pi \colon  \cS\to\cP(\cA)$ and $\nu \colon  \cS\to\cP(\cB)$ be  policies  of the first and second players, respectively. Then,  we  similarly define the
  action-value function $Q^{\pi,\nu}:\cS\times\cA\times\cB\to\RR$ as
  \#\label{eq:new_Q_fun}
  Q^ {\pi,\nu }(s,a,b) =  \EE\bigg[\sum_{t=0}^\infty \gamma^t \cdot  R_t \bigggiven (S_0, A_0 , B_0 ) = (s, a, b) , A_t \sim \pi(\cdot~|S_t) ,  B_t \sim \nu(\cdot ~| S_t) \bigg],
  \#
  and define the state-value function $  V^{\pi, \nu} :\cS\rightarrow \RR$ as
  \#\label{eq:new_V_fun}
  V^ {\pi,\nu }(s) = \EE \bigl  [ Q ^{\pi, \nu} (s,A,B) \biggiven  A \sim \pi(\cdot \given s),  B \sim \nu(\cdot \given s) \bigr ].
  \#
  Note that these two value functions are defined by the rewards of the first player. Thus,  at any state-action tuple  $(s,a,b)$,   the  two players aim  to solve
  $ \max _{\pi } \min _{\nu  } Q^{\pi,\nu }(s, a,b) $   and $  \min _{\nu }\max _{\pi}Q ^{\pi,\nu }(s,a,b) ,
  $
  respectively. By the von Neumann's minimax theorem \citep{von1947theory,patek1997stochastic}, there exists a minimax function of the game, $Q^*\colon  \cS \times \cA \times \cB \rightarrow \RR$, such that
  \#\label{eq:new_Qs}
  Q ^*(s,a,b) = \max _{\pi } \min _{\nu } Q^{\pi,\nu }(s,a,b)   =    \min _{\nu }\max _{\pi }Q ^{\pi,\nu }(s, a,b).
  \#

 Moreover, for joint  policy $(\pi, \nu)$ of two players, we  define the Bellman  operators
  ${T}^{\pi,\nu}$ and $T$    by
  \#
  ({T}^{\pi,\nu} Q) (s, a,b) & = r(s, a,b) + \gamma \cdot (P^{\pi,\nu} Q) (s, a,b),\label{eq:bellman_oper21}\\
  ({T}Q) (s, a,b) & =     r (s, a,b) + \gamma \cdot(P^*Q) (s, a,b), \label{eq:bellman_oper22}
\#
where $r(s,a,b) = \int r   R(\ud r\given s,a,b)$, and we define operators
  $P^{\pi,\nu}$ and $ P^* $ by
 \$
 (P^{\pi,\nu} Q) (s, a,b) =&~  \EE_{s' \sim P(\cdot \given s,a,b), a' \sim \pi(\cdot \given s'),b' \sim \nu(\cdot \given s') } \bigl [Q(s', a',b') \bigr ],\notag\\
 (P^*Q) (s, a,b) =&~  \EE_{s' \sim P(\cdot \given s,a,b)} \Bigl\{\max_{\pi'\in\cP(\cA)}\min_{\nu'\in\cP(\cB)}\EE_{a' \sim \pi',b' \sim \nu' }\bigl [Q(s', a',b') \bigr ]\Bigl\}.
 \$
 Note that  $P^*$ is defined by  solving a zero-sum matrix game based on $Q(s', \cdot, \cdot) \in \RR^{|\cA | \times | \cB| }$, which could be achieved via linear programming. It can be shown that both $T^{\pi, \nu}$ and $T$ are $\gamma$-contractive, with $Q^{\pi, \nu}$  defined  in  \eqref{eq:new_Q_fun} and $Q^*$ defined in  \eqref{eq:new_Qs} being the unique fixed points,  respectively. Furthermore, similar to   \eqref{eq:greedy_policy}, in zero-sum Markov games,  for  any action-value function $Q$, the   equilibrium joint policy   with respect to $Q$ is defined as
 \#
 \bigl [ \pi_{Q}(\cdot\given s), \nu_Q (\cdot \given s) \bigr ] =\argmax_{\pi'\in\cP(\cA)}\argmin_{\nu'\in\cP(\cB)} \EE_{a\sim \pi', b\sim \nu'} \big[ Q (s, a,b)\big], \qquad \forall s\in \cS. \label{eq:equi_policy}
 \#
 That is, $ \pi_{Q}(\cdot\given s) $ and $\nu_Q (\cdot \given s) $ solves the zero-sum matrix game based on $Q(s, \cdot ,\cdot )$ for all $s\in \cS$.
By this definition, we obtain that the equilibrium joint policy with respect to
 the minimax function $Q^*$ defined in \eqref{eq:new_Qs}  achieves the Nash equilibrium of the Markov game.

  Therefore, to learn the Nash equilibrium, it suffices to estimate $Q^*$, which is the unique  fixed point of the Bellman operator $T$. Similar to the standard Q-learning for MDP,  \cite{littman1994markov} proposes the Minimax-Q learning algorithm, which constructs a sequence of action-value functions that converges to $Q^*$. Specifically, in each iteration, based on  a transition $(s,a, b,s')$,   Minimax-Q learning updates the current estimator of $Q^*$, denoted by   $Q $, via
  \$
  Q(s, a, b) \leftarrow (1-\alpha ) \cdot  Q(s,  a, b)  + \alpha \cdot \Big \{   r(s,a,b) + \gamma \cdot \max_{\pi'\in\cP(\cA)}\min_{\nu'\in\cP(\cB)}\EE_{a' \sim \pi',b' \sim \nu' }\bigl [Q(s', a',b') \bigr ] \Bigr \},
  \$
   where $\alpha \in (0,1)$ is the stepsize.

 Motivated by this algorithm, we propose the  Minimax-DQN algorithm which  extend DQN to two-player zero-sum Markov games. Specifically, we parametrize the action-value function using a deep neural network $Q_{\theta} \colon \cS \times \cA \times \cB \rightarrow \RR$ and store the transition $(S_t, A_t, B_t, R_t, S_{t+1})$ into the replay memory $\cM$ at each time-step.
Parameter $\theta$ of the Q-network is updated as follows.
 Let $Q_{\theta^*}$ be the target network.
 With $n$ independent samples $\{(s_i,a_i,b_i, r_i,s'_i)\}_{i \in [n]  } $ from $\cM$,  for all $i \in [n]$, we compute the target
 \#\label{eq:minmax_tgt}
Y_i  = r_i + \gamma \cdot \max_{\pi'\in\cP(\cA)}   \min_{\nu'\in\cP(\cB)}\EE_{a  \sim \pi',b  \sim \nu' }   \bigl [Q_{\theta^*}(s_{i}' ,  a ,b ) \bigr ],
\#
which can be attained via linear programming.
 Then we update $\theta$ in the direction of $\nabla_{\theta} L(\theta)$, where  $L(\theta) = n^{-1} \sum_{i\in [n]} [ Y_i - Q_{\theta} (s_i, a_i, b_i) ]^2$. Finally, the target network $Q_{\theta^*}$ is updated every $T_{\textrm{target}}$ steps by letting $\theta^* = \theta$. For brevity,  we defer the details of Minimax-DQN to Algorithm \ref{algo:m_dqn} in  \S \ref{sec:state_dqn}.

 To understand the theoretical aspects of this algorithm, we similarly utilize the framework  of batch reinforcement learning for statistical analysis. With the insights gained in \S \ref{sec:algo}, we consider a  modification of Minimax-DQN  based on neural fitted Q-iteration, whose details are stated in Algorithm \ref{algo:fit_Q2}. As in the MDP setting,  we replace sampling from the replay memory by sampling i.i.d. state-action tuples from a fixed distribution $\sigma \in \cP(\cS\times \cA \times \cB)$, and
estimate $Q^*$ in \eqref{eq:new_Qs}  by solving a sequence of  least-squares regression problems specified by \eqref{eq:least_squares_game}. Intuitively, this algorithm approximates the value iteration algorithm for zero-sum Markov games \citep{littman1994markov} by constructing a sequence of value functions $\{ \tilde Q_{k}  \}_{k\geq 0}$ such that $\tilde Q_{k+1} \approx T \tilde Q_k$ for all $k$, where $T$ defined in \eqref{eq:bellman_oper22} is the Bellman operator. 

 \begin{algorithm} 
 	\caption{Fitted Q-Iteration Algorithm for Zero-Sum Markov Games (Minimax-FQI)} 
 	\label{algo:fit_Q2} 
 	\begin{algorithmic} 
 		\STATE{{\textbf{Input:}} Two-player zero-sum Markov game $(\cS, \cA, \cB, P, R, \gamma)$, function class $\cF$,  distribution $\sigma\in \cP(\cS\times \cA \times \cB)$, number of  iterations $K$, number of samples $n$, the initial estimator $\tilde Q_0\in \cF$.}
 		\FOR{$k = 0, 1, 2, \ldots, K-1$}
 		\STATE{Sample $n$  i.i.d. observations $\{(S_i, A_i, B_i )\}_{i \in [n] } $ from $\sigma$,  obtain $R_i \sim R(\cdot ~| S_i, A_i, B_i)$ and $S_i' \sim P(\cdot ~| S_i, A_i, B_i)$.}
 		\STATE{Compute $Y_i = R_i +   \gamma \cdot \max_{\pi'\in\cP(\cA)}   \min_{\nu'\in\cP(\cB)}\EE_{a  \sim \pi',b  \sim \nu' }   \bigl [\tilde Q_k (s_{i}' ,  a ,b ) \bigr ].   $}
 		\STATE{Update the action-value function: \# \label{eq:least_squares_game} 
		\tilde Q_{k+1} \leftarrow  \argmin_{f \in \cF} \frac{1}{n}\sum_{i=1}^n \bigl [ Y_i - f(S_i, A_i, B_i) \bigr ]^2.\#}
 		\ENDFOR
 		\STATE{Let $(\pi_K, \nu_K)$ be  the equilibrium joint policy with respect to $\tilde Q_{K}$, which is  defined in \eqref{eq:equi_policy}.}
 		\STATE{{\textbf{Output:}} An estimator $\tilde Q_{K }$ of $Q^*$ and joint policy $(\pi_K, \nu_K)$.}
 	\end{algorithmic}
 \end{algorithm}

 \subsection{Theoretical Results for Minimax-FQI} \label{sec:minimax_theory}

Following the theoretical results established in \S\ref{sec:theory},  in this subsection, we provide statistical guarantees for the Minimax-FQI algorithm with $\cF$ being a family of deep neural networks with ReLU activation.
 Hereafter, without loss of generality, we assume $\cS = [0,1]^{r}$  with $r$ being a fixed integer, and the action spaces $\cA$ and $\cB$ are both finite.
 To evaluate the performance of the algorithm, we first introduce the best-response policy as follows.

 \begin{definition} \label{def:bestresponse} For any policy $\pi \colon \cS \rightarrow \cP(\cA)$ of player one, the best-response policy against $\pi$, denoted by $\nu^*_\pi$, is defined as the optimal policy of second player when the first player follows $\pi$. In other words, for all $s\in \cS$, we have
 	$
 	\nu^*_\pi ( \cdot \given s) = \argmin _{\nu} V^{\pi, \nu} (s) ,
 	$
 	where $V^{\pi, \nu}$ is defined in \eqref{eq:new_V_fun}.
 \end{definition}

Note that when the first player adopt a fixed policy $\pi$, from the perspective of the second player, the Markov game becomes a MDP. Thus, $\nu_\pi^*$ is the optimal policy of the MDP induced by $\pi$. Moreover, it can be shown that, for any policy $\pi$,
$
Q^*(s,a,b) \geq Q^{\pi, \nu^*_\pi} (s,a,b)
$
holds for every state-action tuple $(s,a,b)$.
Thus, by considering the adversarial case where
the opponent always plays the best-response policy,  the difference between $Q^{\pi. \nu_\pi^*}$ and $Q^*$ servers as a  characterization of the suboptimality of $\pi$.
Hence, to quantify the performance of Algorithm \ref{algo:fit_Q2}, we consider the closeness between $Q^*$ and $Q^{\pi_K, \nu_{\pi_K}^*}$, which will be denoted by $Q_K^*$ hereafter for simplicity.
Specifically, in the following we establish an upper bound for $\|   Q^* - Q_K^* \|_{1, \mu} $ for some distribution $  \mu \in \cP(\cS\times \cA\times \cB)$.

We first specify the function class $\cF$ in Algorithm \ref{algo:fit_Q2}  as follows.

  \begin{assumption}[Function Classes] \label{assume:closedness2}
  	Following Definition \ref{def:func_class}, let $\cF(L, \{d_j \}_{j=0}^{L+1} , s  ) $ and $\cG( \{ p_j, t_j , \beta_j ,  H_j \}_{j \in[q]} )  $ be the family of sparse  ReLU networks and the set of composition of  H\"older smooth functions defined on $\cS$, respectively. Similar to \eqref{eq:define_cF}, we define $\cF_1$ by
  	\#
  	\cF_1 & = \bigl \{ f \colon \cS \times \cA \times \cB \rightarrow \RR   \colon   f(\cdot, a, b) \in \cF(L, \{ d_j\}_{i=0}^{L+1} , s ) ~\text{for any}~(a,b) \in \cA\times \cB \bigr \}.\label{eq:define_cF2}
  	\#
  	 For the Bellman operator $T$ defined in \eqref{eq:bellman_oper22}, we
    assume that for any $f \in \cF_1$ and any state-action tuple $(s,a,b)$, we have     $(Tf)( \cdot ,a, b) \in   \cG( \{ p_j, t_j , \beta_j ,  H_j \}_{j \in[q]} )$.
  \end{assumption}
  We remark that this Assumption is in the same flavor as Assumption \ref{assume:closedness}. As discussed in \S\ref{sec:theory},  this assumption holds if both the reward function and the transition density of the Markov game are sufficiently smooth.

   In the following, we define the concentration coefficients for Markov games.



  \begin{assumption}[Concentration Coefficient for   Zero-Sum Markov Games]\label{assume:concentrability2}
  	Let   $\{ \tau_t \colon \cS \rightarrow \cP(\cA\times \cB)  \} $ be a sequence of joint policies for the two players in the zero-sum Markov game.
  	Let $\nu_1, \nu_2 \in \cP(\cS\times\cA\times\cB)$ be two absolutely continuous probability measures.
  	Suppose  the initial state-action pair  $(S_0, A_0,B_0)$ has distribution $ \nu_1$, the future states are sampled according to the Markov transition kernel, and
  	the   action $(A_t, B_t)$ is   sampled from   policy $\tau_t$.    For any integer $m$, we denote by $ P^{\tau_m} P^{\tau_{m-1}}  \cdots P^{\tau_1} \nu_1$ the   distribution of $\{(S_t, A_t,B_t)\}_{t=0}^m$.
  	Then, the $m$-th  concentration coefficient is defined as
  	\#\label{eq:concentration2}
  	\kappa (m; \nu_1, \nu_2) =  \sup_{\tau_{1}, \ldots, \tau_{m} } \biggl [\EE _{\nu_2} \biggl | \frac{    \ud (  P^{\tau_m} P^{\tau_{m-1}}  \cdots P^{\tau_1} \nu_1 )   } { \ud \nu_2} \biggr | ^2     \biggr ] ^{1/2},
  	\#
  	where  the supremum is taken over all possible joint policy sequences $\{ \tau_t  \}_{t\in[m] } $.
  	
  	Furthermore,  for  some  $\mu \in \cP(\cS\times \cA \times \cB)$, we    assume that there exists a finite constant $\phi_{\mu, \sigma} $ such that
  	$
  	( 1- \gamma)^{2}\cdot \sum_{m \geq 1} \gamma^{m-1} \cdot m \cdot \kappa ( m; \mu, \sigma) \leq \phi_{\mu, \sigma},
  	$
  	where    $\sigma$ is the sampling distribution in Algorithm~\ref{algo:fit_Q2} and $ \kappa ( m; \mu, \sigma) $ is the $m$-th concentration coefficient defined in \eqref{eq:concentration2}.
  \end{assumption}

   We remark that the definition of the $m$-th concentration coefficient is the same as in \eqref{eq:concentration_coef}  if we replace   the action space $\cA$ of  the MDP by  $\cA \times \cB$ of the Markov game. Thus, Assumptions \ref{assume:concentrability} and~\ref{assume:concentrability2} are of the same nature, which are standard in the literature.

  Now we are ready to present the main theorem.

  \begin{theorem}   \label{thm:game}
  	Under Assumptions  \ref{assume:closedness2} and  \ref{assume:concentrability2}, consider the Minimax-FQI algorithm with the function class $\cF$ being  $\cF_1$    defined in \eqref{eq:define_cF2} based on the family of sparse ReLU networks $\cF( L^*,\{ d_j^*\}_{j =0}^{L^*+1} , s^*)$.  We make the same assumptions on $\cF( L^*,\{ d_j^*\}_{j =0}^{L^*+1} , s^*)$ and   $ \cG( \{ p_j, t_j , \beta_j ,  H_j \}_{j \in[q]} )$ as in \eqref{eq:CG_param} and
  	\eqref{eq:dnn_hyperparam}.
  	Then for
   any $K \in \NN$, let $(\pi_K, \nu_K)$ be the policy returned by  the algorithm and let $Q_K^*$ be the action-value function corresponding to $(\pi_K, \nu_{\pi_K}^*)$.
    Then there exists a constant   $C > 0  $  such that
  	\#\label{eq:stat_rate_final2}
  	\|  Q^* - Q_K^*  \|_{1, \mu}  \leq  C  \cdot \frac{ \phi_{\mu, \sigma} \cdot \gamma} {(1 -\gamma)^2 } \cdot  | \cA| \cdot | \cB| \cdot  ( \log n )^{1 + 2 \xi^* } \cdot n^{(\alpha^* - 1)/2}+ \frac{4   \gamma^{K+1}   }{(1- \gamma) ^2 }  \cdot R_{\max},
  	\#
  	where $\xi^* $ appears in \eqref{eq:dnn_hyperparam},  $
  	\alpha^* =  \max_{ j \in [q] } t_j / (2 \beta_j^* + t_j )
  	$ and $\phi_{\mu, \sigma}$ is specified in Assumption  \ref{assume:concentrability2}.
  \end{theorem}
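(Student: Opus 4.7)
The plan is to mirror the two-stage decomposition used to prove Theorem \ref{thm:main}: first an error propagation result that bounds $\|Q^*-Q_K^*\|_{1,\mu}$ by the worst one-step fitting error $\max_{k\in[K]}\|\tilde Q_k - T\tilde Q_{k-1}\|_\sigma$ plus an algorithmic term of order $\gamma^{K+1}/(1-\gamma)^2$, and second a nonparametric regression analysis that bounds the one-step error in terms of the $\ell_\infty$ approximation error $\mathrm{dist}_\infty(\cF_1,\cG_1)$ and the log covering number $\log N_\delta$ of $\cF_1$. Where the MDP argument uses the greedy policy $\pi_{\tilde Q_K}$ with respect to $\tilde Q_K$, here we substitute the pair $(\pi_K,\nu_{\pi_K}^*)$; where $T^{\pi_Q}Q=TQ$ was used in the MDP proof, we exploit the analogous identity $T^{\pi_Q,\nu_Q}Q=TQ$ for the equilibrium joint policy of the matrix game at each state.

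For the first step, I would prove an error propagation lemma adapted to zero-sum games. The idea is to bound $Q^*-Q_K^*$ pointwise in two directions. Using that $Q^*$ is the fixed point of the minimax Bellman operator $T$ and that $(\pi_K,\nu_K)$ is the equilibrium with respect to $\tilde Q_K$, one gets $Q^* - T^{\pi_K,\nu_{\pi_K}^*}\tilde Q_K \leq T Q^* - T\tilde Q_K$ and similarly a lower bound involving the Nash policy of $Q^*$. Iterating the operator $T^{\pi_K,\nu_{\pi_K}^*}$ and using $Q_K^*$ as its fixed point, together with the fact that each $\tilde Q_{k+1}-T\tilde Q_k$ is a one-step residual, one obtains a bound of the form
\[
|Q^* - Q_K^*| \leq \sum_{k=0}^{K-1}\gamma^{K-k}\cdot\Pi_k|\epsilon_k| + \gamma^{K+1}\cdot \text{(initial term)},
\]
where $\epsilon_k = \tilde Q_k - T\tilde Q_{k-1}$ and $\Pi_k$ is a product of transition operators indexed by intermediate policies. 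Taking $\mu$-integral and applying the concentration coefficient inequality from Assumption \ref{assume:concentrability2} (just as in the MDP case) absorbs the $\Pi_k$'s into $\phi_{\mu,\sigma}$ and yields
\[
\|Q^*-Q_K^*\|_{1,\mu} \leq \frac{2\phi_{\mu,\sigma}\gamma}{(1-\gamma)^2}\max_{k\in[K]}\|\epsilon_k\|_\sigma + \frac{4\gamma^{K+1}}{(1-\gamma)^2}R_{\max}.
\]
I expect this step to be the main obstacle, because the best-response policy $\nu_{\pi_K}^*$ is defined adversarially and not greedy with respect to $\tilde Q_K$, so the standard greedy identities of the MDP proof must be replaced by minimax sandwich inequalities that carefully track upper and lower bounds on $Q^*-Q_K^*$ separately.

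For the second step, I would prove a one-step regression bound analogous to Theorem \ref{thm:each_term_error}. The least-squares step \eqref{eq:least_squares_game} has i.i.d.\ samples and target $Y_i$ satisfying $\EE[Y_i\mid S_i,A_i,B_i]=(T\tilde Q_k)(S_i,A_i,B_i)$, which follows because the inner maximin computing $Y_i$ is precisely the pointwise action of $T$ on $\tilde Q_k$ evaluated at $S_i'$ and then integrated against $P$ and $R$. Hence the analysis is a completely standard empirical process bound for least squares with bounded response, giving
\[
\|\tilde Q_{k+1}-T\tilde Q_k\|_\sigma^2 \lesssim \mathrm{dist}_\infty(\cF_1,\cG_1)^2 + V_{\max}^2\log N_\delta/n + V_{\max}\delta.
\]
Setting $\delta=1/n$ reduces the problem to bounding the bias term $\mathrm{dist}_\infty(\cF_1,\cG_1)$ and the covering/VC term $\log N_\delta$.

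For the final step, I would invoke the same approximation lemma (Lemma \ref{eq:rate_approx_single_layer}, via \citet{schmidt2017nonparametric}) to approximate each $(Tf)(\cdot,a,b)\in\cG(\{p_j,t_j,\beta_j,H_j\}_{j\in[q]})$ by a sparse ReLU network, yielding $\mathrm{dist}_\infty(\cF_1,\cG_1)^2 \lesssim n^{\alpha^*-1}$ under the hyperparameter choices \eqref{eq:dnn_hyperparam}. The covering number bound now carries a factor $|\cA|\cdot|\cB|$ rather than $|\cA|$, because $\cF_1$ parametrizes one sparse ReLU network for each joint action pair $(a,b)\in\cA\times\cB$, giving
\[
\log N_\delta \lesssim |\cA|\cdot|\cB|\cdot s^*L^*\log(\max_j d_j^*) \lesssim |\cA|\cdot|\cB|\cdot n^{\alpha^*}(\log n)^{1+2\xi^*}.
\]
Plugging these into the one-step bound, combining with the error propagation inequality from the first step, and simplifying yields \eqref{eq:stat_rate_final2}, with the additional $|\cB|$ factor being the only difference from the single-agent bound \eqref{eq:stat_rate_final}.
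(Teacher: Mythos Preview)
Your proposal is correct and follows essentially the same approach as the paper: an error propagation theorem for zero-sum games (replacing the greedy-policy identities by minimax sandwich inequalities involving auxiliary best-response policies), a one-step least-squares bound identical in form to Theorem \ref{thm:each_term_error}, and the same Schmidt--Hieber approximation plus covering number argument with the extra $|\cA|\cdot|\cB|$ factor. You have also correctly identified the main obstacle---that $\nu_{\pi_K}^*$ is adversarial rather than greedy with respect to $\tilde Q_K$---which the paper handles exactly as you anticipate, by introducing several auxiliary policies $\tilde\nu_k = \nu(\pi^*,\tilde Q_k)$, $\bar\nu_k = \nu(\pi_k,Q^*)$, $\hat\nu_k = \nu(\pi_k,Q_K^*)$ and a two-sided Lemma \ref{lemma:aux1}-style result to linearize the nonlinear operator $T^\pi$.
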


Similar to Theorem \ref{thm:main}, the bound
in \eqref{eq:stat_rate_final2} shows that closeness between $(\pi_K, \nu_K)$ returned by Algorithm \ref{algo:fit_Q2} and the Nash equilibrium policy $(\pi_{Q^*}, \nu_{Q^*} )$, measured by 	$ \|  Q^* - Q_K^*  \|_{1, \mu}$,
is bounded by the sum of
 statistical error and an  algorithmic error.
 Specifically, the statistical error balances the bias and variance of estimating the  value functions using the family of deep ReLU neural networks, which exhibits the
  fundamental difficulty of the problem.
  Whereas the algorithmic error decay to zero geometrically as $K$ increases. Thus, when $K$ is sufficiently large, both $\gamma $ and $\phi_{\mu, \sigma}$ are constants, and the polylogarithmic term is ignored,
  Algorithm \ref{algo:fit_Q2}  achieves error rate
  \#\label{eq:rate2}
  |\cA| \cdot |\cB | \cdot    n^{  \alpha^* -1  } = | \cA | \cdot |\cB | \cdot   \max_{j \in [q]} n^{-  \beta_j^* / ( 2\beta_j^* + t_j)} ,
  \#  which scales linearly with the capacity of joint action space.
  Besides,
  if $| \cB| = 1$,
  the minimax-FQI algorithm reduces to Algorithm \ref{algo:fit_Q}. In this case, \eqref{eq:rate2} also recovers the error rate of Algorithm~\ref{algo:fit_Q}. Furthermore, the statistical rate $n^{(\alpha^* - 1)/2}$ achieves the optimal $\ell_2$-norm error of regression for nonparametric regression with a compositional structure, which
indicates that the statistical error in \eqref{eq:stat_rate_final2}  can not be further improved.

   \begin{proof}
   	See \S\ref{proof:thm:game} for a detailed proof.
   	\end{proof}


\section{Proof of the Main Theorem} \label{proof:thm:main}
 In this section, we present a detailed proof  of Theorem \ref{thm:main}.
\begin{proof}
The proof requires two key ingredients. First in Theorem \ref{thm:err_prop} we quantify how the error of  action-value function approximation propagates through each iteration of Algorithm \ref{algo:fit_Q}. Then in Theorem \ref{thm:each_term_error} we analyze such one-step approximation error for ReLU networks.

 \begin{theorem}[Error Propagation] \label{thm:err_prop}
 Recall that $\{\tilde Q_k\}_{ 0\leq k \leq K} $ are the iterates of Algorithm \ref{algo:fit_Q}. Let $\pi_K$ be the one-step greedy policy with respect to $\tilde Q_K$, and let $Q^{\pi_K}$ be the action-value function corresponding to $\pi_K$.
 Under Assumption \ref{assume:concentrability}, we have
 \#\label{eq:err_prop_final}
   \|  Q^* - Q^{\pi_K }\|_{1, \mu}  \leq  \frac{2\phi_{\mu, \sigma}\gamma} {(1 -\gamma)^2 } \cdot \varepsilon_{\max} + \frac{4   \gamma^{K+1}   }{(1- \gamma) ^2 }  \cdot R_{\max},
   \#
   where we define the maximum one-step approximation error  as $\varepsilon_{\max} = \max_{ k \in [K] } \| T \tilde Q_{k-1} - \tilde Q_{k} \|_{\sigma}$. Here $\phi_{\mu, \sigma}$ is a constant that only depends on the probability distributions $\mu$ and $\sigma$.
 \end{theorem}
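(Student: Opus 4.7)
The plan is to follow the standard error-propagation template for approximate value iteration (as in \cite{munos2008finite} and its $Q$-function extensions), adapting the change-of-measure step to the concentration-coefficient condition stated in Assumption \ref{assume:concentrability}. Let $\varepsilon_k = T \tilde Q_{k-1} - \tilde Q_k$ denote the one-step Bellman residual at iteration $k$, so that $\|\varepsilon_k\|_\sigma \leq \varepsilon_{\max}$ for every $k \in [K]$. The quantity of interest $\|Q^* - Q^{\pi_K}\|_{1,\mu}$ will ultimately be bounded by a weighted sum of $\{\|\varepsilon_k\|_\sigma\}$ plus a term decaying geometrically in $K$.

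First I would derive a pointwise recursion for $Q^* - \tilde Q_k$. Since $TQ^* = Q^*$ and $\tilde Q_k = T\tilde Q_{k-1} - \varepsilon_k$, writing $TQ^* - T\tilde Q_{k-1} = T^{\pi^*}Q^* - T\tilde Q_{k-1} \leq \gamma P^{\pi^*}(Q^* - \tilde Q_{k-1})$ and, for the reverse direction, using $T\tilde Q_{k-1} \geq T^{\pi^*}\tilde Q_{k-1}$ and $T\tilde Q_{k-1} = T^{\tilde\pi_k}\tilde Q_{k-1}$ where $\tilde\pi_k$ is greedy with respect to $\tilde Q_{k-1}$, one obtains two-sided bounds of the form
\[
-\gamma P^{\tilde\pi_k}(Q^* - \tilde Q_{k-1}) - \varepsilon_k \;\leq\; Q^* - \tilde Q_k \;\leq\; \gamma P^{\pi^*}(Q^* - \tilde Q_{k-1}) - \varepsilon_k.
\]
Unrolling this recursion from $k=K$ down to $k=0$ expresses $Q^* - \tilde Q_K$ as a linear combination of $\gamma^{K-k}$-weighted products of transition operators $P^{\pi}$ applied to $\varepsilon_k$, plus a residual term $\gamma^K (\text{product of operators})(Q^* - \tilde Q_0)$. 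The latter is uniformly bounded in absolute value by $2\gamma^K V_{\max} = 2\gamma^K R_{\max}/(1-\gamma)$.

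Next I would convert the pointwise bound on $Q^* - \tilde Q_K$ into a bound on $Q^* - Q^{\pi_K}$. Using that $\pi_K$ is greedy with respect to $\tilde Q_K$, so that $T^{\pi_K}\tilde Q_K \geq T^{\pi^*}\tilde Q_K$, one obtains the familiar decomposition
\[
0 \leq Q^* - Q^{\pi_K} \leq (I - \gamma P^{\pi_K})^{-1}\bigl[\gamma P^{\pi^*}(Q^* - \tilde Q_K) - \gamma P^{\pi_K}(Q^* - \tilde Q_K)\bigr],
\]
which, after substituting the previously obtained expansion of $Q^* - \tilde Q_K$, yields an upper bound on $|Q^* - Q^{\pi_K}|$ as a linear combination of operators of the form $\gamma^{K-k+1} P^{\pi_1}\cdots P^{\pi_m}|\varepsilon_k|$, plus a $\gamma^{K+1}$-weighted residual. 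Integrating against $\mu$ and invoking the Cauchy--Schwarz inequality together with the Radon--Nikodym bound built into \eqref{eq:concentration_coef} changes the measure from $P^{\pi_m}\cdots P^{\pi_1}\mu$ to $\sigma$, introducing the factor $\kappa(m;\mu,\sigma)$ on each term. Summing the resulting geometric series in $\gamma$ and $k$ and invoking \eqref{eq:assume:concentrability} collapses the combinatorial weights into the constant $\phi_{\mu,\sigma}/(1-\gamma)^2$.

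The main obstacle is the accounting in the change-of-measure step: the products $P^{\pi_m}\cdots P^{\pi_1}$ that appear after unrolling are not the future-state distributions of a \emph{single} policy, so one must carefully identify the correct $m$-step concentration coefficient $\kappa(m;\mu,\sigma)$ to apply, take the supremum over the policies appearing in the unrolling (which is exactly how $\kappa(m;\mu,\sigma)$ is defined), and verify that the combinatorial weights multiplying $\kappa(m;\mu,\sigma)$ are precisely $\gamma^{m-1}\cdot m$ after summation over all $(k,m)$ pairs with $K-k+1$ playing the role of the horizon. Once this bookkeeping is done correctly, bounding each $\|\varepsilon_k\|_\sigma$ by $\varepsilon_{\max}$ and combining with the residual term produces \eqref{eq:err_prop_final} with the stated constants.
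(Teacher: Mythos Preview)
Your proposal follows essentially the same three-step template as the paper: (i) derive two-sided pointwise bounds for $Q^*-\tilde Q_k$ via $T^{\pi^*}$ and the greedy operator, then unroll; (ii) pass to $Q^*-Q^{\pi_K}$ via $(I-\gamma P^{\pi_K})^{-1}$; (iii) integrate against $\mu$, apply Cauchy--Schwarz, bound Radon--Nikodym derivatives by $\kappa(m;\mu,\sigma)$, and collapse the double sum using Assumption~\ref{assume:concentrability}. You have also correctly identified the main bookkeeping issue, namely that after unrolling one must recognize the weights in front of $\kappa(m;\mu,\sigma)$ sum to exactly $\gamma^{m-1}\cdot m$.

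One small correction: the displayed recursion has sign slips. Since $\tilde Q_k = T\tilde Q_{k-1}-\varepsilon_k$, one has $Q^*-\tilde Q_k = (Q^*-T\tilde Q_{k-1})+\varepsilon_k$, so both bounds should carry $+\varepsilon_k$, and the lower bound should read $\gamma P^{\tilde\pi_k}(Q^*-\tilde Q_{k-1})+\varepsilon_k \le Q^*-\tilde Q_k$ (with $+\gamma$, not $-\gamma$), obtained from $T\tilde Q_{k-1}=T^{\tilde\pi_k}\tilde Q_{k-1}$ and $Q^*=TQ^*\ge T^{\tilde\pi_k}Q^*$. With these signs fixed, your argument is precisely the paper's.
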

\begin{proof}
See \S \ref{proof:thm:err_prop} for a detailed proof.
\end{proof}

We remark that similar error propagation result is established for the state-value function    in
\cite{munos2008finite} for studying the fitted value iteration algorithm, which is further extended by
 \cite{lazaric2010analysis, scherrer2015approximate, farahmand2010error, farahmand2016regularized} for other batch reinforcement learning methods.

In the sequel, we establish an upper bound for the one-step approximation error $\| T \tilde Q_{k-1} - \tilde Q_{k} \|_{\sigma}$ for each $k \in [K]$.

\begin{theorem}  [One-step Approximation Error] \label{thm:each_term_error}
 Let  $\cF\subseteq \cB( \cS \times \cA, V_{\max} ) $ be a class of measurable functions on $\cS \times \cA$ that are bounded by  $ V_{\max} =  R_{\max} / (1- \gamma)$, and let $\sigma$ be a probability distribution on $\cS\times \cA$.
 Also, let   $\{(S_i, A_i)\}_{i\in [n]}$ be $n$ i.i.d. random variables in $\cS \times \cA$ following $\sigma$. For each $i\in [n]$,   let $R_i$ and $S_i'
 $ be the reward and the next state corresponding to $(S_i, A_i)$.   In addition,  for any fixed $Q\in \cF$, we define $Y_i = R_i + \gamma \cdot \max_{a \in \cA} Q(S_i', a)$.
Based on $\{( X_i, A_i, Y_i)\}_{i\in [n] } $, we define $\hat Q$~as the solution to the least-squares problem
\#\label{eq:define_fit_Q}
 \minimize_{f\in \cF } \frac{1}{n} \sum_{i=1}^n \bigl[ f(S_i, A_i) - Y_i \bigr]^2.
\#
Meanwhile, for any $\delta > 0$, let $\cN(\delta, \cF, \| \cdot \|_{\infty})$ be the minimal $\delta$-covering set of $\cF$ with respect to $\ell_{\infty}$-norm, and we denote by $N_\delta$ its cardinality.
Then for any $\epsilon \in (0, 1]$ and any $\delta > 0$,  we have  \#\label{eq:one_iter_bound}
\| \hat Q - T Q \|_{\sigma}^2 \leq  ( 1 + \epsilon)^2 \cdot \omega(\cF) + C \cdot  V_{\max }^2  / ( n \cdot \epsilon) \cdot \log    N_{\delta}  + C' \cdot V_{\max} \cdot \delta,
\#
where $C$ and $C'$ are two positive absolute constants and $\omega(\cF)$ is defined as
\#\label{eq:bellman_bias}
\omega(\cF) = \sup_{g \in \cF    }   \inf_{f \in \cF } \| f - T g   \|_{\sigma}^2 .
\#
\end{theorem}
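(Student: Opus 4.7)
The plan is to follow the classical three-step strategy for empirical risk minimization with bounded noise. Define the regression residual $\xi_i := Y_i - (TQ)(S_i, A_i)$. Since $\EE[Y_i \given S_i, A_i] = (TQ)(S_i, A_i)$ by definition of the Bellman operator, $\xi_i$ is conditionally mean zero, and since $|Y_i|, |(TQ)(S_i,A_i)| \leq V_{\max}$ we have $|\xi_i| \leq 2V_{\max}$. Thus \eqref{eq:define_fit_Q} is a bounded least-squares problem with regression function $TQ$ and noise $\xi_i$. Because $Q \in \cF$, the definition of $\omega(\cF)$ in \eqref{eq:bellman_bias} yields, for arbitrary $\eta > 0$, a reference function $\tilde f \in \cF$ with $\|\tilde f - TQ\|_\sigma^2 \leq \omega(\cF) + \eta$.

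The first step is the basic inequality. Writing $Y_i - f(S_i,A_i) = \xi_i + (TQ - f)(S_i, A_i)$, squaring, and comparing the expanded forms for $\hat Q$ and $\tilde f$, the optimality of $\hat Q$ in \eqref{eq:define_fit_Q} together with cancellation of $\xi_i^2$ yields
\[
\|\hat Q - TQ\|_n^2 \;\leq\; \|\tilde f - TQ\|_n^2 \;+\; \frac{2}{n}\sum_{i=1}^n \xi_i \bigl[(\hat Q - \tilde f)(S_i, A_i)\bigr],
\]
where $\|\cdot\|_n$ denotes the empirical $L_2$ norm induced by $\{(S_i,A_i)\}_{i\in[n]}$. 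The remaining task is to control uniformly over $\cF$ the two empirical processes $(f,g) \mapsto \tfrac{1}{n}\sum_i \xi_i(f-g)(S_i,A_i)$ and $f \mapsto \|f-TQ\|_\sigma^2 - \|f - TQ\|_n^2$.

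For both controls I would discretize via the $\delta$-cover $\{f^{(1)}, \ldots, f^{(N_\delta)}\}$ of $\cF$ in $\|\cdot\|_\infty$. Replacing $\hat Q$ by its nearest cover element costs at most order $V_{\max}\delta$ in each term, because $|\xi_i| \leq 2V_{\max}$ and every function in $\cF$ is bounded by $V_{\max}$; this produces the $C' V_{\max} \delta$ contribution of \eqref{eq:one_iter_bound}. For each fixed $f^{(j)}$, Bernstein's inequality applied to the conditionally mean-zero bounded summands $\xi_i (f^{(j)} - \tilde f)(S_i,A_i)$, whose conditional variance is at most $4V_{\max}^2 \|f^{(j)} - \tilde f\|_\sigma^2$, gives a deviation of the form $\sqrt{V_{\max}^2 \|f^{(j)} - \tilde f\|_\sigma^2 \log N_\delta / n} + V_{\max}^2 \log N_\delta / n$, preserved by a union bound over the $N_\delta$ centers. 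An entirely parallel Bernstein-plus-union-bound argument handles the empirical-to-population norm transfer on the cover.

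The final step invokes Young's inequality $2\sqrt{ab} \leq \epsilon a + b/\epsilon$ twice. The first application converts the square-root deviation into $\tfrac{\epsilon}{2}\|f^{(j)} - \tilde f\|_\sigma^2 + O\bigl(V_{\max}^2 \log N_\delta/(\epsilon n)\bigr)$, and then $\|f^{(j)} - \tilde f\|_\sigma^2 \leq 2\|f^{(j)} - TQ\|_\sigma^2 + 2\|\tilde f - TQ\|_\sigma^2$ moves a term proportional to $\|\hat Q - TQ\|_\sigma^2$ to the left-hand side for absorption, while the $\|\tilde f - TQ\|_\sigma^2$ contribution is bounded by $\omega(\cF) + \eta$. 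The second Young application handles the $\|\cdot\|_n^2$-to-$\|\cdot\|_\sigma^2$ transfer on $\tilde f$ and $\hat Q$, inflating the coefficient of $\omega(\cF)$ by another factor of $(1+\epsilon)$. Collecting terms, letting $\eta \downarrow 0$, and taking expectation, the coefficient on $\omega(\cF)$ becomes exactly $(1+\epsilon)^2$. The main obstacle is the constant bookkeeping in these two successive Young inequalities: the splits between the linearized cross term, the triangle-inequality split of $\|f^{(j)} - \tilde f\|_\sigma^2$, and the norm transfer must be calibrated so that every stray $\|\hat Q - TQ\|_\sigma^2$ term is cleanly absorbed into the left-hand side while the bias coefficient lands on precisely $(1+\epsilon)^2$ rather than a larger constant.
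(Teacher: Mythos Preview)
Your proposal is correct and follows essentially the same two-stage architecture as the paper: basic inequality, discretization by the $\delta$-cover, concentration on the cover, and a quadratic (Young-type) inequality applied twice to extract the successive $(1+\epsilon)$ factors.

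The only noteworthy technical difference is in how the cross term $\tfrac{1}{n}\sum_i \xi_i[\hat Q(X_i)-\tilde f(X_i)]$ is handled. You propose Bernstein with marginal variance $\lesssim V_{\max}^2\|f^{(j)}-\tilde f\|_\sigma^2$, followed by a triangle-inequality split $\|f^{(j)}-\tilde f\|_\sigma^2 \le 2\|f^{(j)}-TQ\|_\sigma^2 + 2\|\tilde f-TQ\|_\sigma^2$. The paper instead works in expectation throughout Step~(i) and controls the cross term via the \emph{self-normalized} quantities
\[
Z_j \;=\; \frac{1}{\sqrt{n}}\sum_{i=1}^n \frac{\xi_i\,[f_j(X_i)-(TQ)(X_i)]}{\|f_j-TQ\|_n},
\]
bounding $\EE[\max_j Z_j^2]$ by a sub-Gaussian maximal inequality. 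This self-normalization replaces your triangle split and makes the absorption of the random $\|\hat Q-TQ\|$ term into the left-hand side slightly cleaner, but the end result and the appearance of the first $(1+\epsilon)$ factor are the same. For the empirical-to-population transfer (the paper's Step~(ii)), the paper uses ghost samples and Bernstein on $h_j(x,y)=[f_j(y)-(TQ)(y)]^2-[f_j(x)-(TQ)(x)]^2$, just as you outline, yielding the second $(1+\epsilon)$ factor.
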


\begin{proof}
See \S \ref{proof:each_term_error} for a detailed proof.
\end{proof}

This theorem characterizes  the bias and variance that arise in estimating the action-value functions using deep ReLU networks.
Specifically,
$
  \omega(\cF)
$ in \eqref{eq:bellman_bias} 
corresponds to the bias incurred by approximating  the target function $Tf$ using ReLU neural networks.
It can be viewed as a measure of completeness of $\cF$ with respect to the Bellman operator $T$. 
 In addition,
$
 V_{\max }^2  / n \cdot \log    N_{\delta}  + V_{\max} \cdot \delta
$
 controls the variance of the estimator, where the covering number $N_{\delta}$ is used to obtain  a uniform bound over $\cF_0$.

To obtain an upper bound for  $\| T \tilde Q_{k-1} - \tilde Q_{k} \|_{\sigma}$ as required in Theorem \ref{thm:err_prop},
we set $Q = \tilde Q_{k-1} $ in  Theorem \ref{thm:each_term_error}. Then according to Algorithm 1, $\hat Q $ defined in \eqref{eq:define_fit_Q} becomes $\tilde Q_{k}$. We set the function class $\cF $ in Theorem \ref{thm:each_term_error} to be the family of ReLU Q-networks $\cF_0$ defined in \eqref{eq:define_cF}.  By setting
$\epsilon = 1$ and  $\delta = 1/n$ in Theorem \ref{thm:each_term_error}, we obtain
\#\label{eq:apply_thm_each_term}
\| \tilde Q_{k+1} -T\tilde  Q_k \|_{\sigma}^2 \leq 4\cdot  \omega (\cF_0)  + C \cdot V_{\max}^2  / n \cdot \log N_0,
\#
where $C$ is a positive absolute constant and
\#\label{eq:n0}
N_0 = \bigl| \cN (1/n, \cF_0,  \| \cdot \|_{\infty} ) \bigr|
\#
 is the $1/n$-covering number of $\cF_0$.
In the subsequent proof, we establish upper bounds for $\omega (\cF_0)$ defined in \eqref{eq:bellman_bias} and $\log N_0$, respectively. Recall that the family of composite H\"older smooth functions $\cG_0$ is defined in \eqref{eq:define_cG}. By Assumption \ref{assume:closedness}, we have $Tg \in \cG_0$ for any $g \in \cF_0$. Hence, we~have
\#\label{eq:bound_approx_F0}
 \omega ( \cF_0 ) = \sup_{f' \in \cG_0   }   \inf_{f \in \cF_0 } \| f -   f'   \|_{\sigma}^2  \leq \sup_{f' \in \cG_0   }   \inf_{f \in \cF_0 }    \| f -   f'   \|_{\infty}^2,
\#
where the right-hand side is the $\ell_{\infty}$-error of approximating the functions in $\cG_0$ using the family of ReLU networks   $\cF_0$. 

%
%
 By the definition of $\cG_0$ in \eqref{eq:define_cG}, for any $f \in \cG_0$ and any $a \in \cA$, $f(\cdot, a) \in \cG(\{(p_j, t_j, \beta_j, H_j)\}_{j \in [q]})$ is a composition of H\"older smooth functions,
that is,
$
f(\cdot ,a) = g_q \circ \cdots \circ g_1
$. Recall that, as defined in Definition \ref{def:comp:holder}, $g_{jk}$ is the $k$-th entry of the vector-valued function $g_j$. Here $g_{jk} \in \cC_{t_j} ( [ a_j, b_j]^{t_j} , \beta_j, H_j)$ for each $k\in [p_{j+1}]$ and $j \in [q]$.
In the sequel, we construct a ReLU network to approximate $f(\cdot, a)$ and establish an upper bound of the approximation error on the right-hand side of \eqref{eq:bound_approx_F0}.
We first show that $f(\cdot, a)$ can be reformulated as a composition of  H\"older functions defined on a hypercube.
We define
$h_1 = g_1 / (2 H_1) + 1/2$,
\$
h_j(u) = g_j ( 2H_{j-1} u - H_{j-1} ) / ( 2H_j) + 1/2,~~ \text{for all}~~ j \in \{ 2, \ldots, q-1\} , \$ and $h_q (u) = g_q( 2 H_{q-1} u - H_{q-1} ) $. Then we immediately have
\#\label{eq:composition_form}
f(\cdot, a)  = g_q \circ \cdots \circ g_1 = h_q \circ \cdots \circ h_1.
\#
Furthermore,  by the definition of H\"older smooth functions in Definition \ref{def:holder_func_class}, for any $ k\in [ p_2]$, we have that $h_{1k}$ takes value in $[0,1]$ and $h_{1k }  \in \cC_{t_1}( [0,1]^{t_1} , \beta_1, 1)$. Similarly, for any $j  \in \{ 2, \ldots, q-1\}$ and $k \in [p_{j+1}]$,  $ h_{jk}$ also takes value in $[0,1]$ and
 \#\label{eq:index_holder}
 h_{jk} \in \cC _{t_j} \bigl (  [0, 1]^{t_j}, \beta_j, (2 H_{j-1} )^{\beta_j} \bigr ) .
 \#
  Finally, recall that we use the convention that $p_{q+1} = 1$, that is, $h_q $ is a scalar-valued function that satisfies
  \$
  h_{q } \in \cC_{t_q} \bigl( [0,1]^{t_q} , \beta_q, H_q (2 H_{q-1} )^{\beta_q} \bigr).
  \$
  
  In the following, we show that the composition function in \eqref{eq:composition_form} can be approximated by an element in  $\cF(L^*, \{d_j^*\}_{j=1}^{L+1}, s^*  )$ when the network hyperparameters are properly chosen. 
  Our proof consists of three steps. In the first step, we construct a ReLU network $\tilde h_{jk}$ that approximates each $h_{jk}$ in \eqref{eq:index_holder}. Then, in the second step,  we approximate $f(\cdot, a)$ by the composition of $\{ \tilde h_j \}_{j\in [q]}$ and quantify the architecture of this network.  Finally, in the last step, we prove that this network can be embedded into class $\cF(L^*, \{d_j^*\}_{j=1}^{L+1}, s^*  )$ and  characterize the final approximation error. 
  
  \vspace{5pt} 
  {\noindent \bf Step (i).} Now we employ the following lemma, obtained from  \cite{schmidt2017nonparametric},  to construct a ReLU network that approximates each $h_{jk}$, which combined with \eqref{eq:composition_form} yields a ReLU network that is close to  $f(\cdot ,a)$. Recall that, as defined in Definition \ref{def:holder_func_class}, we denote by $\cC_r ( \cD , \beta, H)$ the family of H\"older smooth functions with parameters $\beta$ and $H$ on $\cD \subseteq \RR^r$.

\begin{lemma}[Theorem 5 in \cite{schmidt2017nonparametric}]  \label{eq:rate_approx_single_layer}
 For any integers $m \geq 1$ and $N \geq \max \{  (\beta+1)^r, (H+1) e^r \} $, let $L = 8 + (m+5) \cdot ( 1 + \lceil \log_2 ( r + \beta) \rceil )$,
	$d_0 = r$, $d_j = 6 (r + \lceil \beta  \rceil ) N $ for each $j\in [L]$, and $d_{L+1} = 1$. For any $g \in \cC_r( [0,1]^r, \beta, H)$,  there exists a ReLU network $ f \in \cF(L, \{d_j\}_{j=0}^{L+1}, s, V_{\max})$ as defined in Definition \ref{def:relu} such that
	\$
	\| f - g \|_{\infty} \leq (2 H+1 ) \cdot 6 ^{r} \cdot N \cdot  ( 1+ r ^2 + \beta ^2 ) \cdot  2^{-m}  + H \cdot 3^{\beta } \cdot N^{-\beta/r},
	\$
	where the parameter $s$ satisfies $ s \leq 141 \cdot (r+ \beta + 1)^{3+r}  \cdot  N \cdot (m+6 )$.   \end{lemma}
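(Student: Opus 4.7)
The plan is to follow the classical strategy for approximating smooth functions by neural networks: localize $g$ via a Taylor expansion on a grid, and then implement the resulting piecewise polynomial by a ReLU network. The three pillars are (i) a local Taylor approximation argument controlled by the Hölder norm, (ii) a network primitive that approximates multiplication exponentially well in the depth, and (iii) a careful bookkeeping of depth, width, and sparsity.

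First, I would partition $[0,1]^r$ into a grid of side length roughly $1/N$ and write a partition-of-unity $\{\psi_{\mathbf{k}}\}$ consisting of tensor products of univariate hat functions, each exactly representable by an $O(1)$-depth, $O(1)$-width ReLU subnetwork. On each grid cell, expand $g$ in a Taylor polynomial of degree $\lfloor \beta \rfloor$ around the cell center $\mathbf{x}_{\mathbf{k}}$. Hölder smoothness together with Definition \ref{def:holder_func_class} bounds the remainder uniformly on the cell by $H \cdot 3^{\beta}\cdot N^{-\beta/r}$, which is the second term in the stated error. Globally, $g$ is approximated by $P(x) = \sum_{\mathbf{k}} \psi_{\mathbf{k}}(x) \cdot T_{\mathbf{k}}(x)$ where $T_{\mathbf{k}}$ is the local Taylor polynomial; the number of Taylor monomials per cell is $\binom{r+\lfloor \beta\rfloor}{\lfloor \beta \rfloor} \lesssim (r+\beta+1)^{r}$, which drives the exponential-in-$r$ factor in the sparsity bound.

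Next, I would build the core ReLU primitive: an approximate multiplication $\mathrm{Mult}_m\colon [-1,1]^2 \to \RR$ with $\|\mathrm{Mult}_m(x,y) - xy\|_\infty \leq 2^{-m}$ using $O(m)$ layers and constant width. The standard construction approximates $x\mapsto x^2$ by the sawtooth/telescoping ReLU identity whose error halves at each layer, and then uses the polarization identity $xy = \tfrac14[(x+y)^2 - (x-y)^2]$. Iterating $\mathrm{Mult}_m$ in a balanced binary tree computes any monomial $x^{\balpha}$ with $|\balpha|\leq \lfloor \beta \rfloor$ using depth $O(m \cdot \lceil \log_2(r+\beta)\rceil)$, matching the depth $L = 8 + (m+5)(1+\lceil\log_2(r+\beta)\rceil)$ in the statement. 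Multiplying by the Taylor coefficient (bounded in terms of $H$) and by the partition-of-unity weight $\psi_{\mathbf{k}}(x)$ uses two further $\mathrm{Mult}_m$ blocks, and the whole sum over $\mathbf{k}$ is realized by the last linear layer.

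For the error analysis I would split $\|f-g\|_\infty$ into the Taylor residual (giving the $H \cdot 3^\beta N^{-\beta/r}$ term) and the implementation error from replacing exact multiplications by $\mathrm{Mult}_m$. Because at most $O(r^2 + \beta^2)$ multiplications are composed along any path from input to output, error propagation is at most linear and produces a factor $(1 + r^2 + \beta^2)$; multiplying by the $O(N)$ active cells in a neighborhood of any $x$ and by the $(2H+1) 6^r$ factor coming from the Hölder-norm bound on Taylor coefficients yields exactly the first term in the stated bound. The width bound $d_j = 6(r+\lceil\beta\rceil)N$ is read off from the fact that each layer only needs to carry the $r$ inputs, one auxiliary channel per local Taylor monomial (at most $(r+\lceil\beta\rceil)N$ active ones near any $x$), and a constant overhead for the $\mathrm{Mult}_m$ blocks.

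Finally, I would verify the sparsity bound by counting nonzero weights: each $\mathrm{Mult}_m$ block contributes $O(m)$ nonzero weights of constant width; there are $O((r+\beta+1)^{3+r})$ Taylor-monomial subnetworks active globally (the cube comes from the pairs of input coordinates and the monomial count), each of depth $O(m)$; summing gives $s \leq 141(r+\beta+1)^{3+r} N (m+6)$, matching the claim. The main obstacle, and the part that is genuinely delicate in Schmidt-Hieber's argument, is the simultaneous control of the additive error from cascading approximate multiplications and the multiplicative blow-up of Taylor coefficients, which is what forces the precise choice of constants in $L$, $d_j$, and $s$; the rest is careful but routine bookkeeping around well-known ReLU primitives.
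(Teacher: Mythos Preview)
Your proposal is correct and follows essentially the same approach the paper takes: the paper does not prove this lemma itself but defers to Appendix~B of \cite{schmidt2017nonparametric}, summarizing the idea as local Taylor expansion followed by explicit ReLU-network approximation of the monomial terms, which is precisely your outline. One minor slip: the grid has side length roughly $N^{-1/r}$ (so that the number of cells is $O(N)$), not $1/N$; this is what produces the $N^{-\beta/r}$ Taylor residual, but your stated error bounds and the overall architecture are right.
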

\begin{proof}
See Appendix B in  \cite{schmidt2017nonparametric} for a detailed proof. The idea is to first approximate the  H\"older smooth function by polynomials via local Taylor expansion. Then, neural networks are constructed explicitly to approximate
each monomial terms in these local polynomials. 
\end{proof}

  We  apply Lemma \ref{eq:rate_approx_single_layer} to  $h_{jk} \colon [0,1]^{t_j} \rightarrow [0,1]$ for any $j \in [q]$ and $k \in [p_{j+1}]$.  We set $m = \eta \cdot \lceil \log_2 n \rceil$ for a sufficiently large constant $\eta > 1$, and set $N$ to be a sufficiently large integer depending on $n$, which will be specified later.
  In addition, we set
\#\label{eq:lj}
L_j = 8  + (m+5) \cdot (1 + \lceil \log _2 (t_j + \beta_j )  \rceil )
\#
 and define
\#\label{eq:define_const_Q}
W=\max  \Bigl \{ \max_{ 1 \leq j \leq q-1 } (2 H_{j-1} )^{\beta_j} , H_q ( 2 H_{q-1} )^{\beta_q} , 1 \Bigr \}.
\# 
We will later verify that $N \geq \max  \{  (\beta+1)^{t_j} , (W+1) e^{t_j}  \} $ for all $j \in [q]$.  Then by Lemma \ref{eq:rate_approx_single_layer},  there exists  a ReLU network $\hat  h_{jk}$ such that
\#\label{eq:some_bound}
\| \hat h_{jk} - h_{jk}  \|_{\infty} \leq (2 W + 1) \cdot 6^{t_j}  \cdot N \cdot  2^{-m} + W \cdot 3^{\beta_j} \cdot
 N^{- \beta_j / t_j}.
 \#
 Furthermore, we have $\hat h_{jk} \in \cF(L_j, \{t_j, \tilde d_j  , \ldots, \tilde d_j  , 1\} , \tilde s_j) $ with
 \#\label{eq:sj_bound}
\tilde d_j = 6 (t_j + \lceil \beta_j \rceil)   N, \qquad  \tilde  s_j \leq  141    \cdot (t_j + \beta_j + 1) ^{3 + t_j} \cdot N\cdot  ( m+6)  .
 \#
 Meanwhile, since $h_{j+1} = ( h_{(j+1) k} )_{k \in [p_{j+2}]}$ takes input from $[0,1]^{t_{j+1}}$, we need to further transform $\hat h_{jk}$ so that it takes value in $[0,1]$. In particular, we define $\sigma (u) = 1 - (1- u)_{+} = \min\{\max\{u, 0\}, 1\}$ for any $u \in \RR$. Note that $\sigma$ can be represented by a two-layer ReLU network with four nonzero weights. Then we define $\tilde h_{jk} = \sigma\circ  \hat h _{jk} $ and $\tilde h_j= ( \tilde h_{jk} )_{k \in [p_{j+1}]}$.  Note that by the definition of $\tilde h_{jk}$, we have $\tilde h_{jk} \in \cF(L_j +2, \{t_j, \tilde d_j , \ldots, \tilde d_j, 1\} , \tilde s_j+4), $ which yields
 \#\label{eq:tilde_hj_network}
 \tilde h_j \in \cF \bigl( L_j +2 , \{t_j, \tilde d_j\cdot p_{j+1}, \ldots, \tilde d_j \cdot p_{j+1}, p_{j+1}\} ,  (\tilde s_j+4) \cdot p_{j+1} \bigr).
 \#
Moreover, since both  $\tilde h_{jk} $ and $h_{jk}$ take value in $[0,1]$, by \eqref{eq:some_bound} we have
 \#\label{eq:one_term_approx_err}
 \| \tilde h_{jk} - h_{jk} \|_{\infty} &= \| \sigma \circ  \hat  h_{jk} - \sigma \circ h_{jk} \|_{\infty} \leq \| \hat h_{jk} - h_{jk} \|_{\infty}\notag\\
 & \leq (2 W + 1) \cdot 6^{t_j} \cdot  N \cdot  n^{-\eta}   + W \cdot 3^{\beta_j} \cdot N^{ - \beta_j / t_j},
 \#
 where the constant $W$ is defined in \eqref{eq:define_const_Q}. Since we can set the constant $\eta$ in \eqref{eq:one_term_approx_err}  to be sufficiently large,  the second term on the right-hand side of \eqref{eq:one_term_approx_err} is the leading term asymptotically, that is,
 \#\label{eq:trash}
  \| \tilde h_{jk} - h_{jk} \|_{\infty}  \lesssim       N^{ - \beta_j / t_j } .
 \#
Thus, in the first step, we have shown that there exists  $\tilde h_{jk} \in \cF(L_j +2, \{t_j, \tilde d_j , \ldots, \tilde d_j, 1\} , \tilde s_j+4)  $ satisfying \eqref{eq:trash}.

\vspace{5pt}
{\noindent \bf Step (ii).} In the second step, we stack $\tilde h_j$ defined in \eqref{eq:tilde_hj_network} to   approximate $f(\cdot, a)$ in \eqref{eq:composition_form}.
 Specifically, 
  we define $  \tilde f \colon \cS \rightarrow \RR$ as $\tilde f = \tilde h_{q} \circ \cdots \circ \tilde h_1$, which falls in the function class
\#\label{eq:inter_composite_class}
\cF   (\tilde L, \{r, \tilde d,\ldots, \tilde d, 1\}, \tilde s  ),
\#
where we define $\tilde L = \sum_{j=1}^ q (L_j +2) $, $\tilde d = \max _{j \in [q]} \tilde d_j \cdot  p_{j+1}   $, and $\tilde s = \sum_{j=1}^q (\tilde s_j +4) \cdot p_{j+1}$. Recall that $L_j$ is defined in \eqref{eq:lj}. Then when $n $ is sufficiently large, we have
\#\label{eq:tilde_L_bound}
\tilde L  & \leq \sum_{j=1} ^q \bigl \{  8  +\eta \cdot  ( \log _2 n + 6) \cdot \bigl [ 1 + \lceil \log _2 (t_j + \beta_j ) \rceil   \bigr ]  \big\}  \notag \\
&\lesssim \sum_{j=1}^q \eta \cdot  \log_2 (t_j + \beta_j) \cdot \log _2 n \lesssim (\log n)^{1+ \xi} ,
\#
where $\xi >0 $ is an absolute constant. Here the last inequality follows from \eqref{eq:CG_param}.
 Moreover, for $\tilde d$ defined  in \eqref{eq:inter_composite_class}, by \eqref{eq:CG_param} we  have  \#\label{eq:tilde_d_bound}
  N \cdot    \max_{j \in [q]} \{ p_{j+1} \cdot (t_j + \beta_j)  \}    \lesssim \tilde d \leq  6 \cdot  N \cdot \bigl (   \max_{j \in [q]} p_j \bigr ) \cdot \big[  \max _{j \in [q] }( t_j  + \beta_j ) \bigr ]    \lesssim N \cdot (\log n )^{2 \xi}    .
 \#
 In addition, combining \eqref{eq:sj_bound},  \eqref{eq:CG_param}, and the fact that $t_j \leq p_j$, we obtain
 \#\label{eq:tilde_s_bound}
 \tilde s &  \lesssim N \cdot \log n  \cdot  \biggl [ \sum_{j=1}^q   p_{j+1} \cdot (t_j + \beta_j + 1) ^{3 + t_j} \bigg]   \notag \\
 &  \lesssim  N   \cdot \log n \cdot  \bigl ( \max_{j \in [q]} p_j  \bigl )  \cdot  \biggl [ \sum_{j=1}^q    (t_j + \beta_j + 1) ^{3 + t_j} \bigg]  \lesssim N \cdot ( \log n )^{1+ 2\xi}.
  \#

 \vspace{5pt}
 {\bf \noindent Step (iii).} In the last step,   
 we show that the function class in \eqref{eq:inter_composite_class} can be embedded in $\cF(L^*, \{d_j^*\}_{j=1}^{L^*+1}, s^*  )$ and characterize the final approximation bias, where $L^*$, $\{d_j^*\}_{j=1}^{L^*+1}$, and $s^*$ are specified in \eqref{eq:dnn_hyperparam}.
To this end, we set
 \#\label{eq:choice_N}
  N = \bigl \lceil \max _{1\leq j \leq q} C \cdot n^{t_j / ( 2 \beta_j^* + t_j )}  \bigr\rceil ,
  \#
 where the absolute constant  $C > 0$ is sufficiently large.
Note that we define  $\alpha^* = \max_{ j \in [q] } t_j / (2 \beta_j^* + t_j ) $. Then \eqref{eq:choice_N} implies that $N \asymp n^{\alpha^*}$.  
 When $n$ is sufficiently large, it holds that $N \geq \max  \{  (\beta+1)^{t_j} , (W+1) e^{t_j}  \} $ for all $j \in [q]$.  
 When $\xi^*$ in \eqref{eq:dnn_hyperparam} satisfies $\xi^* \geq 1+ 2 \xi$, by \eqref{eq:tilde_L_bound} we have
 \$ 
 \tilde L \leq L^* \lesssim (\log n)^{\xi^*}.
 \$
 In addition, \eqref{eq:tilde_d_bound} and \eqref{eq:dnn_hyperparam} implies that we can set $d_j^* \geq \tilde d $ for all $j \in [L^*]$.
Finally, by  \eqref{eq:tilde_s_bound} and \eqref{eq:choice_N}, we have
$
\tilde s  \lesssim n^{\alpha^*} \cdot  ( \log n )^{ \xi^*}   ,
$
which implies $\tilde s + (L^* - \tilde L) \cdot r \leq s^*$.
For   an $\tilde L $-layer  ReLU network  in \eqref{eq:inter_composite_class}, we can make it an $L^*$-layer ReLU network by inserting $L^* - \tilde L$ identity layers, since the inputs of each layer are nonnegative. Thus,  ReLU networks in \eqref{eq:inter_composite_class} can be embedded in
\$
  \cF \bigl [  L^*,  \{r, r, \ldots, r, \tilde d,\ldots, \tilde d, 1\}, \tilde s + (L^* - \tilde L) d \bigr ],
\$
which is a subset of
 $\cF(L^*, \{d_j^*\}_{j=1}^{L+1}, s^*  )$ by \eqref{eq:dnn_hyperparam}.

 To obtain the approximation error $\| \tilde f - f (\cdot , a) \|_{\infty}$,  we define $G_j = h_j \circ \cdots \circ h_1$ and $\tilde G_j = \tilde h_j  \circ \cdots \circ \tilde h_1$ for any $j  \in [q]$.  By triangle inequality, for any $j > 1$ we have
 \#\label{eq:approx_one_step10}
  \| G_j - \tilde G_j  \|_{\infty } &  \leq    \| h_j \circ \tilde G_{j-1}   - h_j \circ G_{j-1}    \|_{\infty} +   \|  \tilde h_j \circ \tilde  G_{j-1}  -  h_j \circ \tilde  G_{j-1}   \|_{\infty}  \notag \\
 &   \leq W \cdot  \|  G_{j-1} - \tilde G_{j-1}  \|_{\infty} ^{\beta_j \wedge 1 } +     \| h_j - \tilde h_j   \|_{\infty} ,
 \#
 where the second inequality holds since $h_j$ is H\"older smooth.
To simplify the notation,  we define
 $\lambda_j = \prod_{\ell = j+1}^q  ( \beta_{\ell } \wedge 1) $  for any $j \in [q-1]$, and set $\lambda_q = 1$. By applying recursion to \eqref{eq:approx_one_step10}, we obtain
 \#\label{eq:approx_err_ff}
 \|   f (\cdot , a) - \tilde f \|_{\infty} =  \| G_q - \tilde G_q \|_{\infty} \leq  W  \sum_{j=1}^q  \| \tilde h_j - h_j \|_{\infty}^{\lambda_j},
 \#
 where the constant $W$ is defined in \eqref{eq:define_const_Q}. Here in \eqref{eq:approx_err_ff} we use the fact that  $(a+ b)^{\alpha} \leq a^{\alpha} + b^{\alpha}$ for all $ \alpha \in [0,1]$ and $a, b > 0$.

 In the sequel, we combine  \eqref{eq:bound_approx_F0}, \eqref{eq:one_term_approx_err}, \eqref{eq:approx_err_ff}, and \eqref{eq:choice_N} to obtain the final bound on $\omega (\cF_0)$.
    Also note that
 $
 \beta_j^* = \beta_j \cdot \prod_{\ell = j+1}^q  ( \beta_{\ell } \wedge 1) = \beta_j \cdot \lambda_j
 $
 for all $j \in [q-1] $. Thus we have $\beta_j^* = \beta_j \cdot \lambda_j$ for all $j \in [q]$. Combining \eqref{eq:approx_err_ff} and  \eqref{eq:trash}, we have
   \#\label{eq:trash2}
    \|   f (\cdot , a) - \tilde f \|_{\infty}  \lesssim  \sum _{j  =1}^q \bigl ( N^{ - \beta_j / t_j } \bigr )^{ \lambda_j} = \sum _{j =1}^q  N^{- \beta_j^* / t_j} \lesssim \max _{j \in [q] } N^{- \beta_j^* / t_j } .
   \#
   Thus, we combine  \eqref{eq:bound_approx_F0}, \eqref{eq:choice_N}, and \eqref{eq:trash2} to obtain
   \#\label{eq:trash3}
   \omega (\cF_0) \leq  \bigl ( \max _{j \in [q] } N^{- \beta_j^* / t_j } \bigr )^2  \asymp \max_{j \in [q] } n^{-2 \beta_j^* / ( 2\beta_j^*  + t_j ) } = n^{\alpha^* -1}.
   \#

 As the final step of the proof, it remains to control the covering number of $\cF_0$ defined in \eqref{eq:define_cF}. By definition, for any $f \in \cF_0$, we have $f (\cdot , a) \in \cF(L^*, \{d_j^*\}_{j=1}^{L^*+1}, s^*  ) $  for any $a \in \cA$.  For notational simplicity, we denote   by $\cN_{\delta}$ the $\delta$-covering of $ \cF(L^*, \{d_j^*\}_{j=1}^{L^*+1}, s^*  )$, that is, we define
 \$
 \cN_{\delta} =  \cN \bigl[ \delta,  \cF(L^*, \{d_j^*\}_{j=1}^{L^*+1}, s^*  ), \| \cdot \|_{\infty} \bigr].
 \$
 By the definition of covering, for any $f \in \cF_0$ and any $a\in \cA$,   there exists $g_a \in \cN _{\delta} $ such that
 $ \| f (\cdot , a) - g_a \|_{\infty} \leq \delta$.
  Then we define a function $g \colon \cS\times \cA \rightarrow \RR$ by $g(s ,a) = g_a(s )$ for any $(s,a) \in \cS \times \cA$.
  By the definition of $g$, it holds that $\| f - g\|_{\infty} \leq \delta$. Therefore,   the cardinality of $\cN( \delta , \cF_0, \| \cdot \| _{\infty}) $ satisfies
  \#\label{eq:cardinality_bound}
  \bigl |\cN( \delta , \cF_0, \| \cdot \|_{\infty}  )  \bigr | \leq | \cN_{\delta} | ^{|\cA|}.
  \#
Now  we utilize the following lemma in \cite{anthony2009neural} to obtain an upper bound of  the cardinality of $\cN_{\delta}$.

  \begin{lemma} [Covering Number of ReLU Network] \label{lemma:cover_number}
Recall that  the family of ReLU networks $\cF(L, \{d_j\}_{j=0}^{L+1}, s, V_{\max}) $  is given in Definition \ref{def:relu}. Let $D = \prod_{\ell=1}^{L+1} ( d_{\ell} +1 )$. For any $\delta > 0$, we have
\$
\log \Bigl | \cN\bigl [ \delta, \cF(L, \{d_j\}_{j=0}^{L+1}, s, V_{\max} ), \| \cdot \|_{\infty}  \bigr ] \Bigr | \leq  (s +1) \cdot \log \bigl [ 2 \delta ^{-1} \cdot (L+1) \cdot D^2 \bigr ] .
\$
\end{lemma}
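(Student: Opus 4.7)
The plan is to prove the covering-number bound by the classical two-step parameter-covering strategy: first establish Lipschitz continuity of the network output with respect to its weights, then enumerate parameter configurations via a sparsity-pattern count combined with a discretization of the nonzero weights.

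First I would derive the key Lipschitz estimate: for any two networks $f$ and $f'$ in $\cF(L, \{d_j\}_{j=0}^{L+1}, s, V_{\max})$ with parameter tuples $\{\tilde W_\ell\}$ and $\{\tilde W_\ell'\}$ satisfying $\max_\ell \|\tilde W_\ell - \tilde W_\ell'\|_\infty \leq \epsilon$, I claim $\|f - f'\|_\infty \leq \epsilon \cdot (L+1) \cdot D$. The argument proceeds by induction on the layer index: since ReLU is $1$-Lipschitz, $\|\tilde W_\ell\|_\infty \leq 1$ in every layer by Definition \ref{def:relu}, and inputs lie in $[0,1]^r$, the perturbation at layer $\ell$ picks up a multiplicative factor of at most $d_{\ell-1}+1$ (the $+1$ accounting for the bias) together with a direct additive contribution $\epsilon$ from perturbing the current layer's weights. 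Unrolling the recursion across the $L+1$ layers telescopes the amplification into the factor $D = \prod_{\ell=1}^{L+1}(d_\ell+1)$, with an extra summation factor bounded by $L+1$ from accumulating the per-layer perturbations.

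Next I would count parameter configurations in two stages. The total number of parameter slots is $P = \sum_{\ell=1}^{L+1} d_\ell (d_{\ell-1}+1)$, which is controlled by $D$; since each $f \in \cF$ has at most $s$ nonzero weights, its support pattern is one of at most $\binom{P}{s} \leq P^s$ choices. For each fixed support, the nonzero parameters live in $[-1,1]^s$ by the constraint $\|\tilde W_\ell\|_\infty \leq 1$, and an $\ell_\infty$-grid of mesh $\epsilon$ covers this box using at most $(2/\epsilon)^s$ points. Setting $\epsilon = \delta/[(L+1) D]$ and combining with the Lipschitz bound produces a $\delta$-cover of $\cF$ in $\|\cdot\|_\infty$ of cardinality at most $P^s \cdot [2(L+1)D/\delta]^s$. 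Taking logarithms and using $P \leq D$ yields a bound of the form $s \log[2(L+1) D^2/\delta]$; a harmless slack in the multiplicative constants absorbs the combinatorics into the exponent $s+1$, giving exactly the claimed estimate.

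The main obstacle is the Lipschitz bound in the first step: a naive layerwise estimate would accumulate an exponential-in-$L$ product of layer widths. Obtaining only the polynomial factor $D$ rather than an exponential one relies crucially on the normalization $\|\tilde W_\ell\|_\infty \leq 1$, which allows the $\ell_\infty$-to-$\ell_\infty$ operator norm of each weight matrix to be bounded by $d_{\ell-1}$, and on the fact that intermediate activations remain uniformly bounded because inputs lie in $[0,1]^r$ and the output clipping enforced by $\max_j \|f_j\|_\infty \leq V_{\max}$ prevents the induction hypothesis from degrading at the final layer.
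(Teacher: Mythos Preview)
The paper gives no argument of its own for this lemma; it simply cites Theorem 14.5 of \cite{anthony2009neural}. Your parameter-covering strategy --- Lipschitz continuity of the network in its weights, followed by enumeration of sparsity patterns and a grid over the nonzero entries --- is precisely the standard proof of such bounds (it is also how Lemma 5 of \cite{schmidt2017nonparametric} is argued). Two small corrections: first, your inequality $P \leq D$ fails in general because $D = \prod_{\ell=1}^{L+1}(d_\ell+1)$ omits $d_0$; the correct bound is $P \leq (d_0+1)D$, and the Lipschitz constant likewise involves $\prod_{\ell=0}^{L}(d_\ell+1)$, so both pick up the missing input-dimension factor and together account for the squared product in the statement. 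Second, the constraint $\max_j \|f_j\|_\infty \leq V_{\max}$ in Definition \ref{def:relu} is a membership condition on the class, not a clipping applied during forward evaluation, so it plays no role in the Lipschitz induction --- only the weight bound $\|\tilde W_\ell\|_\infty \leq 1$ and the boundedness of the input domain are needed there.
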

\begin{proof}
See Theorem 14.5 in \cite{anthony2009neural} for a detailed proof.
\end{proof}

Recall that we
  denote $\cN( 1/ n  , \cF_0, \| \cdot \| _{\infty} )$ by $N_0$ in \eqref{eq:n0}.
By combining \eqref{eq:cardinality_bound} with Lemma \ref{lemma:cover_number} and setting $\delta = 1/n$, we obtain  that
\$
\log  N_0  \leq |\cA| \cdot \log  | \cN_{\delta} |   \leq | \cA | \cdot   (s^* +1) \cdot \log \bigl [ 2 n  \cdot (L^*+1) \cdot D^2 \bigr ],
 \$
 where $D = \prod_{\ell=1}^{L^*+1} ( d_{\ell}^* +1 )$. By the choice of $L^*$, $s^*$, and $\{d_j^*\}_{j=0}^{L^*+1}$ in  \eqref{eq:dnn_hyperparam}, we conclude that
 \#\label{eq:covering_final}
 \log  N_0   \lesssim | \cA |  \cdot s^* \cdot L^* \max_{j \in [L^*]} \log (d_j^*) \lesssim  n^{\alpha^*} \cdot (\log n)^ {1 + 2\xi^*}  .
 \#
 Finally, combining \eqref{eq:err_prop_final}, \eqref{eq:apply_thm_each_term},  \eqref{eq:trash3}, and \eqref{eq:covering_final}, we conclude the proof of Theorem \ref{thm:main}.
\end{proof}

\section{Conclusion}
We study deep Q-network  from the statistical perspective.  Specifically, by neglecting the computational issues, we consider  the fitted Q-iteration algorithm with ReLU networks, which can be viewed as a modification of DQN that  fully captures its  key features. Under mild assumptions,  we show that DQN creates a sequence of policies whose corresponding value functions converge  to the optimal value function, when both the sample size and the number of iteration go to infinity.   Moreover, we establish a precise characterization of both the statistical and the algorithmic rates of convergence.  As a byproduct,   our results  provide theoretical justification for the trick of using a target network in DQN.  Furthermore, we extend DQN to two-player zero-sum Markov games by proposing the Minimax-DQN algorithm. Utilizing the analysis of DQN, we establish theoretical guarantees for Minimax-DQN. To further extend this work, one future direction is to analyze  reinforcement learning methods targeting at MDP  with continuous action spaces, e.g.,  example, soft Q-learning \citep{haarnoja2017reinforcement} and deep deterministic policy gradient (DDPG) \citep{lillicrap2015continuous}.  Another promising direction is to combine results on optimization for deep learning with our statistical analysis to gain  a  unified understanding of the statistical and computational aspects of DQN.


\newpage 

\appendix{}

\section{Deep Q-Network} \label{sec:state_dqn}
We first present the DQN algorithm for MDP in details, which is proposed by \cite{mnih2015human} and adapted here to discounted MDP.  As shown in Algorithm \ref{algo:dqn} below, DQN features two key tricks  that lead to its empirical success, namely, experience replay and target network.
 \begin{algorithm}  [h]
\caption{Deep Q-Network (DQN)} 
\label{algo:dqn}  
\begin{algorithmic} 
 \STATE{{\textbf{Input:}} MDP $(\cS, \cA, P, R, \gamma)$,    replay memory $\cM$,   number of iterations $T$,  minibatch size  $n$,  exploration probability $\epsilon \in (0,1)$,  a family of deep Q-networks $ Q_{\theta} \colon \cS \times \cA   \rightarrow \RR$, an integer  $T_{\text{target}}$ for updating the target network,  and a sequence  of stepsizes $\{\alpha_t \}_{t \geq 0}$.}
 	\STATE{Initialize the replay memory $\cM$ to be empty.}
 	\STATE{Initialize the Q-network with random weights $ \theta$.}
 	\STATE{Initialize the weights of the target network with $ \theta^\star = \theta$. }
	\STATE{Initialize the initial state $S_0$.}
	\FOR {$ t = 0, 1, \ldots, T$} 
	\STATE{With probability $\epsilon$, choose $A_t $ uniformly at random from $\cA$, and with probability $1 - \epsilon$, choose $A_t $ such that $Q_ \theta (S_t, A_t ) = \max_{a \in \cA} Q_ \theta (S_t, a)$.  } 
	\STATE{Execute $A_t$ and observe reward $R_t$ and the next state $S_{t+1}$.}
	\STATE{Store transition $(S_t, A_t, R_t, S_{t+1})$ in $\cM$.} 
	\STATE{{ Experience replay:} Sample random minibatch of transitions  $\{(s_i, a_i, r_i, s_{i}' )\}_{i\in [n]}$ from $\cM$. }
\STATE{For each $i \in [n]$, compute the target  $Y_i =  r_i + \gamma \cdot \max _{a\in \cA} Q_{\theta^\star} (s_i', a )
$.}
\STATE{Update the Q-network: Perform a gradient descent step $$
\theta \leftarrow \theta - \alpha _t \cdot \frac{1}{n}\sum_{i\in[n]} \bigl[ Y_i - Q_{\theta} (s_i, a_i ) \bigr]\cdot \nabla _{\theta} Q_{\theta} (s_i, a_i). $$}
\STATE{Update the target network: Update $\theta^\star \leftarrow   \theta$ every $T_{\text{target} }$ steps.}
\ENDFOR
\STATE{Define policy $\overline \pi$ as the greedy policy with respect to $Q_{\theta}$.}
\STATE{{\textbf{Output:}} Action-value function    $Q_{\theta} $ and policy $\overline \pi$.}
\end{algorithmic}
\end{algorithm}

Furthermore, in the following, we present the details of the Minimax-DQN algorithm that extends DQN to  two-player zero-sum Markov games introduced in \S\ref{sec:zerosum}.  Similar to DQN, this algorithm also utilizes the experience replay and target networks. The main difference is that here the target $Y_i$   in \eqref{eq:minmax_tgt} is obtained by  solving a zero-sum matrix game. In Algorithm \ref{algo:m_dqn} we present the algorithm for the second player, which can be easily modified for   the first player.  We note that for the second player, similar to \eqref{eq:equi_policy},the equilibrium joint policy is defined as 
\#
\bigl [ \tilde \pi_{Q}(\cdot\given s), \tilde \nu_Q (\cdot \given s) \bigr ] =\argmax_{\nu'\in\cP(\cB)} \argmin_{\pi'\in\cP(\cA)} \EE_{a\sim \pi', b\sim \nu'} \big[ Q (s, a,b)\big], \qquad \forall s\in \cS. \label{eq:equi_policy2}
\#

\begin{algorithm}[h]
	\caption{Minimax Deep Q-Network (Minimax-DQN) for the second  player} 
	\label{algo:m_dqn}  
	\begin{algorithmic} 
		\STATE{{\textbf{Input:}} Zero-Sum Markov game $(\cS, \cA, \cB,  P, R, \gamma)$,    replay memory $\cM$,   number of iterations $T$,  minibatch size  $n$,  exploration probability $\epsilon \in (0,1)$, a family of deep Q-networks $ Q_{\theta} \colon \cS \times \cA \times \cB \rightarrow \RR$, an integer  $T_{\text{target}}$ for updating the target network,  and a sequence  of stepsizes $\{\alpha_t \}_{t \geq 0}$.}
		\STATE{Initialize the replay memory $\cM$ to be empty.}
		\STATE{Initialize the Q-network with random weights  $ \theta$.}
		\STATE{Initialize the weights of the target network  by letting  $ \theta^\star = \theta$. }
		\STATE{Initialize the initial state $S_0$.}
		\FOR {$ t = 0, 1, \ldots, T$} 
		\STATE{With probability $\epsilon$, choose $B_t $ uniformly at random from $\cB$, and with probability $1 - \epsilon$, sample $B_t$ according to the equilibrium policy $   \tilde \nu_{Q_{\theta} }(\cdot \given S_t)$ defined in \eqref{eq:equi_policy2}.  } 
		\STATE{Execute $B_t$ and observe the first player's action $A_t$, reward $R_t$ satisfying $-R_t  \sim R(S_t, A_t, B_t) $,  and the next state $S_{t+1} \sim P(\cdot \given S_t, A_t, B_t)$.}
		\STATE{Store transition $(S_t, A_t, B_t,   R_t, S_{t+1})$ in $\cM$.} 
		\STATE{{ Experience replay:} Sample random minibatch of transitions  $\{(s_i, a_i, b_i,  r_i, s_{i}' )\}_{i\in [n]}$ from $\cM$. }
		\STATE{For each $i \in [n]$, compute the target $$
			Y_i =   r_i + \gamma \cdot\max_{\nu'\in\cP(\cB)}  \min_{\pi'\in\cP(\cA)}    \EE_{a  \sim \pi',b  \sim \nu' }   \bigl [Q_{\theta^*}(s_{i}' ,  a ,b ) \bigr ]. 
			$$}
		\STATE{Update the Q-network: Perform a gradient descent step $$
			\theta \leftarrow \theta - \alpha _t \cdot \frac{1}{n}\sum_{i\in[n]} \bigl[ Y_i - Q_{\theta} (s_i, a_i , b_i ) \bigr]\cdot \nabla _{\theta} Q_{\theta} (s_i, a_i, b_i ). $$}
		\STATE{Update the target network: Update $\theta^\star \leftarrow   \theta$ every $T_{\text{target} }$ steps.}
		\ENDFOR
	 
		\STATE{{\textbf{Output:}} Q-network   $Q_{\theta} $ and equilibrium  joint policy with respect to $Q_{\theta} $.}
	\end{algorithmic}
\end{algorithm}


\section{Computational Aspect of DQN}\label{sec:computation}

Recall that  in  Algorithm \ref{algo:fit_Q}  we assume     the  global optima of the  nonlinear least-squares problem in \eqref{eq:target_loss}   is   obtained in each iteration. We make such an assumption as our focus is on the statistical analysis.  In terms of optimization, it has been  shown recently that, when the neural network is overparametrized,  (stochastic) gradient
descent converges to the global minima of the empirical  function.  Moreover, the generalization error of the obtained neural network can also be established.
The intuition behind these results is that, when the neural network is overparametrized, it behaves similar to the     random feature model \citep{rahimi2008random, rahimi2009weighted}.
See, e.g., \cite{du2018gradient2, du2018gradient, zou2018stochastic, chizat2018note, allen2018learning, allen2018convergence,  jacot2018neural, cao2019generalization, arora2019fine,  ma2019comparative,mei2019mean, yehudai2019power, bietti2019inductive, yang2019fine, yang2019scaling, gao2019convergence, bai2019beyond, huang2020deep} and the references therein. Also see \cite{fan2019selective} for a detailed survey.
In this section, we make an initial attempt in providing a unified statistical and computational analysis of DQN. 

In the sequel, we consider the reinforcement learning problem with the state space $\cS = [0,1]^r$ and a finite action space $ \cA$.
To simplify the notation, we represent  action $a$ using one-hot embedding and thus identify it as an element in $\{0,1\} ^{| \cA|} \subseteq \RR^{|\cA|}$.
In practice,   categorical actions are often embedded into the Euclidean space \citep{arnold2015reinforcement}.
Thus, we can pack the state $s$  and the action $a$ together and obtain a vector $(s,a) $  in $\RR^{d }$, where  we denote $r+ | \cA| $ by $d$. Moreover, without loss of generality, we assume that $\| ( s,a) \|_2 \leq 1$. 

We represent the Q-network by the family of  two-layer neural networks
\#\label{eq:two_layer_nn}
Q \bigl (s ,a ; b, W \bigr ) = \frac{1}{\sqrt{ 2m}}   \sum_{j =1}^{2m}  b_j  \cdot \sigma [  W_{j}^\top (s,a) ], \qquad \forall (s,a) \in \cS\times \cA.
 \#
Here $2 m$ is the number of neurons,  $b_j \in \RR$ and $ W_j \in \RR^{d  }$ for all $j \in [2m]$, and $\sigma(u) = \max \{u, 0\}$ is the ReLU activation function. Here
$ b = (b_1, \ldots, b_{2m})^\top \in \RR^{2m}$ and $W = (W_1, \ldots, W_{2m}) \in \RR^{d \times {2m}} $ are
 the weights of the neural network.
 
 For such class of neural networks, for any $k\geq 1$, in 
$k$-th iteration of the neural FQI algorithm, 
  the optimization problem in \eqref{eq:target_loss}
 becomes
\#\label{eq:opt}
\minimize _{b, W }   \frac{1}{2n}\sum_{i=1}^n \bigl [ Y_i - Q(S_i, A_i; b, W  ) \bigr  ]^2,
\#
where $Y_i  = R_i + \gamma \cdot \max_{a \in \cA } \tilde Q_{k-1}  (S_i', a)$ is the target and $\tilde Q_{k-1}  $ is the Q-network computed in the previous iteration.  Notice that this problem is  a  least-squares regression with overparameterized neural networks. 
For computational efficiency, we propose to  solve   \eqref{eq:opt} via stochastic gradient descent (SGD).  
Specifically, 
in each iteration of SGD, we sample a fresh observation $(S, A, R, S') $  
with $(S,A) $ drawn  from the sampling distribution $ \sigma$, $R \sim R(\cdot \given S,A)$, and $S' \sim P(\cdot \given S,A)$. 
Then an estimator of the gradient is computed based on  $(S, A, R, S') $, which is used to update the network parameters. We run the SGD updates for a total of $n$ iterations and denote the output by $\tilde Q_k$.

Besides,  in each FQI-step, 
  we initialize  the parameters  via the symmetric initialization scheme  \citep{gao2019convergence, bai2019beyond} as follows. 
   For any $ j \in [m]$, we set  $b_j \overset{\text{i.i.d.}}{\sim}  \textrm{Unif}(\{ -1, 1\}) $ and $W_{j } \overset{\text{i.i.d.}}{\sim}  N(0, I_d / d)$, where $I_d
$ is the identity matrix in $\RR^d$.
For any $j \in \{ m+1, \ldots, 2m\}$, we set $b_j = - b_{j-m}$ and $W_j = W_{j-m}$. 
We remark that such 
 initialization implies that  the initial $Q$-network is a zero function, which is used only to simply the theoretical analysis.
Besides, 
 for ease of presentation, during training we  fix  the value of $ b $   at its   initial value  and only optimize over $W$. 
 We initialize $b$ and $W$ at the very beginning of our algorithm and in each FQI subproblem, we update the Q-network starting from the same initialization. 
 Hereafter, we denote the initial value of $W$ and $b$ by $W^{(0)} \in \RR^{ d\times 2m } $  and   $b^{(0)} \in \RR^{2m} $, respectively, 
 and let $Q(\cdot, \cdot; W)$ denote $Q(\cdot, \cdot; b^{(0)} , W)$. 
 In order to have bounded functions, we further  restrict the weight $W$ to a Frobenius norm ball centered at $W^{(0)} $ with radius $B> 0$, i.e., we define
 \#\label{eq:weight_ball_W}
 \cB_B = \bigl \{ W \in \RR^{d\times 2m} \colon \| W -  W^{(0)}  \|_{\fro} \leq B \bigr \} ,
 \#
 where $B $ is a  sufficiently large constant.
Thus, the population version of the $k$-th iteration of the FQI algorithm becomes 
\#\label{eq:new_opt}
\minimize_{W \in \cB_B} L ( W) =  \EE  \bigl \{  \bigl [ Y - Q(S,  A;   W  ) \bigr  ]^2 \big\} ,
\#
where $(S, A) \sim \sigma$ and $Y$ is computed using   $\tilde Q_{k-1}$. We solve this optimization problem via projected SGD, which generates a sequence of weight matrices  $\{ W^{(t)}  \}_{t\geq 0} \subseteq \cB_B$ satisfying
\#
  W ^{ (t)  }    = \Pi _{\cB_{B} } \Bigl [ W^{ (t -1) }    -     \eta \cdot  \bigl [ Y_t - Q\bigl (S_t,  A_t;  W^{ (t - 1 ) }  \bigr ) \bigr  ]\cdot \nabla_W  Q\bigl (S_t, A_t;    W ^{(t - 1) }   \bigr ) \Bigr ], \quad \forall t \geq 1, 
  \label{eq:pgd_algo}
\#
where  $ \Pi _{\cB_{B} }$ is the projection operator onto $\cB_{B}$ with respect to the Frobenius norm, $ \eta > 0$ is the step size, and $(S_t, A_t, Y_t)$ is a random observation. We present the details of fitted Q-iteration method with projected SGD in Algorithm~\ref{algo:fit_Q_overparam}.
  
 \begin{algorithm} [h]
\caption{Fitted Q-Iteration Algorithm with Projected SGD Updates} 
\label{algo:fit_Q_overparam} 
\begin{algorithmic} 
\STATE{{\textbf{Input:}} MDP $(\cS, \cA, P, R, \gamma)$, function class $\cF$, sampling distribution $\sigma$, number of FQI iterations $K$, number of SGD iterations $T$, the initial estimator $\tilde Q_0$.}
\STATE{Initialize the weights $b^{(0)}$ and $W^{(0)}$ of Q-network via the symmetric initialization scheme.}
\FOR{$k = 0, 1, 2, \ldots, K-1$}
\FOR{$t = 1, \ldots, T$}
\STATE{Draw an independent sample $ (S_t , A_t, R_t, S_t')  $ with $(S_t , A_t ) $ drawn from distribution $ \sigma$.} 
\STATE{Compute  $Y_t= R_t + \gamma \cdot \max _{a\in \cA} \tilde Q_k (S_t', a)$.}
\STATE{Perform projected SGD update
\$
\tilde W^{(t)} & \leftarrow  W^{ (t -1) }    -     \eta \cdot  \bigl [ Y_t - Q\bigl (S_t,  A_t;  W^{ (t - 1 ) }  \bigr ) \bigr  ]\cdot \nabla_W  Q\bigl (S_t, A_t;    W ^{(t - 1) }   \bigr )   \\
W^{(t)} & \leftarrow \Pi _{\cB_{B} } \bigl (  \tilde W^{(t)} \bigr ) = \argmin_{W \in \cB_B} \| W - \tilde W^{(t) } \|_{\fro}  .
\$ }
\ENDFOR
\STATE{Update the action-value function $ \tilde Q_{k+1} (\cdot, \cdot)  \leftarrow Q(\cdot, \cdot; W_{k+1} ) $ where $W_{k+1} = T^{-1 } \sum_{t=1}^T W^{(t) } $. }
\ENDFOR
\STATE{Define policy $\pi_K$ as the greedy policy with respect to $\tilde Q_{K}$.}
\STATE{{\textbf{Output:}} An estimator $\tilde Q_{K }$ of $Q^*$ and policy $\pi_K$.}
\end{algorithmic}
\end{algorithm}

 To understand the convergence of the projected SGD updates in \eqref{eq:pgd_algo}, we utilize the fact that the dynamics of training overparametrized  neural networks is  captured by the neural tangent kernel  \citep{jacot2018neural} when the width is sufficiently large. 
 Specifically, 
 since $\sigma(u) = u \cdot \ind\{ u > 0\}$, the gradient of 
 the Q-network in \eqref{eq:two_layer_nn} is given by 
 \#
 \nabla_{W_j } Q(s,a; b, W) &  =   1 / \sqrt{2m} \cdot   b_j  \cdot \ind \{   W_{j}^\top (s,a) > 0   \} \cdot (s,a)   ,    \qquad \forall j \in [2m].  \label{eq:grad_relu2}
 \#
  Recall that we initialize parameters $b$ and $W$ as $b^{(0)}$ and $W^{(0)}$ and that we only update $W$ during training. 
  We define a   function class $\cF_{B, m }^{(t)}$ as 
 \#\label{eq:Fbmt}
 \cF_{B, m} ^{(t)} = \biggl \{   \hat Q (s,a; W) =  \frac{1} { \sqrt{2m}}   \sum_{j =1}^{2m}  b_j^{(0)}   \cdot \ind \bigl  \{   (W_{j}^{(t)} ) ^\top (s,a) > 0   \bigr \} \cdot W_{j}^\top (s,a) \colon  W \in \cB_{B} \biggr \} .
 \#
 By \eqref{eq:grad_relu2}, for each function $\hat Q(\cdot, \cdot; W) \in \cF_{B, m }^{(t)}$, we can write it  as 
$$\hat Q(\cdot, \cdot; W) = Q(\cdot, \cdot; W^{(t)}) + \bigl \la \nabla _{W} Q(\cdot, \cdot; W^{(t)}) , W - W^{(t)} \bigr  \ra , \qquad \forall W \in \cB_B, $$
 which 
is the first-order linearization  of $Q(\cdot, \cdot; W^{(t)})$ at $W^{(t)}$. 
 Furthermore, since $B$ in \eqref{eq:weight_ball_W} is a constant, for each weight matrix $W$ in $ \cB_B$, when $m$ goes to infinity, $\|  W_j - W_j^{(0)} \|_2  $ would  be small for almost all $j \in [2m]$, which implies  that 
 $ \ind \{   W_{j} ^\top (s,a ) > 0\} = \ind\{ (W_j^{(0)} ) ^\top (s,a) > 0 \} $  holds with  high probability for all $j \in [2m]$ and $(s,a) \in \cS \times \cA$. 
 As a result, when $m$  is sufficiently large,  $ \cF_{B, m} ^{(t)}$ defined in \eqref{eq:Fbmt} is close to
 \#\label{eq:Fbm}
 \cF_{B, m}^{(0)}   =  \biggl \{    \hat Q(s,a; W) =  \frac{1} { \sqrt{2m}}   \sum_{j =1}^{2m}  b_j ^{(0)} \cdot \ind \bigl  \{   ( W_{j}^{(0) } ) ^\top (s,a) > 0   \bigr \} \cdot W_{j}^\top (s,a) \colon   W \in \cB_{B} \biggr \} ,
 \#
 where $b^{(0)}$ and $W^{(0)}$ are the initial parameters. 
 Specifically, as proved in Lemma A.2 in \cite{wang2019neural}, when the sampling distribution  $\sigma$ is regular in the sense that Assumption \eqref{assume:sampling} specified below is satisfied, 
 for any $W_1, W_2 \in \cB_{B}$, we have
\$
\EE_{\textrm{init}} \Bigl [ \bigl  \| \la \nabla_{W} Q (\cdot, \cdot ; W_1) -  \nabla_{W} Q (\cdot, \cdot ; W^{(0)}) , W_2 \bigr \ra \bigr \|_{\sigma} ^2   \Bigr ]  = \cO( B^3 \cdot m^{- 1/2}   ) ,
\$
where $\EE_{\textrm{init}} $ denotes that the expectation is taken with respect to the initialization of the network parameters. Thus, when the network width $2m$ is sufficiently large such that $B^3 \cdot m^{- 1/2} = o(1)$, the linearized  function classes $\{ \cF_{B, m}^{(t)}\}_{t\in [T]} $ are all close to $\cF_{B, m}^{(0)}$.

 To simplify the notation, for 
 $b \in \{ -1, 1\}$ and $ W \in \RR^d$, we define feature mapping $\phi (\cdot, \cdot; b, W)\colon \cS \times \cA \rightarrow \RR^d$ as 
 \#\label{eq:tangent_features} 
 \phi(s,a; b, W) = b \cdot \ind \{ W^\top (s,a) > 0 \} \cdot (s,a) , \qquad  \forall (s,a) \in  \cS\times \cA.
 \#
 Besides, 
 for all $j \in [2m]$, we let $\phi_j$ denote 
  $\phi(\cdot, \cdot; b^{(0)}_j, W^{(0)}_j)$.  
 Due to the symmetric initialization scheme, 
 $\{ \phi_j  \}_{j\in [m]}$ are i.i.d. 
 random feature functions and $\phi_j    = -  \phi_{j+m}   $ for all $j \in [m]$. 
 Thus, each $ \hat Q(\cdot ,\cdot ; W) $ in \eqref{eq:Fbm} can be equivalently written as 
 \#\label{eq:simplify_Q_lin}
 \hat  Q(s,a ; W) =  \frac{1} { \sqrt{2m}}   \sum_{j =1}^{2m} \phi_j (s,a) ^\top W_j  = \frac{1} { \sqrt{2m}}   \sum _{j=1}^m \phi_j(s,a) ^\top ( W_j - W_{j+m}). 
 \#
 Let $ W_j ' = ( W_j - W_{j+m} ) /\sqrt{2}$. Since $W \in \cB_B$, we have 
 \#\label{eq:bound_w_norm_nn}
 \sum_{j=1}^m \|W_j '\|_2^2 =   \frac{1}{2 } \sum_{j=1}^m \bigl  \| (W_j  - W_{j}^{(0)}  ) -  ( W_{j+m} - W_{j+m} ^{(0)}  ) \bigr   \|_2^2 \leq \sum_{j=1}^{2m}    \| W_j  - W_{j}^{(0)}  \|^2 \leq B^2,   
 \# 
 where we use the fact that $W_j^{(0)} = W_{j+m}^{(0)}$ for all $j \in [m]$. 
 Thus, combining \eqref{eq:simplify_Q_lin} and \eqref{eq:bound_w_norm_nn}, we conclude that 
 $\cF_{B, m}^{(0)}$ in  \eqref{eq:Fbm} is a subset of $\cF_{B, m}$ defined as 
 \# \label{eq:Fbm0}
 \cF_{B, m}  =  \biggl \{   \hat Q(s,a; W) =    \frac{1}{\sqrt{m} }   \sum_{j =1}^m  \phi_j(s,a) ^\top W_j  \colon W \in \{  W' \in \RR^{d\times m} \colon  \| W'  \|_{\fro}  \leq B \}  \biggr \} .
 \#
 Notice that each function in  $ \cF_{B, m}$ is a linear combination of $m$ i.i.d. random features. 
 In particular, let $\beta \in  \textrm{Unif}(\{ -1, 1\})$ and $\omega  \sim N(I_d / d)$ be two independent random variables and let $\mu$ denote their joint distribution. 
 Then the random feature $\phi(\cdot, \cdot; \beta, \omega)$ induces a 
 reproducing kernel
 Hilbert space $\cH$ \citep{rahimi2008random, rahimi2009weighted, bach2017equivalence} with  kernel $K\colon (\cS \times \cA)\times (\cS \times \cA) \rightarrow \RR$ given by 
 \#\label{eq:kernel}
 K\bigl [(s,a), (s',a') \bigr ] & = \EE_{\mu} \bigl  [ \phi(s,a; \beta, \omega) ^\top \phi(s,a; \beta, \omega)  \big ]  \notag \\
 & =  \EE_{\omega} \bigl [ \ind \{ \omega ^\top (s,a) > 0 \} \cdot \ind \{ \omega^\top (s',a') > 0 \} \cdot    (s,a) ^\top (s', a' )   \bigr ] .
 \#
Each function  in $\cH$ can be represented by a mapping $\alpha \colon \{-1, 1\} \times \RR^d \rightarrow \RR^d$ as  
\$
f_\alpha (\cdot, \cdot) = \int _{\{ -1, 1\} \times \RR^d} \phi(\cdot, \cdot ; b, W)^\top \alpha( b, W) ~\ud \mu(b, W) = \EE_{\mu} \big [ \phi (\cdot, \cdot; \beta, \omega) ^\top \alpha (\beta, \omega) \big ]  .
\$
For two functions $f_{\alpha_1}$ and $f_{\alpha_2}$ in $\cH$ represented by $\alpha_1$ and $\alpha_2$, respectively, 
their inner product is given by 
\$
\la f_{\alpha_1} , f_{\alpha_2} \ra _{\cH} = \int _{\{ -1, 1\} \times \RR^d} \alpha_1(b, W)^\top \alpha_2 (b, W) ~\ud \mu(b, W) = \EE_{\mu} \bigl [ \alpha_1 (  \beta, \omega) ^\top \alpha_2 (\beta, \omega)  \big ]. 
\$
We let $\| \cdot \|_{\cH}$ denote the RKHS norm of $\cH$. 
Then, when $m$ goes to infinity, $\cF_{B,m}$ in  \eqref{eq:Fbm0} converges to
the RKHS norm ball $\cH_B= \{ f\in \cH \colon \| f \|_{\cH} \leq B \}$.  

Therefore, from the perspective of neural tangent kernel, when the Q-network is represented by the class of  overparametrized neural networks given in \eqref{eq:two_layer_nn} with a sufficiently large number of neurons,   each population problem associated with each  FQI iteration in \eqref{eq:new_opt} becomes 
\$
\minimize_{Q \in \cH}  \EE  \bigl\{  [ Q(S,A) - Y ]^2  \bigr \} ,
\$
where the minimization is over a subset of $\cH_B$ as $\cF^{(0)}_{B,m}$ is a subset of  $\cF_{B, m}$.

Utilizing the connection between neural network training and RKHS, in the sequel, we provide a jointly statistical and computational analysis of Algorithm \ref{algo:fit_Q_overparam}. 
To this end, we define a function class $\cG_B$ as 
\#\label{eq:subset_rkhs}
\cG_B = \biggl \{  f_{\alpha} (s,a) =     \int _{\{ -1, 1\} \times \RR^d}  \phi (s,a; b,  W) ^\top \alpha (b, W) ~ \ud \mu(b, W)    \colon     \| \alpha(\cdot, \cdot )  \|_\infty   \leq B   /\sqrt{d}    \biggr \}.
\#
 That is, each function in $\cG_B$ is represented by a feature mapping $\alpha \colon \{ -1, 1\} \times \RR^d\rightarrow \RR^d $ which is almost surely bounded in the $\ell_{\infty}$-norm. Thus, $\cG_B$ is a strict subset of the RKHS-norm ball $\cH_B$. 
When $B$ is sufficiently large, $\cG_B$  is known to be a rich function class \citep{hofmann2008kernel}.  Similar to Assumption \ref{assume:closedness} in \S\ref{sec:theory}, 
we impose the following assumption on the Bellman optimality operator.

  \begin{assumption} [Completeness of $\cG_B$]\label{assume:Bellman_CGB}
   We assume that that $T Q(\cdot, \cdot ; W)  \in \cG_B$ for all for all $W \in \cB_{B}$, where $T$ is the Bellman optimality operator and $ Q(\cdot, \cdot ; W) $ is given in \eqref{eq:two_layer_nn}. 
        \end{assumption} 
 This assumption specifies that $T$ maps any neural network with weight matrix $W$ in $\cB_B$ to a subset $\cG_B$ of the RKHS $\cH$. When $m$ is sufficiently large, this assumption is similar to stating that $\cG_B$ is approximately closed under $T$.   
 
  We also impose the following regularity condition on the sampling distribution $\sigma$. 
\begin{assumption}[Regularity Condition on $\sigma$]
\label{assume:sampling}
We assume that there exists  an absolute constant $C > 0$ such that 
\$
 \EE_{\sigma} \Bigl[\ind\bigl\{|y ^\top (S, A)| \leq u\bigr\} \Bigr] \leq C \cdot u/\|y\|_2, \qquad \forall y \in \RR^d, ~ \forall u >0.
\$
\end{assumption}

Assumption  \ref{assume:sampling} states that the density of $\sigma$ is sufficiently regular, which holds when  the density is upper bounded. 

Now we are ready to present the main result of this section, which characterizes the performance  of   $\pi_K$ returned by Algorithm \ref{algo:fit_Q_overparam}. 

 \begin{theorem}\label{thm:2layer}
 In Algorithm \ref{algo:fit_Q_overparam}, we assume that each step of the fitted-Q iteration is solved by  $T$ steps of projected SGD updates with a constant stepsize $\eta > 0$. 
  We set $T = C_1 m$ and  $\eta = C_2 /\sqrt{T}$, where $C_1$, $C_2$ are absolute constants that are properly chosen. Then, under  
  Assumptions \ref{assume:concentrability}, \ref{assume:Bellman_CGB}, and  \ref{assume:sampling},  we have 
 \#\label{eq:overparam_final}
  \EE_{\textrm{init}} \bigl   [ \|  Q^* - Q^{\pi_K }\|_{1, \mu}  \bigr ] =  \cO \bigg (\frac{ \phi_{\mu, \sigma}\gamma} {(1 -\gamma)^2 } \cdot ( B^{3/2}\cdot m^{-1/4} + B^{5/4}\cdot m^{-1/8}  )+ \frac{    \gamma^{K+1}   }{(1- \gamma) ^2 }  \cdot R_{\max} \bigg ), 
 \#
where $ \EE_{\textrm{init}}$ denotes that the expectation is taken with respect to the randomness of the initialization.
 \end{theorem}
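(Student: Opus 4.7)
The overall framework parallels the proof of Theorem \ref{thm:main}: I would first apply the error propagation result of Theorem \ref{thm:err_prop}, which holds pointwise in the realization of initialization, to reduce the problem to controlling the one-step error $\EE_{\textrm{init}}[\|\tilde Q_{k+1}-T\tilde Q_k\|_\sigma]$ for each $k\in[K]$. The algorithmic error term $4\gamma^{K+1}R_{\max}/(1-\gamma)^2$ in \eqref{eq:overparam_final} comes directly from this step. What is new, relative to Theorem \ref{thm:main}, is that $\tilde Q_{k+1}$ is no longer the exact minimizer of the population least-squares problem over $\cF_{B,m}$ but rather the average iterate of $T$ projected SGD steps run on fresh samples. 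So the remaining task is to analyze this inexact-minimization step in the neural tangent kernel regime.

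To bound the one-step error, I would decompose it into three pieces corresponding to (i) linearization of the neural network around its initialization, (ii) optimization error of projected SGD, and (iii) bias from approximating $T\tilde Q_k$ by elements of the overparametrized random-feature class. For (i), the key ingredient is the bound (cited from Lemma A.2 of \cite{wang2019neural}) asserting that for any $W\in\cB_B$, $\EE_{\textrm{init}}[\|Q(\cdot,\cdot;W)-\hat Q(\cdot,\cdot;W)\|_\sigma^2]=\cO(B^3 m^{-1/2})$ under Assumption \ref{assume:sampling}, where $\hat Q$ denotes the first-order linearization at $W^{(0)}$. Taking square roots gives a contribution of order $B^{3/2}m^{-1/4}$ to $\|\tilde Q_{k+1}-T\tilde Q_k\|_\sigma$. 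For (ii), within the linearization the population objective $\EE\{[Y-\hat Q(S,A;W)]^2\}$ is convex and $B$-bounded on $\cB_B$; standard projected-SGD analysis with fresh samples, stepsize $\eta=C_2/\sqrt T$, and averaging gives expected suboptimality $\cO(B^2/\sqrt T)$ in the objective, hence $\cO(B/T^{1/4})=\cO(B/m^{1/4})$ in $\|\cdot\|_\sigma$ after choosing $T=C_1 m$. For (iii), Assumption \ref{assume:Bellman_CGB} places $T\tilde Q_k$ inside $\cG_B$, and a standard random-features approximation argument (e.g., along the lines of \cite{rahimi2008random,bach2017equivalence}) shows that $\cG_B$ is approximable in $L^2(\sigma)$ by elements of the finite-width linearized class $\cF_{B,m}^{(0)}$ with error $\cO(B/\sqrt m)$, which is dominated by the other two terms. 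Summing these three sources and invoking Theorem \ref{thm:err_prop} yields the claimed bound.

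The main technical obstacle is that the SGD iterates $\{W^{(t)}\}$ live in the nonlinear neural-network parametrization, not in its linearization, so one must control the deviation between the nonlinear gradient and the linearized one \emph{uniformly along the trajectory}, not just at a single point. Concretely, the update \eqref{eq:pgd_algo} uses $\nabla_W Q(S_t,A_t;W^{(t-1)})$ rather than the constant-at-initialization feature $\phi(S_t,A_t;b^{(0)},W^{(0)}_j)$; these differ whenever some neuron flips its activation, and bounding the cumulative effect is what ultimately couples $B$ with a larger-than-$1/2$ exponent on $m$, producing the second term $B^{5/4}m^{-1/8}$ in \eqref{eq:overparam_final}. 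I would handle this by proving, via a union bound over a discretization of $\cB_B$ together with Assumption \ref{assume:sampling}, a uniform linearization bound of the form $\sup_{W\in\cB_B}\EE_{\textrm{init}}\|\nabla_W Q(\cdot,\cdot;W)-\nabla_W Q(\cdot,\cdot;W^{(0)})\|_\sigma^2=\cO(B^{5/2}m^{-1/4})$, then feeding this perturbation into the standard projected-SGD analysis as a bounded bias in the stochastic gradient. The resulting bias-perturbed SGD bound contributes a term of order $B^{5/4}m^{-1/8}$ to $\|\tilde Q_{k+1}-T\tilde Q_k\|_\sigma$, which is the slower of the two rates that appear in \eqref{eq:overparam_final}.

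Finally, since all three pieces above bound $\EE_{\textrm{init}}\|\tilde Q_{k+1}-T\tilde Q_k\|_\sigma$ uniformly in $k$, I would plug the maximum over $k\in[K]$ into \eqref{eq:err_prop_final} and take expectation with respect to the initialization to conclude the theorem. Choosing the constants $C_1,C_2$ large enough ensures that the optimization and linearization constants absorb into the $\cO(\cdot)$ notation, and the discount-factor prefactor $\phi_{\mu,\sigma}\gamma/(1-\gamma)^2$ appears from the error-propagation step exactly as in Theorem \ref{thm:main}.
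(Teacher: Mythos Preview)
Your proposal is correct and takes essentially the same approach as the paper: apply the error-propagation Theorem \ref{thm:err_prop}, then decompose the one-step error $\EE_{\textrm{init}}[\|\tilde Q_k - T\tilde Q_{k-1}\|_\sigma^2]$ into output linearization (via Lemma A.2 of \cite{wang2019neural}), random-feature approximation of $\cG_B$ (via \cite{rahimi2009weighted}), and projected-SGD convergence in the linearized regime. The only difference is that the paper imports the SGD convergence bound---including the $B^{5/2}m^{-1/4}$ trajectory-linearization term in squared norm that is responsible for the $B^{5/4}m^{-1/8}$ rate---directly as a black box from Theorem 4.5 of \cite{liu2019neural}, rather than re-deriving it via a uniform-linearization plus bias-perturbed convex SGD argument as you outline.
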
 
 
 As shown in \eqref{eq:overparam_final}, the error $  \EE_{\textrm{init}}     [ \|  Q^* - Q^{\pi_K }\|_{1, \mu}    ]$ can be similarly written as the sum of a statistical error and an algorithmic error, where the algorithmic error converges to zero at a linear rate as $K$ goes to infinity. The statistical error corresponds to the error incurred in  solving each FQI step via $T$  projected SGD steps. As shown in \eqref{eq:overparam_final},  when $B$ is regarded as a constant, with $T \asymp m$ projected SGD  steps, we obtain an estimator  with error $\cO( m^{-1/8} )$. Hence, Algorithm \ref{algo:fit_Q_overparam}  finds the globally optimal policy when both $m$ and $K$ goes to infinity. 
Therefore, when using overparametrized neural networks, our fitted Q-iteration algorithm provably attains both statistical accuracy and computational efficiency. 
  
  Finally,   we remark that  focus  on the class of two-layer overparametrized ReLU neural networks only for the simplicity of presentation. The theory of neural tangent kernel can be extended to feedforward  neural networks with multiple layers and neural networks with more complicated architectures \citep{gao2019convergence, frei2019algorithm, yang2019fine, yang2019scaling, huang2020deep}.

 \subsection{Proof of Theorem \ref{thm:2layer}}
 \begin{proof} 
  Our proof is similar to that of Theorem \ref{thm:main}. For any $k \in [K]$, we define the 
  maximum one-step approximation error as  $\varepsilon_{\max} = \max_{ k \in [K] }\EE_{\textrm{init}} [ \| T \tilde Q_{k-1} - \tilde Q_{k} \|_{\sigma} ] $,
  where $ \EE_{\textrm{init}} $ denotes that the  expectation is taken with respect to the randomness in the initialization of network weights, namely $b^{(0)}$ and $W^{(0
  )}$. 
   By Theorem \ref{thm:err_prop}, we have
 \#\label{eq:err_prop_final000}
  \EE_{\textrm{init}} \bigl   [ \|  Q^* - Q^{\pi_K }\|_{1, \mu}  \bigr ]  \leq  \frac{2\phi_{\mu, \sigma}\gamma} {(1 -\gamma)^2 } \cdot \varepsilon_{\max} + \frac{4   \gamma^{K+1}   }{(1- \gamma) ^2 }  \cdot R_{\max},
   \#
   where $\phi_{\mu, \sigma}$, specified in   Assumption \ref{assume:concentrability},  is a constant that only depends on the concentration coefficients. Thus, it remains to characterize $\| T \tilde Q_{k-1} - \tilde Q_{k} \|_{\sigma}$ for each $k$, which corresponds to the prediction risk of the estimator constructed by $T$ projected SGD steps. 
   
  In the sequel, we characterize the prediction risk of the projected SGD method   via the framework of  neural tangent kernel. Our proof technique is motivated by recent work \citep{gao2019convergence, cai2019neural,  liu2019neural,  wang2019neural, xu2019finite} which analyze the training of overparametrized neural networks via  projected SGD  for    adversarial training and reinforcement learning. 
  We focus on the case where the target network is $\tilde Q_{k-1}$  and bound $\| T \tilde Q_{k-1} - \tilde Q_{k} \|_{\sigma}$.

  To begin with, recall that we define function classes $\cF_{B, m}^{(t)}$, $\cF_{B, m}$, and $\cG_{B}$ in  
  \eqref{eq:Fbmt}, \eqref{eq:Fbm0}, and \eqref{eq:subset_rkhs}, respectively. 
  Notice that $\tilde Q_{k-1} =  Q(\cdot, \cdot; W_{k} ) $ is a  neural network where $W_k \in \cB_B$ due to projection. 
  Then,  by Assumption \ref{assume:Bellman_CGB}, $ T\tilde Q_{k-1} $ belongs to $\cG_B$, which is a subset of the RKHS $\cH$. 
  Thus, it suffices to study the least-squares regression problem where  the target function is  in $\cG_B$ and the neural network is trained via projected SGD. 

In the following, we use function class $\cF_{B,m}$ to connect $\cG_B$ and $\tilde Q_k$. Specifically, we show that any function $g \in \cG_B$ as well as    $\tilde Q_k$ can   be well approximated by  functions in $\cF_{B, m}$ and  quantify the approximation errors. Finally, we focus on the projected SGD algorithm within  the linearized function class $\cF_{B, m}$ and establish the statistical and computational error. The proof is divided into three steps as follows.

 \vspace{5pt} 
 {\bf \noindent Step (i).} In the first step, we quantify the difference between $\tilde Q_{k} $ and functions in $\cF_{B, m}$. 
 For any $j \in [2m]$, let $[W_{k+1} ]_{j} \in \RR^d$ be the weights of $\tilde Q_{k}$  corresponding to the $j$-th neuron. 
 We define a function  $\hat Q _{k} \colon \cS \times \cA \rightarrow \RR$ as 
 $
 \hat Q _{k} (s,a) = 1 / \sqrt{2m} \cdot \sum_{j=1}^{2m} \phi_j (s,a) ^\top [ W_{k+1}]_j,
 $ 
 which belongs to $\cF_{B, m}^{(0)}$ defined  in \eqref{eq:Fbm}, a subset of $\cF_{B, m}$. 
The following 
 following lemma, obtained from \cite{wang2019neural}, prove that  $Q(\cdot, \cdot; W)$ is close to a linearized function 
 when $m$ is sufficiently large. 
\begin{lemma} [Linearization Error] \label{lemma:linearization_nn}
Let $\cB_B$ be defined in \eqref{eq:weight_ball_W}. 
Under Assumption \ref{assume:sampling}, for any $W^{(1)},W^{(2)} \in \cB_{B}$, we have
\$
\EE_{\textrm{init}} \Bigl [ \bigl  \| \la \nabla_{W} Q (\cdot, \cdot ; W^{(1)} ) -  \nabla_{W} Q (\cdot, \cdot ; W^{(0)}) , W^{(2)} \bigr \ra \bigr \|_{\sigma} ^2   \Bigr ]  = \cO( B^3 \cdot m^{- 1/2}   ) .
\$
\end{lemma}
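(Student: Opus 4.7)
The plan is to exploit the fact that $\nabla_W Q(\cdot,\cdot;W)$ is piecewise constant in $W$ for this two-layer ReLU network: since $\nabla_{W_j} Q(s,a;W) = (b_j^{(0)}/\sqrt{2m})\,\ind\{W_j^\top(s,a)>0\}(s,a)$, the difference $\nabla Q(\cdot;W^{(1)}) - \nabla Q(\cdot;W^{(0)})$ is supported on the ``flipped'' neurons $j$ for which $(W_j^{(0)})^\top(s,a)$ and $(W_j^{(1)})^\top(s,a)$ have opposite signs. The whole argument will reduce to two ingredients: bounding the $\sigma$-probability that neuron $j$ flips via the regularity Assumption \ref{assume:sampling}, and pairing that indicator carefully with the contribution from $W^{(2)}_j$.

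Concretely, I would first expand
\[ \bigl\la \nabla Q(s,a;W^{(1)}) - \nabla Q(s,a;W^{(0)}),\,W^{(2)} \bigr\ra = \frac{1}{\sqrt{2m}} \sum_{j=1}^{2m} b_j^{(0)} D_j(s,a)\,(s,a)^\top W^{(2)}_j, \]
where $D_j(s,a)\in\{-1,0,1\}$ is nonzero only on the flip event $\mathrm{flip}_j$, and then split $W^{(2)}_j = W^{(0)}_j + (W^{(2)}_j - W^{(0)}_j)$ to handle the two resulting sums separately. For the ``perturbation'' piece, $|(s,a)^\top(W^{(2)}_j - W^{(0)}_j)|\leq\|W^{(2)}_j - W^{(0)}_j\|_2$ combined with Cauchy-Schwarz over $j$ and the constraint $\sum_j\|W^{(2)}_j - W^{(0)}_j\|_2^2 \leq B^2$ produces a prefactor $B^2$. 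For the ``initial'' piece, the key observation is that on $\mathrm{flip}_j$ one must have $|(W^{(0)}_j)^\top(s,a)|\leq \|W^{(1)}_j - W^{(0)}_j\|_2$, so the effective scale of $W^{(0)}_j$ is the much smaller quantity $\|W^{(1)}_j - W^{(0)}_j\|_2$, whose $\ell_2^2$-sum is again at most $B^2$. Assumption \ref{assume:sampling} with $y = W^{(0)}_j$ gives $\Pr_\sigma(\mathrm{flip}_j) \leq C\,\|W^{(1)}_j - W^{(0)}_j\|_2/\|W^{(0)}_j\|_2$, and combined with concentration of $\|W^{(0)}_j\|_2$ around $1$ (since $W^{(0)}_j\sim N(0,I_d/d)$) this yields $\EE_{\textrm{init}}[\sum_j\Pr_\sigma(\mathrm{flip}_j)]\lesssim B\sqrt{m}$ after a final Cauchy-Schwarz. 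Multiplying by the $B^2$ prefactor and the $(2m)^{-1}$ from squaring delivers the claimed $\cO(B^3 m^{-1/2})$.

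The main obstacle I expect is precisely to keep the flip indicator paired with a quantity whose $\ell_2^2$-sum across neurons is $O(B^2)$ rather than $O(m)$: a naive bound that replaces $\|W^{(2)}_j\|_2^2$ by $O(1)$ per neuron yields only $O(B\sqrt{m})$, a growing function of $m$ that is useless for linearization. The decomposition $W^{(2)}_j = W^{(0)}_j + (W^{(2)}_j - W^{(0)}_j)$, combined with the non-obvious fact that $|(W^{(0)}_j)^\top(s,a)|$ is automatically small on a flip event, is what makes the three $B$'s appear: one from the flip count, one from the $W^{(2)}-W^{(0)}$ perturbation, and one from the flip-forced smallness of the $W^{(0)}_j$-contribution. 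Everything else is Gaussian concentration on $\|W^{(0)}\|_{\fro}$ and on per-neuron norms, and standard Cauchy-Schwarz bookkeeping.
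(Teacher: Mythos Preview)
The paper does not prove this lemma itself; it defers entirely to Lemma~A.2 of \cite{wang2019neural}. Your argument is correct and is the standard linearization bound used there: control the gradient difference by per-neuron flip indicators, use that on a flip event $|(W_j^{(0)})^\top(s,a)|\leq\|W_j^{(1)}-W_j^{(0)}\|_2$ so that after the split $W_j^{(2)}=W_j^{(0)}+(W_j^{(2)}-W_j^{(0)})$ both pieces have $\ell_2^2$-sum over $j$ bounded by $B^2$, and then combine Assumption~\ref{assume:sampling} for $\Pr_\sigma(\mathrm{flip}_j)\lesssim \|W_j^{(1)}-W_j^{(0)}\|_2/\|W_j^{(0)}\|_2$ with $\EE_{\mathrm{init}}[\|W_j^{(0)}\|_2^{-2}]=d/(d-2)=O(1)$ and Cauchy--Schwarz to get $\EE_{\mathrm{init}}[\sum_j\Pr_\sigma(\mathrm{flip}_j)]=O(B\sqrt{m})$.
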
 
\begin{proof}
See Lemma A.2 in \cite{wang2019neural} for a detailed proof. 
\end{proof}
 
 Notice that we have $\tilde Q_k (\cdot, \cdot)  = \la  \nabla _W Q (\cdot, \cdot ; W_{k+1} ), W_{k+1}\ra $ and $\hat Q_k (\cdot, \cdot) =   \la  \nabla _W Q (\cdot, \cdot ; W^{(0)} ), W_{k+1}\ra $. 
 Applying Lemma \ref{lemma:linearization_nn} with  $W^{(1)} = W^{(2)}= W_{k+1}$, we obtain that 
\#\label{eq:step11}
\EE_{\textrm{init}} \bigl [ \bigl  \| \tilde  Q _{k} - \hat Q  _{k}  \bigr \|_{\sigma} ^2   \bigr ]   = \EE_{\textrm{init}} \Bigl [  \EE _{\sigma}\big\{   [ \tilde Q_{k} (S,A) - \hat Q  _{k}  (S,A) ] ^2 \bigr \}    \Bigr ]  = \cO( B^3  \cdot m^{- 1/2} ). 
\#
Thus, we have constructed a function in $\cF_{B, m}$ that is close to $\tilde Q_k$ when $m$ is sufficiently large, which completes the first step of the proof.

    \vspace{5pt} 
 {\bf \noindent Step (2).}  In the second step, we show that each function in $\cG_B$ can also be well approximated by functions in $\cF_{B, m}$. 
 To this end, similar to the definition of $\cG_B$ in \eqref{eq:subset_rkhs}, we define a function class $\overline\cF_{B, m}$ as 
   \#\label{eq:new_Fset}
\overline\cF_{B, m} = \biggl\{\hat Q(s,a; W)= \frac{1}{\sqrt{m}}\cdot\sum^m_{j = 1}\phi_j (s, a) ^\top  W_j  
 \colon \| W_j\|_{\infty} \leq B/\sqrt{md}\biggr\},&
\#
which is a subset of $\cF_{B, m}$ by definition. 
Intuitively, as $m$ goes to infinity, $\overline\cF_{B, m}$ becomes $\cG_B$. 
The following Lemma, obtained from   \citep{rahimi2009weighted},  provides a rigorous  characterization of this argument.

\begin{lemma}[Approximation  Error of $\overline\cF_{B, m}$ \citep{rahimi2009weighted}]
\label{lem::inf_dist}
Let $Q $ be any fixed function in $\cG_B$ and define 
$\overline \Pi_{ B, m}Q \in \overline\cF_{R, m}$ as the solution to 
\$
\minimize_{f \in \overline \cF_{B, m} } \| f - Q \|_{\sigma}.
\$
Then, there exists  a constant $C > 0$ such that,  for any $t > B/\sqrt{m} $, we have 
\#\label{eq::inf_dist}
 \PP_{\textrm{init} } \Bigl ( \bigl  \| \overline \Pi_{ B, m}Q   - Q \bigr \|_{\sigma} > t\Bigr ) \leq C\cdot \exp \bigl[ -1/2 \cdot (t\cdot\sqrt{m} / B - 1)^2\bigr] .
\#
\end{lemma}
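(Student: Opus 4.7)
The plan is to follow the classical random-feature approximation argument of \cite{rahimi2009weighted}. Given any $Q \in \cG_B$, by definition there is a measurable $\alpha \colon \{-1,1\} \times \RR^d \to \RR^d$ with $\|\alpha(\cdot,\cdot)\|_\infty \leq B/\sqrt{d}$ such that $Q(s,a) = \EE_\mu\bigl[\phi(s,a;\beta,\omega)^\top \alpha(\beta,\omega)\bigr]$. My first step is to plug the i.i.d.\ initial features $\{(b_j^{(0)}, W_j^{(0)})\}_{j \in [m]}$ (which are i.i.d.\ draws from $\mu$) into this representation to obtain the Monte Carlo estimator
\[
\hat Q(s,a) = \frac{1}{m} \sum_{j=1}^m \phi_j(s,a)^\top \alpha(b_j^{(0)}, W_j^{(0)}) = \frac{1}{\sqrt{m}}\sum_{j=1}^m \phi_j(s,a)^\top W_j^{\alpha},
\]
where $W_j^{\alpha} := \alpha(b_j^{(0)}, W_j^{(0)})/\sqrt{m}$ satisfies $\|W_j^{\alpha}\|_\infty \leq B/\sqrt{md}$. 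Hence $\hat Q \in \overline\cF_{B,m}$, and by the $L^2(\sigma)$-minimizing property of $\overline\Pi_{B,m} Q$ it suffices to prove the stated tail bound for $\|\hat Q - Q\|_\sigma$ itself.

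Second, I will bound the expectation of $\|\hat Q - Q\|_\sigma$. Since $\|(s,a)\|_2 \leq 1$ and $\|\alpha(\beta,\omega)\|_2 \leq \sqrt{d} \cdot (B/\sqrt{d}) = B$, Cauchy--Schwarz shows that $|\phi(s,a;\beta,\omega)^\top \alpha(\beta,\omega)| \leq B$ almost surely. Each $\hat Q(s,a)$ is therefore an i.i.d.\ average of mean-$Q(s,a)$ random variables uniformly bounded by $B$, so Fubini together with the $1/m$ variance reduction of independent sums gives
\[
\EE_{\textrm{init}}\bigl[\|\hat Q - Q\|_\sigma^2\bigr] = \int \EE_{\textrm{init}}\bigl[(\hat Q(s,a) - Q(s,a))^2\bigr] \, \ud \sigma(s,a) \leq B^2/m,
\]
and Jensen's inequality yields $\EE_{\textrm{init}}[\|\hat Q - Q\|_\sigma] \leq B/\sqrt{m}$.

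Third, I will invoke McDiarmid's bounded-differences inequality. Viewing $\|\hat Q - Q\|_\sigma$ as a function of the $m$ independent inputs $\xi_j = (b_j^{(0)}, W_j^{(0)})$, replacing a single $\xi_j$ changes each summand in $\hat Q$ by at most $2B/m$ in absolute value, so the pointwise change in $\hat Q$ is at most $2B/m$ and, since $\sigma$ is a probability measure, the triangle inequality in $L^2(\sigma)$ transfers this bound to the norm $\|\hat Q - Q\|_\sigma$. McDiarmid then yields
\[
\PP_{\textrm{init}} \bigl(\|\hat Q - Q\|_\sigma > B/\sqrt{m} + s\bigr) \leq \exp\bigl(- s^2 m/(2B^2)\bigr) , \qquad s > 0.
\]
Substituting $s = t - B/\sqrt{m}$, which is valid exactly when $t > B/\sqrt{m}$, produces the bound $\exp\bigl(-\tfrac{1}{2}(t\sqrt{m}/B - 1)^2\bigr)$; combined with $\|\overline\Pi_{B,m} Q - Q\|_\sigma \leq \|\hat Q - Q\|_\sigma$, this is exactly the claim. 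The calculations are routine; the only care point is the bounded-differences constant, which rests on $\sigma$ being a probability measure so that pointwise changes dominate $L^2(\sigma)$ changes. There is no deep obstacle, as the random-feature representation of $\cG_B$ and the i.i.d.\ structure of the initialization are designed to fit together.
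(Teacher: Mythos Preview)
Your proof is correct and follows exactly the random-feature approximation argument of \cite{rahimi2009weighted}, which is precisely what the paper cites for this lemma: construct the Monte Carlo estimator $\hat Q$ from the integral representation of $Q\in\cG_B$, verify it lies in $\overline\cF_{B,m}$, bound $\EE_{\textrm{init}}[\|\hat Q - Q\|_\sigma]$ by $B/\sqrt{m}$ via the variance of an i.i.d.\ average, and then apply McDiarmid with bounded differences $2B/m$. The only cosmetic difference is that the paper does not spell out the argument at all, simply deferring to the reference.
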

\begin{proof}
See \cite{rahimi2009weighted} for a detailed proof.
\end{proof}
Now we integrate the tail probability in \eqref{eq::inf_dist} to  obtain a bound on $ \EE_{\textrm{init} } [   \| \overline \Pi_{ B, m}Q   - Q  \|_{\sigma } ] $.  
Specifically, by direct computation, we have 
\#\label{eq:step21}
& \EE_{\textrm {init}}  \Bigl[\big \| \overline \Pi_{ B, m}Q   - Q \big \|_{\sigma } ^2   \Bigr] = \int_0^\infty  2 t  \cdot  \PP_{\textrm{init} }\Bigl ( \bigl  \| \overline \Pi_{ B, m}Q   - Q \bigr \|_{\sigma} ^2 > t ^2 \Bigr ) ~ \ud t \notag\\
 &\qquad \leq  2 B ^2 /m  + 2 C \cdot \int_{B/\sqrt{m}} ^\infty  t \cdot   \exp\bigl [ -1/2 \cdot (t\cdot\sqrt{m }/ B - 1)^2\bigr] ~ \ud t  \notag \\
 & \qquad = B ^2 /\sqrt{m} + C\cdot B ^2/  m   \cdot \int_{0} ^\infty  ( 1 +u)\cdot   \exp(  -1/2 \cdot u^2 ) ~\ud u= \cO(B^2 \cdot m^{-1}),
\#
where in the second equality we let $u = t\cdot\sqrt{m }/ B - 1$. 
Thus, by \eqref{eq:step21} we have 
\#\label{eq:step22}
 \EE_{\textrm {init}}  \Bigl[\big \| \tilde T \tilde Q_{k-1} - T \tilde Q_{k-1} \big\|_{\sigma}^2  \Bigr  ] = \cO(B^2 \cdot m^{-1}),
\#
where $\tilde T \tilde Q_{k-1} =  \overline \Pi_{ B, m} T \tilde Q_{k-1}$. Thus, we conclude the second step.

    \vspace{5pt} 
 {\bf \noindent Step (3).} 
 Finally, in the last step, we utilize the existing analysis of projected SGD over function class $\cF_{B, m}$ to obtain an upper bound on $\| \tilde T \tilde Q_{k-1} - \hat Q_k\|$. 

\begin{theorem}[Convergence of Projected SGD \citep{liu2019neural}]
\label{thm::TD_converge}
In Algorithm \ref{algo:fit_Q_overparam}, let 
  $T$ be the number of iterations of projected SGD steps for solving each iteration of the FQI update and 
we  set $\eta  =  \cO( 1/\sqrt{T})$. Under Assumption \ref{assume:sampling}, it holds  that
\#\label{eq::SGDn}
\EE_{{\rm init}}\Bigl[\big\| \hat Q_k  - \tilde T \tilde Q_{k-1}  \big\|_{\sigma}^2 \Bigr] \leq   \cO  (B^2\cdot T ^{-1/2} + B^3\cdot m^{-1/2} + B^{5/2}\cdot m^{-1/4}  ). 
\#
\end{theorem}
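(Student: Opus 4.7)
The plan is to prove Theorem~\ref{thm::TD_converge} by coupling the projected SGD trajectory $\{W^{(t)}\}$ generated by Algorithm~\ref{algo:fit_Q_overparam} with an auxiliary sequence $\{\tilde W^{(t)}\}$ produced by running projected SGD directly on a \emph{linearized} convex surrogate, and then invoking classical convex regret analysis on the surrogate. Concretely, define the linearized population objective
$$\overline{L}(W) = \EE\bigl[\bigl(Y - \hat Q(S, A; W)\bigr)^2\bigr], \qquad \hat Q(s, a; W) = \bigl\la \nabla_W Q(s, a; W^{(0)}), W\bigr\ra,$$
where the expectation is over $(S, A) \sim \sigma$ together with the reward and next-state randomness used to form $Y = R + \gamma \max_a \tilde Q_{k-1}(S', a)$. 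Since $\overline{L}$ is convex and quadratic on $\cB_B$, and since its constrained minimizer over $\cB_B$ corresponds, via the conditional identity $\EE[Y \mid S, A] = T \tilde Q_{k-1}(S, A)$, to the $L^2(\sigma)$-projection of $T \tilde Q_{k-1}$ onto the associated linear function class, the total error $\|\hat Q_k - \tilde T \tilde Q_{k-1}\|_\sigma$ decomposes through the triangle inequality into a \emph{coupling} error between $\hat Q_k = \hat Q(\cdot, \cdot; W_{k+1})$ and $\hat Q(\cdot, \cdot; \bar W_{\mathrm{lin}})$ plus an \emph{SGD optimization} error between $\hat Q(\cdot, \cdot; \bar W_{\mathrm{lin}})$ and $\tilde T \tilde Q_{k-1}$, where $\bar W_{\mathrm{lin}} = T^{-1} \sum_{t=1}^T \tilde W^{(t)}$ is the averaged auxiliary iterate.

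For the optimization component, I would apply a standard projected-SGD regret bound for online convex optimization. Using the uniform bounds on $|Y|$, on $\|\nabla_W \hat Q(s, a; W^{(0)})\|_F$, and on the Frobenius diameter of $\cB_B$, all of order $\cO(B)$ by direct inspection of \eqref{eq:two_layer_nn} together with the constraint defining $\cB_B$, the stochastic gradients of $\overline{L}$ admit a uniform bound $G = \cO(B)$. Choosing $\eta \asymp T^{-1/2}$ and averaging across $T$ iterations then yields
$$\EE_{\mathrm{init}}\bigl[\|\hat Q(\cdot, \cdot; \bar W_{\mathrm{lin}}) - \tilde T \tilde Q_{k-1}\|_\sigma^2\bigr] = \cO(B^2 / \sqrt{T}),$$
which supplies the first term in the claimed bound.

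For the coupling component, I would control the Frobenius drift $\Delta_t = \|W^{(t)} - \tilde W^{(t)}\|_F$, starting from $\Delta_0 = 0$, where both trajectories share the same i.i.d.\ sample stream $\{(S_t, A_t, Y_t)\}$. Each SGD step injects two kinds of discrepancy: a function-value mismatch $Q(S_t, A_t; W^{(t-1)}) - \hat Q(S_t, A_t; W^{(t-1)})$, bounded in $L^2(\sigma)$ at order $\cO(B^{3/2} m^{-1/4})$ directly through Lemma~\ref{lemma:linearization_nn}; and a gradient-activation mismatch $\nabla_W Q(S_t, A_t; W^{(t-1)}) - \nabla_W Q(S_t, A_t; W^{(0)})$, which is nonzero on a given neuron only when its ReLU indicator flips between $W_j^{(0)}$ and $W_j^{(t-1)}$. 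A flip requires $(W_j^{(0)})^\top (S_t, A_t)$ to lie within $\|W_j^{(t-1)} - W_j^{(0)}\|_2$ of zero; Assumption~\ref{assume:sampling} bounds the flip probability linearly in this displacement, and summing over the $2m$ neurons subject to $\sum_j \|W_j^{(t-1)} - W_j^{(0)}\|_2^2 \leq B^2$ yields an $L^2(\sigma)$ bound of the same order $\cO(B^{3/2} m^{-1/4})$ on the gradient mismatch. A telescoping argument exploiting non-expansiveness of $\Pi_{\cB_B}$ and Lipschitz continuity of $\nabla \overline{L}$ then propagates these per-step errors into $\EE_{\mathrm{init}}[\|\hat Q_k - \hat Q(\cdot, \cdot; \bar W_{\mathrm{lin}})\|_\sigma^2] = \cO(B^3 / \sqrt{m} + B^{5/2} / m^{1/4})$, matching the remaining two terms.

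The principal obstacle will be preventing the per-step linearization errors from compounding multiplicatively across the $T$ iterations. Because $\overline{L}$ is only convex and not strongly convex in general (the random-feature Gram matrix may be degenerate), no contraction on $\Delta_t$ is available; the coupling bound must instead be obtained additively via an online-to-batch conversion applied to the coupled process, with boundedness of both iterates inside $\cB_B$ serving to keep the accumulated error under control. With the scalings $\eta \asymp T^{-1/2}$ and $T \asymp m$ dictated by Theorem~\ref{thm:2layer}, the telescoped coupling bound aligns with the convex regret bound to yield the rate announced in the theorem.
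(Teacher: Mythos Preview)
The paper does not prove this theorem; its proof reads in full ``See Theorem~4.5 in \cite{liu2019neural} for a detailed proof.''  So there is no in-paper argument to compare against, and your proposal amounts to sketching a proof for a result the authors import as a black box.

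Your overall strategy---replace the neural-network loss by its linearization at initialization and then run convex SGD analysis---is exactly the standard NTK route used in the cited reference.  The optimization component is fine: with gradients bounded by~$\cO(B)$ and diameter~$\cO(B)$, averaged projected SGD on the convex surrogate $\overline L$ gives the $B^2/\sqrt{T}$ term.

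The gap is in the coupling component.  You propose to run an auxiliary trajectory $\{\tilde W^{(t)}\}$ on the linearized objective and control $\Delta_t = \|W^{(t)} - \tilde W^{(t)}\|_F$.  But the per-step recursion you obtain has the form $\Delta_{t+1} \le (1+\eta L)\Delta_t + \eta\,\epsilon$, where $L$ is the Lipschitz constant of $\nabla\overline L$ and $\epsilon$ is the per-step linearization error.  With $\eta \asymp T^{-1/2}$ this yields $\Delta_T \lesssim \sqrt{T}\,\epsilon \cdot e^{L\sqrt{T}}$, which blows up once $T\asymp m$.  Boundedness of both iterates in $\cB_B$ only gives the trivial cap $\Delta_t \le 2B$ and does not recover the claimed rate; your remark that ``boundedness \ldots\ serving to keep the accumulated error under control'' is precisely the step that does not go through.

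The clean fix, and the route taken in the cited reference, is to avoid the auxiliary trajectory entirely and analyze the \emph{actual} iterates directly.  In the one-step descent inequality
\[
\|W^{(t+1)} - W^*\|_F^2 \le \|W^{(t)} - W^*\|_F^2 - 2\eta\,\langle g_t,\, W^{(t)} - W^*\rangle + \eta^2\|g_t\|_F^2,
\]
decompose $\langle g_t, W^{(t)}-W^*\rangle$ into the linearized inner product (which, by convexity of $\overline L$, dominates the suboptimality gap) plus a linearization residual.  That residual is bounded by $\|W^{(t)}-W^*\|_F \le 2B$ times the gradient-linearization error, so it enters the telescoped bound \emph{additively} as $\cO(B\cdot B^{3/2}m^{-1/4}) = \cO(B^{5/2}m^{-1/4})$, with no factor of~$T$.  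This directly yields the three announced terms without any trajectory coupling.
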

\begin{proof}
See 
 Theorem 4.5 in  \cite{liu2019neural} for a detailed proof.
\end{proof}
 
   Finally, combining  \eqref{eq:step11}, \eqref{eq:step22},  and  \eqref{eq::SGDn}   we obtain that 
   \#\label{eq:final_error}
  &  \EE_{\textrm{init}} \Bigl [    \bigl  \| \tilde  Q _{k} - T\tilde Q  _{k-1}  \bigr \|_{\sigma} ^2    \Bigr ]  \notag \\
   & \qquad \leq 3  \EE_{\textrm{init}} \Bigl [  \bigl  \| \tilde  Q _{k} - \hat  Q  _{k}  \bigr \|_{\sigma} ^2    \Bigr ] +  3   \EE_{\textrm {init}}  \Bigl[\big \| \tilde T \tilde Q_{k-1} - T \tilde Q_{k-1} \big\|_{\sigma}^2  \Bigr  ]    + 3 \EE_{\textrm{init}} \Bigl [  \big\| \hat Q_k  - \tilde T \tilde Q_{k-1}\big  \|_{\sigma}^2    \Bigr ] \notag \\
   & \qquad = \cO \bigl ( B^2\cdot T ^{-1/2} + B^2 \cdot m^{-1} +   B^3\cdot m^{-1/2} + B^{5/2}\cdot m^{-1/4}  \bigr  ). 
   \#
   Setting $ T \asymp m $ in \eqref{eq:final_error}, we obtain that 
   \#\label{eq:final_stepp}
   \varepsilon_{\max} = \max_{ k \in [K] }\EE_{\textrm{init}} \Bigl  [ \bigl \| T \tilde Q_{k-1} - \tilde Q_{k} \bigr  \|_{\sigma}  \Bigr ] \leq \cO( B^{3/2}\cdot m^{-1/4} + B^{5/4}\cdot m^{-1/8}). 
   \#
 Combining \eqref{eq:err_prop_final000} and  \eqref{eq:final_stepp}, we conclude the proof of Theorem \ref{thm:2layer}.
 \end{proof}


\section{Proofs of Auxiliary Results} \label{sec:full_proofs}

 In this section, we  present the proofs for   Theorems \ref{thm:err_prop} and  \ref{thm:each_term_error}, which are  used in the \S\ref{proof:thm:main}  to establish our main theorem.

\subsection{Proof of Theorem \ref{thm:err_prop} } \label{proof:thm:err_prop}
  \begin{proof}
 Before we present the proof, we introduce some notation. For any $k \in \{0, \ldots, K-1\}$, we denote $T \tilde Q_{k}$ by $Q_{k+1}$ and define 
 \#\label{eq:rhok}
 \varrho_{k} = Q_k - \tilde{Q}_k.
 \#  Also, we denote by  $\pi_k$ the greedy policy with respect to $\tilde{Q}_k$.   In addition, throughout the proof, for two functions $Q_1, Q_2 \colon \cS \times \cA \rightarrow \RR$, we use the notation 
 $Q_1 \geq Q_2$ if $Q_1(s,a) \geq Q_2 (s,a) $ for any $s\in \cS$ and any $a\in \cA$, and define $Q_1 \leq Q_2$ similarly. Furthermore, for any policy $\pi$, recall that in \eqref{eq:operator_p} we define the operator $P^{\pi} $ by 
\# \label{eq:define_P_pi}
(P^\pi Q) (s, a) =  \EE \bigl [Q(S^\prime, A^\prime) \biggiven S^\prime \sim P(\cdot \given s,a), A^\prime \sim \pi(\cdot \given S^\prime) \bigr ]. 
\#
In addition, we define the operator $T^\pi$ by 
\$
	(T^{\pi}Q)(s, a) = r(s, a) + \gamma \cdot  (P^{\pi}Q)(s,a).
\$
Finally, we denote  $ R_{\max} / (1- \gamma)$ by $V_{\max} $.
Now we are ready to present the proof, which consists of three key steps. 
 \vskip4pt
 {\noindent \bf Step (i):}  In the first step, we establish a recursion that relates $Q^* - \tilde{Q}_{k+1}$ with $Q^* - \tilde{Q}_k$ to measure the sub-optimality of the value function $\tilde Q_k$. In the following, we first establish an upper bound for $Q^* - \tilde Q_{k+1}$ as follows.
For each $k\in \{0, \ldots, K-1\}$, by the definition of $\varrho_{k+1}$ in \eqref{eq:rhok}, we have  
	\#\label{eq:one_step1}
	  Q^* - \tilde{Q}_{k+1}  
	&=  Q^* - (Q_{k+1} - \varrho_{k+1})  
	=   Q^* - Q_{k+1} + \varrho_{k+1}  =  Q^* - T \tilde{Q}_{k} + \varrho_{k+1} \notag \\
	&   = Q^* - T^{\pi^*} \tilde{Q}_k +  ( T^{\pi^*} \tilde{Q}_k - T \tilde{Q}_{k}  ) + \varrho_{k+1} ,
	\#
	where $\pi^*$ is the greedy policy  with respect to $Q^*$.
Now we leverage the following lemma to show $ T^{\pi^*} \tilde{Q}_k  \leq T \tilde{Q}_{k}$.
	\begin{lemma}\label{lemma:aux1}
For any action-value function $Q: \cS \times \cA \to \R$ and any policy $\pi$,  it holds that 
\$
	T ^{\pi_Q} Q = T  Q \geq T^{\pi } Q.
\$
\end{lemma}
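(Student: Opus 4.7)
The plan is to unfold the definitions of $T$, $T^{\pi}$, $T^{\pi_Q}$, and $\pi_Q$ and then verify the two claims pointwise. Observe first that by the definitions of the Bellman operators and of the operator $P^{\pi}$ in \eqref{eq:define_P_pi}, for any state-action pair $(s,a) \in \cS \times \cA$ we have
\$
(TQ)(s,a) - (T^{\pi}Q)(s,a) = \gamma \cdot \EE_{S' \sim P(\cdot \given s,a)} \Bigl[ \max_{a' \in \cA} Q(S', a') - \EE_{A' \sim \pi(\cdot \given S')} Q(S', A') \Bigr],
\$
and the analogous identity holds with $\pi$ replaced by $\pi_Q$. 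Thus both claims in the lemma reduce to comparing, for each fixed $s' \in \cS$, the quantity $\max_{a' \in \cA} Q(s', a')$ with the expectation $\EE_{A' \sim \pi(\cdot \given s')} [Q(s', A')]$.

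For the equality $T^{\pi_Q} Q = T Q$, I would use the defining property of the greedy policy given in \eqref{eq:greedy_policy}: $\pi_Q(\cdot \given s')$ is supported on the set of maximizers $\argmax_{a' \in \cA} Q(s', a')$. Consequently
\$
\EE_{A' \sim \pi_Q(\cdot \given s')} \bigl[ Q(s', A') \bigr] = \max_{a' \in \cA} Q(s', a'), \qquad \forall s' \in \cS,
\$
so $(P^{\pi_Q} Q)(s,a) = \EE_{S' \sim P(\cdot \given s,a)}[\max_{a' \in \cA} Q(S', a')]$, which matches the maximization inside $(TQ)(s,a)$ by \eqref{eq:optimal_bellman}. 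Adding the common reward term $r(s,a)$ to both sides yields $T^{\pi_Q} Q = T Q$.

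For the inequality $T Q \geq T^{\pi} Q$, observe that for any probability distribution $\pi(\cdot \given s') \in \cP(\cA)$ the expectation is upper bounded by the maximum of the integrand, hence
\$
\EE_{A' \sim \pi(\cdot \given s')} \bigl[ Q(s', A') \bigr] \leq \max_{a' \in \cA} Q(s', a'), \qquad \forall s' \in \cS.
\$
Taking expectation with respect to $S' \sim P(\cdot \given s,a)$ and multiplying by $\gamma \geq 0$ preserves the inequality, so $(P^{\pi} Q)(s,a) \leq \EE_{S' \sim P(\cdot \given s,a)}[\max_{a'} Q(S', a')]$. Adding $r(s,a)$ to both sides gives $(T^{\pi} Q)(s,a) \leq (TQ)(s,a)$ pointwise, i.e., $T Q \geq T^{\pi} Q$.

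There is essentially no substantive obstacle here; the statement is a direct consequence of the definitions. The only minor subtlety is handling potential ties in $\argmax_{a'} Q(s', a')$, which is absorbed by the fact that \eqref{eq:greedy_policy} only forces $\pi_Q$ to put zero mass on non-maximizers, leaving the expectation against $\pi_Q(\cdot \given s')$ equal to the common maximum value regardless of how mass is split among maximizers.
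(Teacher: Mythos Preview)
Your proof is correct and follows essentially the same approach as the paper's: both arguments unfold the Bellman operators, use that an expectation against any policy is bounded above by the maximum, and use that the greedy policy $\pi_Q$ places all mass on maximizers so the expectation equals the maximum. The only cosmetic difference is the order of the two claims and your explicit writing of $(TQ)-(T^\pi Q)$ as a single expression.
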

\begin{proof}
 Note that we have $\max_{a^\prime} Q(s^\prime, a^\prime) \geq Q(s^\prime, a^\prime)$ for any $s^\prime \in \cS$ and $a^\prime \in \cA$. Thus, it holds that 
\$
	(T  Q)(s,a) & = r(s,a) + \gamma \cdot \EE\bigl[\max_{a^\prime} Q(S^\prime, a^\prime) \biggiven S^\prime \sim P(\cdot \given s,a)\bigr]\\
	& \geq r(s, a) + \gamma \cdot  \EE\bigl[Q(S^\prime, A^\prime) \biggiven S^\prime \sim P(\cdot \given s,a), A^\prime \sim \pi(\cdot \given S^\prime) \bigr] = (T^\pi Q)(s,a).
\$
Recall that $\pi_Q$ is the greedy policy with respect to $Q$ such that 
\$
	\PP \bigl[A \in \argmax_a Q(s,a) \biggiven A \sim \pi_Q(\cdot \given s) \bigr]= 1,
\$
which implies 
\$
	\EE\bigl[Q(s^\prime, A^\prime)\biggiven A^\prime \sim \pi_Q(\cdot \given s^\prime) \bigr] = \max_{a^\prime} Q(s^\prime, a^\prime). 
\$
Consequently, we have 
\$
	(T ^{\pi_Q}Q) (s,a) & = r(s, a) + \gamma \cdot \EE\bigl[Q(S^\prime, A^\prime) \biggiven S^\prime \sim P(\cdot \given s,a), A^\prime \sim \pi_Q(\cdot \given S^\prime) \bigr] \\
	& = r(s, a) + \gamma \cdot \EE \bigl[\max_{a^\prime} Q(S^\prime, a^\prime) \biggiven S^\prime \sim P( \cdot \given s,a)\bigr] = (T Q)(s,a),
\$
which concludes the proof of Lemma \ref{lemma:aux1}. 
\end{proof}
By Lemma \ref{lemma:aux1}, we have  $T \tilde{Q}_{k} \geq T^{\pi^*} \tilde{Q}_k$.  
Also note that $Q^*$ is the unique fixed point of $T^{\pi^*}$. Thus, by \eqref{eq:one_step1} we have 
\# \label{eq:one_step2}
	  Q^* - \tilde{Q}_{k+1} & =   (T^{\pi^*} Q^* - T^{\pi^*} \tilde{Q}_k  ) +  ( T^{\pi^*} \tilde{Q}_k - T \tilde{Q}_{k} )+ \varrho_{k+1}  
	    \leq  ( T^{\pi^*} Q^* - T^{\pi^*} \tilde{Q}_k  )+ \varrho_{k+1} ,
\#
In the following,  we establish a lower bound for $Q^* - \tilde Q_{k+1}$ based on $\tilde Q^* - \tilde Q_{k}$.  Note that, by Lemma \ref{lemma:aux1}, we have $T^{\pi_k} \tilde Q_k = T\tilde Q_k$ and $T Q^*  \geq T^{\pi_k} Q^*$.
Similar to \eqref{eq:one_step1}, since $Q^*$ is the unique fixed point of $T$, it holds that 
\#\label{eq:one_step3}
  Q^* - \tilde{Q}_{k+1} & =   Q^* - T \tilde{Q}_{k} + \varrho_{k+1} 
	=  Q^* - T^{\pi_k} \tilde{Q}_{k} + \varrho_{k+1} 
	=   Q^*  -T^{\pi_k} Q^* + (  T^{\pi_k} Q^* - T^{\pi_k} \tilde{Q}_{k}   ) + \varrho_{k+1}  \notag \\
	&=   ( T Q^*  -T^{\pi_k} Q^* ) +  (  T^{\pi_k} Q^* - T^{\pi_k} \tilde{Q}_{k}   ) + \varrho_{k+1}  \geq  (  T^{\pi_k} Q^* - T^{\pi_k} \tilde{Q}_{k}   ) + \varrho_{k+1} .
	\#
	Thus, combining \eqref{eq:one_step2} and \eqref{eq:one_step3} we obtain that, for any $k \in \{0,\ldots, K-1\}$,
	  \#\label{eq:one_step_err}
 T^{\pi_k} Q^* - T^{\pi_k} \tilde{Q}_k + \varrho_{k+1} \leq 	Q^* - \tilde{Q}_{k+1} \leq T^{\pi^*} Q^* - T^{\pi^*} \tilde{Q}_k + \varrho_{k+1}. 
  \#
 The inequalities in  \eqref{eq:one_step_err} show that the error  $Q^* - \tilde{Q}_{k+1}$ can be sandwiched by  the  summation of a term involving $Q^* - \tilde{Q}_{k}$ and the error $\varrho_{k+1}$, which is defined in \eqref{eq:rhok} and induced by approximating the action-value function. Using $P^{\pi}$ defined in  \eqref{eq:define_P_pi}, we    can  write \eqref{eq:one_step_err}  in a more compact form, 
\# \label{eq:lemma_compact}
	\gamma  \cdot  P^{\pi^*} (Q^* - \tilde{Q}_k) + \varrho_{k+1} \geq Q^* - \tilde{Q}_{k+1} \geq \gamma \cdot   P^{\pi_k} (Q^* - \tilde{Q}_k) + \varrho_{k+1}.
\# 
Meanwhile, note that $P^{\pi}$ defined in \eqref{eq:define_P_pi} is a linear operator. In fact, $P^{\pi}$ is the Markov transition operator for the Markov chain on $\cS \times \cA$ with transition dynamics 
\$
	S_{t+1} & \sim P (\cdot \given S_t, A_t)  , \qquad 
	A_{t+1}  \sim \pi (\cdot \given S_{t+1}). 
\$
By the linearity of the operator $P^{\pi}$ and the one-step error bound in \eqref{eq:one_step_err}, we have the following characterization of the multi-step error.
\begin{lemma}
[Error Propagation]\label{lemma:multi_step_err}  
For any  $k, \ell  \in \{0, 1, \ldots, K-1 \}$ with $k < \ell$, we have
\# 
	& Q^* - \tilde{Q}_{\ell} \leq \sum_{i=k}^{\ell-1} \gamma^{\ell-1-i} \cdot (P^{\pi^*} )^{\ell-1-i} \varrho_{i+1} + \gamma^{\ell -k}\cdot  (P^{\pi^*})^{\ell-k} ( Q^* - \tilde{Q}_k) , \label{eq:multi_step_upper} \\
	& Q^* - \tilde{Q}_{\ell} \geq \sum_{i=k}^{\ell-1} \gamma^{\ell-1-i} \cdot (P^{\pi_{\ell-1}}P^{\pi_{\ell-2} }\cdots P^{\pi_{i+1}}) \varrho_{i+1} + \gamma^{\ell-k} \cdot  ( P^{\pi_{\ell-1}} P^{\pi_{\ell-2}}\cdots P^{\pi_{k}}) ( Q^* - \tilde{Q}_k). \label{eq:multi_step_lower}
\#
Here $\varrho_{i+1}$ is defined in \eqref{eq:rhok} and we use  $P^{\pi} P^{\pi'} $ and $(P^{\pi})^{k}$ to denote the composition of operators.
\end{lemma}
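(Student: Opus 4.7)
The plan is to prove Lemma \ref{lemma:multi_step_err} by induction on the gap $\ell - k$, using the one-step sandwich bound \eqref{eq:lemma_compact} that has already been established in Step (i) of the proof of Theorem \ref{thm:err_prop}, and exploiting the monotonicity of the operator $P^{\pi}$ defined in \eqref{eq:define_P_pi}.

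The base case $\ell = k+1$ is immediate: the right-hand side of \eqref{eq:multi_step_upper} collapses to $\varrho_{k+1} + \gamma \cdot P^{\pi^*}(Q^* - \tilde{Q}_k)$, and the right-hand side of \eqref{eq:multi_step_lower} collapses to $\varrho_{k+1} + \gamma \cdot P^{\pi_k}(Q^* - \tilde{Q}_k)$ (with the empty operator product interpreted as the identity), which is precisely \eqref{eq:lemma_compact}. For the inductive step, I assume the bounds hold at level $\ell$ and apply \eqref{eq:lemma_compact} with $k+1$ replaced by $\ell+1$ to get
\$
\gamma \cdot P^{\pi^*}(Q^* - \tilde{Q}_\ell) + \varrho_{\ell+1} \; \geq \; Q^* - \tilde{Q}_{\ell+1} \; \geq \; \gamma \cdot P^{\pi_\ell}(Q^* - \tilde{Q}_\ell) + \varrho_{\ell+1}.
\$
Then I plug the induction hypothesis into the right-hand sides. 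The crucial observation is that $P^{\pi}$ is an integral operator with a non-negative kernel, hence monotone: if $f \leq g$ pointwise then $P^{\pi} f \leq P^{\pi} g$, and similarly for compositions of such operators. Therefore applying $P^{\pi^*}$ to the upper bound for $Q^* - \tilde{Q}_\ell$ preserves the inequality for the upper chain, and applying $P^{\pi_\ell}$ to the lower bound preserves the inequality for the lower chain.

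After this substitution, a shift of the summation index $i \mapsto i$ (with the extra term $\varrho_{\ell+1}$ being absorbed as the $i = \ell$ summand, where the trailing operator product $P^{\pi_\ell} P^{\pi_{\ell-1}} \cdots P^{\pi_{i+1}}$ is empty and $\gamma^{\ell - i} = 1$) yields exactly the form of \eqref{eq:multi_step_upper} and \eqref{eq:multi_step_lower} with $\ell$ replaced by $\ell + 1$. The linearity of $P^{\pi}$ is what lets us distribute it across the sum and collapse $P^{\pi^*} \circ (P^{\pi^*})^{\ell-1-i} = (P^{\pi^*})^{\ell-i}$ in the upper bound, and similarly $P^{\pi_\ell} \circ (P^{\pi_{\ell-1}} \cdots P^{\pi_{i+1}}) = (P^{\pi_\ell} P^{\pi_{\ell-1}} \cdots P^{\pi_{i+1}})$ in the lower bound, where the policy indices telescope correctly.

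The only subtlety, and the one point that deserves care in writing the proof, is the bookkeeping of the operator products and the verification that the empty-product convention is consistent with the $\varrho_{\ell+1}$ term contributed by the one-step bound; there is no real analytic obstacle beyond monotonicity and linearity of $P^{\pi}$. I would state the monotonicity of $P^{\pi}$ as a short remark and then present the induction as a compact display, so that the reader can see both the upper and lower bounds propagate in parallel.
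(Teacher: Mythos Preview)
Your proposal is correct and matches the paper's approach: the paper simply states that the bounds follow by iteratively applying \eqref{eq:lemma_compact} using the linearity of $P^{\pi}$, which is exactly your induction argument made explicit. If anything, your write-up is slightly more careful in noting that monotonicity (not just linearity) of $P^{\pi}$ is what allows the inequalities to be preserved under composition.
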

\begin{proof}
Note that $P^{\pi}$ is a linear operator for any policy $\pi$. We obtain \eqref{eq:multi_step_upper} and \eqref{eq:multi_step_lower} by iteratively applying the inequalities in \eqref{eq:lemma_compact}. 
\end{proof}
 
Lemma \ref{lemma:multi_step_err} gives the upper and  lower bounds for the propagation of error through multiple iterations of Algorithm \ref{algo:fit_Q}, which concludes the first step of our proof. 

\vskip4pt
{\noindent \bf Step (ii):} The results in the first step only concern the propagation of error $Q^* - \tilde{Q}_k$. In contrast, the output of Algorithm \ref{algo:fit_Q} is the greedy policy $\pi_k$ with respect to $\tilde{Q}_k$.  In the second step, our goal is to quantify the suboptimality  of $ Q^{\pi_k}$, which is the action-value function corresponding to $\pi_k$. In the following, we establish an upper bound for $Q^* -  Q^{\pi_k}$.

To begin with, we have $	Q^* \geq Q^{\pi_k}$ by the definition of $Q^*$ in  \eqref{eq:optimal_Q}. Note that 
we have $ Q^* = T^{\pi^*} Q^*$ and $
	Q^{\pi_k} = T^{\pi_k} Q^{\pi_k}.
$
Hence, it holds that 
\# \label{eq:greedy1}
	Q^* - Q^{\pi_k} &=   T^{\pi^*} Q^*  - T^{\pi_k} Q^{\pi_k} = T^{\pi^*} Q^*  +  ( - T^{\pi^*} \tilde{Q}_k + T^{\pi^*} \tilde{Q}_k   )  +  ( - T^{\pi_k} \tilde{Q}_k + T^{\pi_k} \tilde{Q}_k ) - T^{\pi_k} Q^{\pi_k} \notag \\
	&=        ( T^{\pi^*} \tilde{Q}_k - T^{\pi_k} \tilde{Q}_k ) +   ( T^{\pi^*} Q^* - T^{\pi^*} \tilde{Q}_k )+  ( T^{\pi_k} \tilde{Q}_k - T^{\pi_k} Q^{\pi_k}  ).
\#
Now we quantify the three terms on the right-hand side of \eqref{eq:greedy1} respectively. First, by Lemma \ref{lemma:aux1}, we have 
\#\label{eq:greedy2}
	T^{\pi^*} \tilde{Q}_k - T^{\pi_k} \tilde{Q}_k = T^{\pi^*} \tilde{Q}_k - T \tilde Q_k \leq 0.
\#
Meanwhile, by the definition of the operator $P^{\pi}$ in \eqref{eq:define_P_pi}, we have
\#\label{eq:greedy3}
T^{\pi^*} Q^* - T^{\pi^*} \tilde{Q}_k = \gamma \cdot P^{\pi^*}   ( Q^* - \tilde{Q}_k ), \qquad T^{\pi_k} \tilde{Q}_k - T^{\pi_k} Q^{\pi_k} = \gamma \cdot P^{\pi_k}  (\tilde Q_k - Q^{\pi_k} ).
\#
Plugging \eqref{eq:greedy2} and \eqref{eq:greedy3} into \eqref{eq:greedy1}, we obtain   
\$
	 Q^* - Q^{\pi_k} & \leq \gamma \cdot P^{\pi^*} (Q^* - \tilde{Q}_k ) + \gamma \cdot P^{\pi_k} ( \tilde{Q}_k - Q^{\pi_k} ) \\
	& = \gamma \cdot ( P^{\pi^*} - P^{\pi_k})  ( Q^* - \tilde{Q}_k )  + \gamma \cdot P^{\pi_k}  ( Q^* - Q^{\pi_k}),
\$
which further implies that   
\$
	( I - \gamma \cdot P^{\pi_k} ) ( Q^* - Q^{\pi_k}) \leq \gamma \cdot ( P^{\pi^*} - P^{\pi_k}) ( Q^* - \tilde{Q}_k).
\$
Here $I$ is the identity operator.
Since $T^{\pi}$ is a $\gamma$-contractive operator for any policy $\pi$, $I - \gamma \cdot P^{\pi}$ is invertible.  Thus, we obtain 
\#\label{eq:greedy_err}
0 \leq Q^* - Q^{\pi_k} \leq \gamma \cdot  ( I - \gamma\cdot  P^{\pi_k} )^{-1} \bigl  [ P^{\pi^*}  (Q^* - \tilde{Q}_k ) - P^{\pi_k}   (Q^* - \tilde{Q}_k )\bigr ],
\#
which relates $Q^* - Q^{\pi_k}$ with $Q^* - \tilde{Q}_k$. In the following,  we plug Lemma \ref{lemma:multi_step_err} into \eqref{eq:greedy_err} to obtain the multiple-step error bounds for $Q^{\pi_k}$. 
 First note that,  by the definition of $P^{\pi}$ in \eqref{eq:define_P_pi}, for any functions $f_1, f_2: \cS \times \cA \to \R $ satisfying $f_1 \geq f_2$,  we  have
$
P^{\pi} f_1 \geq P^{\pi} f_2.
$
Combining this inequality with the upper bound  in \eqref{eq:multi_step_upper} and  the lower bound in \eqref{eq:multi_step_lower}, we have that, for any $k < \ell$, 
\#
P^{\pi^*}(Q^* - \tilde{Q}_\ell) & \leq  \sum_{i=k}^{\ell-1} \gamma^{\ell-1-i} \cdot (P^{\pi^*} )^{\ell-i} \varrho_{i+1} + \gamma^{\ell -k}\cdot  (P^{\pi^*})^{\ell-k+1} ( Q^* - \tilde{Q}_k),  \label{eq:upper_seq_1}\\
P^{\pi_{\ell}}  (Q^* - \tilde{Q}_\ell ) & \geq \sum_{i=k}^{\ell-1} \gamma^{\ell-1-i} \cdot (P^{\pi_{\ell}}P^{\pi_{\ell-1} }\cdots P^{\pi_{i+1}}) \varrho_{i+1}\notag\\
&\qquad \qquad + \gamma^{\ell-k} \cdot  ( P^{\pi_{\ell}} P^{\pi_{\ell-1}}\cdots P^{\pi_{k}}) ( Q^* - \tilde{Q}_k). \label{eq:upper_seq_2}
\#
Then we plug \eqref{eq:upper_seq_1} and \eqref{eq:upper_seq_2} into \eqref{eq:greedy_err}  and obtain
 \#\label{eq:greedy_err_multiple}
	0 \leq Q^* - Q^{\pi_{\ell} }& \leq     ( I - \gamma\cdot  P^{\pi_{\ell} } )^{-1} \bigg \{ \sum_{i=k}^{\ell-1} \gamma^{\ell  - i} \cdot \bigl [  (P^{\pi^*} )^{\ell-i} -  (P^{\pi_{\ell}}P^{\pi_{\ell-1} }\cdots P^{\pi_{i+1}}) \bigr ] \varrho_{i+1} \notag\\
	&\qquad \qquad \qquad \qquad +  \gamma^{\ell + 1-k}\cdot \bigl [ (P^{\pi^*} )^{\ell-k+1} -  (P^{\pi_{\ell}}P^{\pi_{\ell-1} }\cdots P^{\pi_{k}}) \bigr ] ( Q^* - \tilde{Q}_k) \bigg\} 
\#
for any $ k < \ell$. To quantify the error of $Q^{\pi_K}$, we set $\ell = K$ and $k =0$ in \eqref{eq:greedy_err_multiple} to obtain 
\#\label{eq:greedy_err_K}
0 \leq Q^* - Q^{\pi_K}   & \leq ( I - \gamma P^{\pi_K} ) ^{-1} \bigg \{ \sum_{i=0}^{K -1} \gamma^{K   - i} \cdot \bigl [  (P^{\pi^*} )^{K -i} -  (P^{\pi_{K}}P^{\pi_{K-1} }\cdots P^{\pi_{i+1}}) \bigr ] \varrho_{i+1} \notag\\
	&\qquad \qquad \qquad \qquad +  \gamma^{K + 1}\cdot \bigl [ (P^{\pi^*} )^{K+1} -  (P^{\pi_{K}}P^{\pi_{K-1} }\cdots P^{\pi_{0}}) \bigr ] ( Q^* - \tilde{Q}_0) \bigg\} .
\#
For notational simplicity, we define 
\#\label{eq:define_alpha_param}
\alpha_{i} & = \frac{(1-\gamma) \gamma^{K-i-1}}{1-\gamma^{K+1}}, ~~\text{for}~~ 0 \leq i \leq  K-1, ~~\text{and}~~  
	\alpha_{K } = \frac{(1-\gamma) \gamma^{K}}{1-\gamma^{K+1}}.
\#
One can show that $\sum_{i=0}^K \alpha_ i = 1$. Meanwhile, we define $K+1$ linear operators $\{O_k \}_{ k=0}^K $ by 
\$
O_i &= {(1 - \gamma)}/{2} \cdot ( I - \gamma P^{\pi_K})^{-1}  \bigl [  (P^{\pi^*} )^{K -i} + (P^{\pi_{K}}P^{\pi_{K-1} }\cdots P^{\pi_{i+1}}) \bigr ], ~~\text{for}~~ 0 \leq i \leq K-1,\\
O_K & = {(1 - \gamma)}/{2} \cdot  ( I - \gamma P^{\pi_K})^{-1} \bigl [ (P^{\pi^*} )^{K+1} + (P^{\pi_{K}}P^{\pi_{K-1} }\cdots P^{\pi_{0}}) \bigr ].
\$
Using this notation, for any $(s, a) \in \cS \times \cA$, by \eqref{eq:greedy_err_K} we have 
\#\label{eq:absolute_value_bound}
&\bigl | Q^* (s,a) - Q^{\pi_K} (s,a) \bigr | \notag\\
&\qquad  \leq \frac{2 \gamma ( 1 - \gamma^{K+1} ) }{(1- \gamma)^2}  \cdot \biggl [ \sum_{i=0}^{K-1} \alpha_i \cdot \bigl  ( O_i | \varrho_{i+1} | \bigr ) (s,a) + \alpha_K  \cdot \bigl ( O_K | Q^* - \tilde Q_0 | \bigr ) (s,a) \biggr ],
\#
where both $O_i | \varrho_{i+1} | $ and $O_K | Q^* - \tilde Q_0 |$ are functions defined on $\cS \times \cA$.  Here \eqref{eq:absolute_value_bound} gives a uniform upper bound for $Q^* - Q^{\pi_K}$, which concludes the second step.

\vskip4pt
{\noindent \bf Step (iii):} In this step, we conclude the proof by establishing an upper bound for $\| Q^* - Q^{\pi_{K}} \|_{1, \mu}$ based on \eqref{eq:absolute_value_bound}. Here $\mu\in \cP(\cS \times \cA)$  is a fixed probability distribution. To simplify the notation, 
 for any measurable function $f: \cS \times \cA \to \R$, we denote $\mu (f) $ to be the expectation of $f$ under $\mu$, that is, 
$
	\mu(f)  = \int_{\cS \times \cA} f(s, a) \ud\mu(s,a).
$
Using this notation,  by \eqref{eq:absolute_value_bound} we  bound $\|   Q^* - Q^{\pi_{\ell}} \|_{1, \mu}$ by 
\#\label{eq:lp_norm_p}
 \| Q^* - Q^{\pi_{K}} \|_{1, \mu} & = \mu  \bigl( | Q^* -  Q^{\pi_{K} } | \bigr)  \notag \\
&  \leq  \frac{2 \gamma ( 1 - \gamma^{K+1} ) }{(1- \gamma)^2}    \cdot  \mu  \biggl [ \sum_{i=0}^{K-1} \alpha_i \cdot \bigl  ( O_i | \varrho_{i+1} | \bigr )   + \alpha_K  \cdot \bigl ( O_K | Q^* - \tilde Q_0 | \bigr )   \biggr ] .
\#
By the linearity of expectation, \eqref{eq:lp_norm_p} implies  
\#\label{eq:lp_norm1}
	\|  Q^* - Q^{\pi_K }\|_{1, \mu}   \leq   \frac{2 \gamma ( 1 - \gamma^{K+1} ) }{(1- \gamma)^2}  \cdot  \biggl [  \sum_{i=0}^{K-1}  \alpha_i \cdot  \mu \bigl ( O_{ i} |\varrho_{i+1}|   \bigr )+ \alpha_{K } \cdot \mu \bigl ( O_{K } | Q^* - \tilde Q_0 |   \bigr )\biggr ]  . 
\#
Furthermore, since both $Q^*$ and $\tilde Q_0$ are bounded by $V_{\max } = R_{\max} / ( 1-\gamma )$ in $\ell_{\infty}$-norm, we have 
\#\label{eq:last_term}
  \mu \bigl ( O_{K } | Q^* - \tilde Q_0 |   \bigr ) \leq    2 \cdot R_{\max} / ( 1- \gamma).
  \#
  Moreover, for any $i \in \{0, \ldots, K-1\}$, by expanding $( 1-\gamma P^{\pi_K})^{-1}$ into a infinite series, 
 we have
\#\label{eq:other_terms}
	\mu\bigl ( O_{ i} |\varrho_{i+1}|  \bigr)  & = \mu \biggl \{ \frac{1-\gamma}{2}\cdot ( 1-\gamma P^{\pi_K})^{-1} \bigl [ ( P^{\pi^*}) ^{K-i} +  (P^{\pi_{K}}P^{\pi_{K-1}}   \cdots P^{\pi_{i+1}})\bigr ] |\varrho_{i+1}| \biggr \} \notag \\ 
	& = \frac{1-\gamma}{2}\cdot  \mu \biggl \{ \sum_{j=0}^\infty \gamma^j \cdot \bigl [ (P^{\pi_K})^j(P^{\pi^*})^{K-i} + (P^{\pi_K})^{j+1} (P^{\pi_{K-1}}   \cdots P^{\pi_{i+1}})\bigr ] |\varrho_{i+1}|   \biggr \}.
\#
To upper bound the right-hand side of \eqref{eq:other_terms}, 
  we consider the following quantity 
\#\label{eq:one_term_mupi}
\mu \bigl[ (P^{\pi_K})^j(P^{\tau_m} P^{\tau_{m-1}}   \cdots P^{\tau_{1}} )  f \bigr ]= \int_{\cS \times \cA} \bigl[ (P^{\pi_K})^j(P^{\tau_m} P^{\tau_{m-1}}   \cdots P^{\tau_{1}} ) f\bigr ] ( s, a)  \ud\mu(s, a).
\#
Here $\tau_1, \ldots, \tau_m$ are   $m$ policies.
Recall that  $P^{\pi}$ is the transition operator of a Markov process defined on $\cS \times \cA$ for any policy $\pi$. Then the integral on the right-hand side of \eqref{eq:one_term_mupi} corresponds to the expectation of  the function $f(X_t) $, where $\{ X_t \}_{t\geq 0}$   is a Markov process defined on $\cS \times \cA$. Such a Markov process has initial distribution $X_0 \sim \mu$. The first $m$ transition operators are $\{ P^{\tau_j} \} _{j\in [m]}$, followed by $j$ identical transition operators $P^{\pi_K}$. Hence, $   (P^{\pi_K})^j(P^{\tau_m} P^{\tau_{m-1}}   \cdots P^{\tau_{1}} )\mu   $ is the marginal distribution of $X_{j+m}$, which we denote by $\tilde \mu_j$ for notational simplicity. 
Hence,  \eqref{eq:one_term_mupi} takes the form 
\#\label{eq:one_term_mupi2}
\mu \bigl[ (P^{\pi_K})^j(P^{\tau_m} P^{\tau_{m-1}}   \cdots P^{\tau_{1}} )  f \bigr ] =  \EE \bigl[ f(X_{j+m} ) \bigr]= \tilde \mu_j (f)  = \int_{\cS \times \cA} f(s,a) \ud  \tilde \mu_j (s, a)
\#
for any measurable function $f$ on $\cS \times \cA$.
By Cauchy-Schwarz  inequality, we have 
\#\label{eq:apply_radon}
\tilde \mu_j (f)    \leq  \biggl [ \int _{\cS \times \cA} \Bigl | \frac{\ud \tilde \mu_j} {\ud\sigma}(s,a)  \Bigr|^2 \ud\sigma(s,a)\biggr ]  ^{1/2 } \biggl [ \int_{\cS \times \cA} | f(s,a)|  ^{2 } \ud\sigma(s,a)\biggr ]^{1/2} ,	
\#
in which $ \ud \tilde \mu_j / \ud \sigma\colon \cS \times \cA \rightarrow \RR$ is the Radon-Nikodym derivative. Recall that the $(m+j)$-th order concentration coefficient $\kappa(m+j; \mu, \sigma)$ is defined in \eqref{eq:concentration_coef}.
Combining \eqref{eq:one_term_mupi2} and \eqref{eq:apply_radon}, we obtain  
\$
\tilde \mu_j (f)   \leq \kappa  (m+j; \mu, \sigma) \cdot \| f \| _{\sigma}.
\$
Thus, by \eqref{eq:other_terms} we have 
\#\label{eq:other_terms2}
\mu\bigl ( O_{ i} |\varrho_{i+1}| \bigr) & =   \frac{1-\gamma}{2}\cdot   \sum_{j=0}^\infty \gamma^j \cdot \Bigl \{ \mu \bigl[ (P^{\pi_K})^j(P^{\pi^*})^{K-i}|\varrho_{i+1}|  \bigr ] + \mu \bigl [  (P^{\pi_K})^{j+1} (P^{\pi_{K-1}}   \cdots P^{\pi_{i+1}})|\varrho_{i+1}| \bigr ]  \Bigr \}  \notag \\
&\leq (1-\gamma )  \cdot \sum_{j=0}^\infty \gamma^j  \cdot  \kappa ( K - i + j; {\mu, \sigma}) \cdot \| \varrho_{i+1}\|_{\sigma}.
 \#
 Now we combine \eqref{eq:lp_norm1}, \eqref{eq:last_term}, and \eqref{eq:other_terms2} to obtain
 \$
& \|  Q^* - Q^{\pi_K }\|_{1, \mu}   \leq \frac{2 \gamma ( 1 - \gamma^{K+1} ) }{(1- \gamma)^2}   \cdot  \biggl [  \sum_{i=0}^{K-1}  \alpha_i \cdot  \mu \bigl ( O_{ i} |\varrho_{i+1}|   \bigr )+ \alpha_{K } \cdot \mu \bigl ( O_{K } | Q^* - \tilde Q_0 |   \bigr )\biggr ]   \\
 &\qquad \leq \frac{2 \gamma ( 1 - \gamma^{K+1} ) }{(1- \gamma) }   \cdot  \biggl [  \sum_{i=0}^{K-1} \sum_{j=0}^{\infty}\alpha_i \cdot  \gamma^j \cdot \kappa( K- i + j; \mu, \sigma) \cdot \| \varrho_{i+1} \|_{\sigma} \biggr ] + \frac{4 \gamma ( 1 - \gamma^{K+1} ) }{(1- \gamma) ^3 } \cdot \alpha_K \cdot  R_{\max}. \notag
 \$
Recall that in Theorem \ref{thm:err_prop} and \eqref{eq:rhok} we define $\varepsilon_{\max} = \max_{i \in [K] } \| \varrho_{i} \|_{\sigma}$. We have that $\|  Q^* - Q^{\pi_K }\|_{1, \mu} $ is further upper bounded by 
\#\label{eq:p_norm_final2}
& \|  Q^* - Q^{\pi_K }\|_{1, \mu}\\
&\qquad  \leq  \frac{2 \gamma ( 1 - \gamma^{K+1} ) }{(1- \gamma) }   \cdot  \biggl [  \sum_{i=0}^{K-1} \sum_{j=0}^{\infty}\alpha_i \cdot  \gamma^j \cdot \kappa( K- i + j; \mu, \sigma) \biggr ] \cdot \varepsilon_{\max} + \frac{4 \gamma ( 1 - \gamma^{K+1} ) }{(1- \gamma) ^3 }   \cdot \alpha_K \cdot  R_{\max} \notag \\
&\qquad  =  \frac{2 \gamma ( 1 - \gamma^{K+1} ) }{(1- \gamma) }   \cdot  \biggl [  \sum_{i=0}^{K-1} \sum_{j=0}^{\infty} \frac{( 1 - \gamma) \gamma ^{K-i - 1} }{ 1 - \gamma ^{K+1} } \cdot \gamma^j \cdot \kappa( K- i + j; \mu, \sigma)   \biggr ] \cdot \varepsilon_{\max} + \frac{4   \gamma^{K+1}   }{(1- \gamma) ^2 }  \cdot R_{\max},\notag
\#
where the last equality follows from the definition of $\{ \alpha_i\}_{0\leq i\leq K}$  in \eqref{eq:define_alpha_param}. We simplify the summation on the right-hand side of \eqref{eq:p_norm_final2} and use Assumption \ref{assume:concentrability} to obtain
\#\label{eq:simplify_computation}
&\sum_{i=0}^{K-1} \sum_{j=0}^{\infty} \frac{( 1 - \gamma) \gamma ^{K-i - 1} }{ 1 - \gamma ^{K+1} } \cdot \gamma^j \cdot \kappa( K- i + j; \mu, \sigma) \notag \\
&\qquad    = \frac{1 - \gamma}{1 - \gamma ^{K+1}}  \sum_{j=0}^{\infty} \sum_{i=0}^{K-1}   \gamma^{K- i +j -1} \cdot  \kappa( K - i + j ; \mu, \sigma)  \notag \\
& \qquad    \leq  \frac{1 - \gamma}{1 - \gamma ^{K+1}}  \sum _{m = 0}^{\infty} \gamma ^{m - 1} \cdot m \cdot \kappa( m ; \mu, \sigma)  \leq       \frac{\phi_{\mu, \sigma}}{(1 - \gamma ^{K+1}) (1 -\gamma) }  ,
\#
where the last inequality follows from \eqref{eq:assume:concentrability} in Assumption \ref{assume:concentrability}. 
Finally,  combining \eqref{eq:p_norm_final2} and  \eqref{eq:simplify_computation}, we obtain 
 \$
  \|  Q^* - Q^{\pi_K }\|_{1, \mu}  \leq \frac{2\gamma\cdot \phi_{\mu, \sigma}} {(1 -\gamma)^2 }  \cdot \varepsilon_{\max} + \frac{4   \gamma^{K+1}   }{(1- \gamma) ^2 }\cdot   R_{\max},
 \$
 which concludes the third step and hence the proof of Theorem \ref{thm:err_prop}.
\end{proof}

%


\subsection{Proof of Theorem \ref{thm:each_term_error}} \label{proof:each_term_error}

\begin{proof} 
Recall that in Algorithm \ref{algo:fit_Q} we define $Y_i =   
 R_i + \gamma \cdot \max _{a\in \cA}   Q  (S_{i+1}, a)$, where $Q $ is any function in $\cF$. By definition, we have  $\EE (Y_i \given S_i = s, A_i=a) = (T   Q )(s,a) $ for any $(s, a) \in \cS \times \cA$. Thus, $T  Q $ can be viewed as the underlying truth of the regression problem defined in \eqref{eq:define_fit_Q}, where the  covariates and responses are $\{ (S_i, A_i)\}_{i \in [n]}$ and $\{Y_i \}_{i\in [n]}$, respectively.   Moreover,  note that $TQ$ is not necessarily in  function class $\cF$. We denote by  $ Q ^*$ the best approximation of $T   Q $ in $\cF$, which is the solution to  
\#\label{eq:population}
\minimize _{f \in \cF} \| f - T  Q  \|_{\sigma} ^2 = \EE \Bigl \{ \bigl[  f(S_i , A_i) -   Q  (S_i, A_i)  \bigr]^2 \Bigr \}. 
\#
 For notational simplicity, in the sequel we denote $ (S_i, A_i)$  by $X_i$ for all $i \in [n]$. 
For any $f \in \cF$, we define $\| f \|_{n} ^2= 1/n\cdot \sum_{i=1}^n [ f  (X_i)]^2$.  
 Since both $\hat Q$ and $TQ$ are bounded by $V_{\max}  = R_{\max} / ( 1 - \gamma)$, 
 we only need to consider the case where $\log  N_{\delta} \leq n$. Here $N_{\delta}$ is the cardinality of $\cN( \delta, \cF, \| \cdot \|_{\infty})$.
 Moreover, let $ f_{1}, \ldots, f_{N_{\delta}} $ be the centers of the minimal $\delta$-covering of $\cF$. 
Then by the   definition of $\delta$-covering, there exists $k^*\in [N_\delta]$ such that $\| \hat Q - f_{k^*} \|_{\infty} \leq \delta$. It is worth mentioning that $k^*$ is a  random variable since $\hat Q$ is obtained from data. 

In the following, we prove \eqref{eq:one_iter_bound} in two steps, which are bridged by $\EE [\| \hat Q - TQ \|_{n}^2]$.

\vskip4pt
{\noindent \bf Step (i):}
We relate  $\EE [ \| \hat Q - T Q \|_{n} ^2]  $ with its empirical counterpart $\| \hat Q - T Q \|_{n}^2$. Recall that we define $Y_i = R_i + \gamma \cdot \max_{a\in \cA} Q(S_{i+1}, a) $  for each $i \in [n]$.  
By the definition of  $\hat Q$,  for any $f \in \cF$ we have 
\#\label{eq:erm_hatQ}
\sum_{i=1}^n \bigl [ Y_i - \hat Q(X_i)  \bigr ] ^2 \leq \sum_{i=1}^n \bigl [ Y_i -  f(X_i) \bigr  ] ^2.
\#
For each $i \in [n]$, we define $\xi_i = Y_i - (TQ) (X_i) $. Then  \eqref{eq:erm_hatQ} can be written as
\#\label{eq:erm_hatQ1}
\| \hat Q - TQ \|_{n}^2 \leq \| f - TQ \|_n^2 + \frac{2}{n} \sum_{i=1}^n \xi_i \cdot \bigl [ \hat Q (X_i) - f (X_i) \bigr].
\#
Since both $f$ and $Q$ are deterministic, we have $\EE ( \| f - TQ \|_n^2 ) = \| f - TQ \|_{\sigma}^2$. Moreover, since  $\EE ( \xi_i \given X_i) = 0$ by definition, we  have $\EE [ \xi_i \cdot g(X_i) ] = 0$ for any bounded and measurable function $g  $. Thus, it holds that 
\#\label{eq:cross_term0}
\EE \biggl \{ \sum_{i=1}^n \xi_i \cdot \bigl [ \hat Q (X_i) - f (X_i) \bigr] \biggr \} = \EE \biggl \{ \sum_{i=1}^n \xi_i \cdot \bigl [ \hat Q (X_i) - (T Q) (X_i) \bigr] \biggr \} .
\#
In addition, by triangle inequality and \eqref{eq:cross_term0}, we have
\#\label{eq:cross_term1}
& \biggl | \EE \biggl \{ \sum_{i=1}^n \xi_i \cdot \bigl [ \hat Q (X_i) - (T Q)(X_i) \bigr] \biggr \}  \biggr | \notag \\
&\qquad \leq \biggl | \EE \biggl \{ \sum_{i=1}^n \xi_i \cdot \bigl [ \hat Q (X_i) - f_{k^*} (X_i) \bigr] \biggr \}  \biggr | + \biggl | \EE \biggl \{ \sum_{i=1}^n \xi_i \cdot \bigl [ f_{k^*} (X_i) - (T Q) (X_i) \bigr] \biggr \}  \biggr |,
\# 
where $f_{k^*}$ satisfies $\| f_{k^*} - \hat Q \|_{\infty} \leq \delta$. In the following, we upper bound the two terms on the right-hand side of \eqref{eq:cross_term1} respectively. 
For the first term, by applying Cauchy-Schwarz inequality twice, we have 
\#\label{eq:first_term}
& \biggl | \EE \biggl \{ \sum_{i=1}^n \xi_i \cdot \bigl [ \hat Q (X_i) - f_{k^*} (X_i) \bigr] \biggr \}  \biggr |  \leq \sqrt{n} \cdot \biggl | \EE \biggl [ \biggl ( \sum_{i=1}^n \xi_i^2 \biggr ) ^{1/2} \cdot \| \hat Q - f_{k^*}  \|_{n} \biggr ]  \biggr |  \notag \\
&\qquad \leq \sqrt{n}  \cdot \biggl [ \EE \biggl ( \sum_{i=1}^n \xi_i^2 \biggr ) \biggr ]^{1/2} \cdot  \Bigl [ \EE \bigl ( \| \hat Q - f_{k^*} \|_{n}^2   \bigr ) \Bigr ] ^{1/2}  \leq n \delta \cdot \bigl [  \EE(\xi_i^2 )\bigr ]^{1/2}  ,
\#
where we use the fact that $\{\xi_i \}_{i\in [n] } $
have the same marginal distributions and $\| \hat Q - f_{k^*} \|_{n} \leq \delta$. Since both $ Y_i $ and $TQ$ are bounded by $V_{\max}  $, $\xi_i$ is a bounded random variable by its definition. Thus, there exists a constant $C_{\xi} >0$ depending on $\xi$ such that $\EE (\xi_i^2) \leq C_{\xi}^2 \cdot V_{\max}^2.$ Then \eqref{eq:first_term} implies  
\#\label{eq:first_term_final}
\biggl | \EE \biggl \{ \sum_{i=1}^n \xi_i \cdot \bigl [ \hat Q (X_i) - f_{k^*} (X_i) \bigr] \biggr \}  \biggr | \leq C_{\xi} \cdot V_{\max}\cdot n \delta.
\#

It remains to upper bound the second term on the right-hand side of \eqref{eq:cross_term1}. We first define $N_{\delta}$ self-normalized random variables
\#\label{eq:define_Z_j}
Z_j = \frac{1}{\sqrt{n}} \sum_{i=1}^n   \xi_i \cdot \bigl [ f_j (X_i) - (TQ)(X_i) \bigr ]  \cdot    \| f_j - (TQ) \|_{n}^ {-1}    
\#
for all $j \in [N_{\delta}]$. Here recall that $\{ f_j \}_{j \in [N_{\delta} ]}$ are the centers of the minimal $\delta$-covering of $\cF$.
Then we have 
\#\label{eq:second_term1}
&\biggl | \EE \biggl \{ \sum_{i=1}^n \xi_i \cdot \bigl [ f_{k^*} (X_i) - (T Q) (X_i) \bigr] \biggr \}  \biggr | = \sqrt{n} \cdot \EE \bigl [ \| f_{k^*} - TQ \|_{n} \cdot |Z_{k^*}| \bigr] \notag \\
&\qquad \leq  \sqrt{n} \cdot \EE \Bigl \{ \bigl [ \| \hat Q - TQ \|_{n} + \| \hat Q - f_{k^*}  \|_{n}  \bigr ] \cdot |Z_{k^*} |\Bigr \}  \leq \sqrt{n} \cdot \EE  \Bigl \{  \bigl[ \| \hat Q - TQ \|_{n} + \delta \bigr] \cdot |Z_{k^*} |\Bigr \},
\#
where the first inequality follows from triangle inequality and the second inequality follows from the fact that $\| \hat Q - f_{k^*} \|_{\infty} \leq \delta$. Then applying Cauchy-Schwarz inequality to the last term on the right-hand side of \eqref{eq:second_term1}, we obtain  
\#\label{eq:second_term2}
\EE  \Bigl \{  \bigl[ \| \hat Q - TQ \|_{n} + \delta \bigr] \cdot |Z_{k^*} |\Bigr \} &  \leq \biggl (\EE \Bigl \{ \bigl[ \| \hat Q - TQ \|_{n} + \delta \bigr]^2 \Bigr \} \biggr )^{1/2} \cdot \bigl [ \EE ( Z_{k^*}^2 ) \bigr ]^{1/2} \notag \\
& \leq  \biggl ( \Bigl \{ \EE \bigl [ \| \hat Q - TQ \|_{n}^2 \bigr ] \Bigr \}^{1/2} + \delta \biggr ) \cdot  \Bigl [ \EE \bigl ( \max_{j\in [N]} Z_{j}^2 \bigr ) \Bigr ]^{1/2},
\#
where the last inequality follows from  \$\EE \bigl [ \|\hat Q - TQ \|_{n} \bigr ] \leq \Bigl \{ \EE \bigl [ \| \hat Q - TQ \|_{n}^2 \bigr ] \Bigr \}^{1/2}. \$ 
Moreover, since $\xi_i$ is centered conditioning on $\{ X_i\}_{i\in [n]} $ and is bounded by $2V_{\max}$, $\xi_i$ is a sub-Gaussian random variable. In specific, there exists an absolute constant $H_{\xi} > 0$ such that   $\| \xi_i  \|_{\psi_2} \leq H_{\xi} \cdot V_{\max}$ for each $i \in [n]$. Here the $\psi_2$-norm of a random variable $W \in \RR $ is defined as 
\$
\| W \|_{\psi_2} = \sup_{p \geq 1 } p^{-1/2} \bigl [ \EE (   |W |^p ) \bigr ]^{1/p}.
\$
By the definition of $Z_j$ in \eqref{eq:define_Z_j}, conditioning on $\{ X_i\}_{i\in [n]} $,  $\xi_i \cdot   [ f_j (X_i) - (T Q) (X_i)  ]$ is  a centered and sub-Gaussian random variable with  
\$ 
\bigl \| \xi_i \cdot  \bigl [ f_j (X_i) - (T Q) (X_i) \bigr  ] \bigr \|_{\psi_2} \leq  H_{\xi} \cdot  V_{\max} \cdot \bigl |   f_j (X_i) - (T Q) (X_i)   \bigr |.
\$
Moreover, since  $Z_j$ is a summation of independent sub-Gaussian random variables, by Lemma 5.9 of  \cite{vershynin2010introduction}, 
the $\psi_2$-norm of $Z_j$  satisfies 
\$
\| Z_j \|_{\psi_2}    \leq C \cdot H_{\xi} \cdot  V_{\max}\cdot \| f_j - TQ \|_{n}^{-1} \cdot \biggl [ \frac{1}{n}\sum_{i=1}^n  \bigl |  [ f_j (X_i) - (T Q) (X_i)  ] \bigr | ^2 \biggr ]^{1/2} \leq C \cdot     H_{\xi} \cdot V_{\max}, 
\$ 
where $C>0$ is an absolute constant. 
Furthermore, by  Lemmas  5.14 and    5.15 of  \cite{vershynin2010introduction}, 
 $Z_j^2$ is a sub-exponential random variable, and  its the moment-generating function is   bounded~by
\#\label{eq:mgf_2}
\EE \bigl [ \exp( t \cdot Z_j^2  ) \bigr ] \leq   \exp (  C\cdot  t^2 \cdot H_{\xi}^4 \cdot V_{\max}^4   )
\#
for any $t   $ satisfying $ C' \cdot |t| \cdot H_{\xi}^2 \cdot V_{\max}^2   \leq 1$, where $C$ and $C'$ are two positive absolute constants.
Moreover, by Jensen's inequality, we  bound the  moment-generating function of $  \max_{j \in [N_{\delta}]} Z_j^2 $ by  
\#\label{eq:jensen}
\EE \Bigl [ \exp \bigl ( t \cdot \max_{j \in [N_{\delta}] }Z_j^2  \bigr ) \Bigr ]  \leq \sum_{j \in [N_{\delta}]}  \EE \bigl [ \exp(t \cdot Z_j^2 ) \bigr ] .
\#
Combining \eqref{eq:mgf_2} and \eqref{eq:jensen}, we have 
\#\label{eq:conclude_maxima}
\EE \bigl ( \max_{j \in [N]} Z_j^2 \bigr ) \leq  C ^2 \cdot H_{\xi}^2 \cdot V_{\max} ^2 \cdot \log N_{\delta}   ,
\#
where $C >0$ is an absolute constant. 
Hence, plugging \eqref{eq:conclude_maxima} into \eqref{eq:second_term1} and \eqref{eq:second_term2}, we upper bound the second term of the right-hand side of \eqref{eq:cross_term0} by
\#\label{eq:second_term_final}
&\biggl | \EE \biggl \{ \sum_{i=1}^n \xi_i \cdot \bigl [ f_{k^*} (X_i) - (T Q) (X_i) \bigr] \biggr \}  \biggr | \notag\\
&\qquad \leq   \biggl ( \Bigl \{ \EE \bigl [ \| \hat Q - TQ \|_{n}^2 \bigr ] \Bigr \}^{1/2} + \delta \biggr ) \cdot C \cdot H_{\xi}\cdot V_{\max}\cdot  \sqrt{n \cdot \log N_{\delta}}.
\#
Finally, combining  \eqref{eq:erm_hatQ1}, \eqref{eq:first_term_final} and \eqref{eq:second_term_final}, we obtain the following   inequality 
\#\label{eq:final_inquality}
\EE \bigl [  \| \hat Q - TQ \|_{n}^2 \bigr ] &  \leq \inf_{f \in \cF} \EE \bigl [ \| f - TQ \|_{n}^2  \bigr ] + C_{\xi} \cdot V_{\max} \cdot  \delta \\
&\qquad + \Bigl ( \bigl \{ \EE \bigl [ \| \hat Q - (TQ) \|_{n}^2 \bigr ] \bigr \}^{1/2} + \delta \Bigr ) \cdot C \cdot H_{\xi} \cdot V_{\max} \cdot  \sqrt{  \log N_{\delta} / n} \notag \\
&\leq C \cdot V_{\max}  \sqrt{  \log N_{\delta} / n} \cdot  \Bigl \{ \EE \bigl [ \| \hat Q - (TQ) \|_{n}^2 \bigr ] \Bigr \}^{1/2} + \inf_{f \in \cF} \EE \bigl [ \| f - TQ \|_{n}^2  \bigr ] + C'  \cdot V_{\max} \delta, \notag
\#
where $C$ and $C'$ are two positive absolute constants. Here in the first inequality we take the infimum over $\cF$ because \eqref{eq:erm_hatQ} holds for any $f \in \cF$, and the second inequality holds because $\log N_{\delta} \leq n$.

Now we invoke a   simple fact to obtain the final bound for $\EE [ \| \hat Q - TQ \|_{n}^2  ] $ from \eqref{eq:final_inquality}. Let $a, b, $ and $c$ be positive numbers satisfying $a^2 \leq 2 ab + c$. For any $\epsilon \in (0,1],$ since $2ab \leq \epsilon \cdot  a^2 / (1+ \epsilon) + (1 +\epsilon) \cdot b^2 / \epsilon $,
we have 
\#\label{eq:relation_quadratic}
a^2 \leq ( 1+ \epsilon ) ^2\cdot  b^2 / \epsilon + ( 1+ \epsilon) \cdot c.
\#
Therefore, applying \eqref{eq:relation_quadratic} to \eqref{eq:final_inquality} with $a^2 =  \EE  [  \| \hat Q - TQ \|_{n}^2  ]$, $b = C \cdot V_{\max} \cdot\sqrt{  \log N_{\delta} / n}$, and $c = \inf_{f \in \cF} \EE [ \| f - TQ \|_{n}^2    ] + C' \cdot V_{\max} \cdot \delta$, 
 we obtain
  \#\label{eq:step1_final}
 \EE \bigl [  \| \hat  Q- TQ \|_{n}^2 \bigr ] \leq (1 + \epsilon) \cdot \inf_{f \in \cF}   \EE \bigl[ \| f - TQ \|_{n}^2 \bigr]  +     C \cdot V_{\max} ^2\cdot \log N_{\delta}/ (n \epsilon ) + C' \cdot V_{\max } \cdot \delta, 
 \#
 where $C$ and $C'$ are two positive absolute constants.  Now we conclude the first step.

\vskip4pt 
 {\noindent \bf Step (ii).} In this step, we relate the population risk  $\| \hat Q - TQ \|_{\sigma}^2 $ with $\EE [ \| \hat Q - TQ \|_{n}^2   ] $, which is characterized in the first step.
 To begin with, we generate $n$ i.i.d. random variables $\{ \tilde X_i = (\tilde S_i, \tilde A_i )\}_{i\in [n]}$ following $\sigma$, which are independent of $\{(S_i, A_i, R_i, S_i ')\}_{i\in[n]}$. 
 Since $\| \hat Q - f_{k^*} \|_{\infty} \leq \delta$, 
 for any $x \in \cS\times \cA$, we have 
 \#\label{eq:triangle_quad1}
 & \Bigl|   \bigl[ \hat Q(x) - (TQ)(x) \bigr]^2 - \bigl[ f_{k^*} (x) - (TQ)(x) \bigr]^2 \Bigr | \notag \\
 &\qquad = \bigl | \hat Q (x) - f_{k^*} (x) \bigr | \cdot \bigl|\hat Q(x) + f_{k^*} (x) -2 (TQ)(x) \bigr| \leq 4 V_{\max} \cdot \delta , 
 \#
 where the last inquality follows from the fact that $ \| T Q \|_{\infty} \leq V_{\max}   $ and $\| f \|_{\infty} \leq V_{\max} $  for any $f \in \cF$.  
 Then by the definition of $\| \hat Q - TQ \|_{\sigma}^2 $ and \eqref{eq:triangle_quad1}, we have 
 \#\label{eq:var_bound_first_upper}
 &\| \hat Q - TQ \|_{\sigma}^2 = \EE \biggl \{ \frac{1}{n} \sum_{i=1}^n \bigl[ \hat Q(\tilde X_i ) - (TQ) ( \tilde X_i) \bigr ] ^2 \biggr \} \notag \\
 &\qquad \leq  \EE \biggl \{ \| \hat Q - TQ \|_{n}^2 + \frac{1}{n} \sum_{i=1}^n  \bigl [ f_{k^*}(\tilde X_i  ) - (TQ) (\tilde X_i) \bigr ]^2 - \frac{1}{n}    \sum_{i=1}^n  \bigl [  f_{k^*} ( X_i  ) - (TQ) (\tilde X_i) \bigr ]^2  \biggr \} + 8 V_{\max} \cdot \delta \notag \\
 &\qquad  = \EE \bigl ( \| \hat Q - TQ \|_n^2 \bigr ) + \EE \biggl [ \frac{1}{n} \sum_{i=1}^n h_{k^*} (X_i, \tilde X_i) \biggr ] + 8 V_{\max} \cdot \delta,
 \#
 where we apply \eqref{eq:triangle_quad1} to obtain the first inequality, and in the last equality we define 
 \#\label{eq:define_hj}
 h_{j} (x,y) = \bigl[ f_{j} (y) - (TQ)(y) \bigr]^2 - \bigl[ f_{j} (x) - (TQ)(x) \bigr]^2, 
  \# 
 for any $(x, y) \in \cS \times \cA$ and any $j \in [N_\delta]$.
 Note that $h_{k^*}$ is a random function since $k^*$ is  random. 
By the definition of $h_j$ in \eqref{eq:define_hj}, we have $ | h_j (x, y)| \leq 4 V_{\max}^2 $ for any $(x, y)\in \cS \times \cA$  and $\EE[ h_j(X_i, \tilde X_i)] = 0$ for any $i \in [n]$. Moreover, the variance of $h_j(X_i, \tilde X_i)$ is upper bounded by 
\$ 
\Var\bigl[ h_j(X_i, \tilde X_i) \bigr] &= 2 \Var \Bigl \{ \bigl [ f_j (X_i) - (TQ) (X_i )\bigr ]^2 \Bigr \}\notag\\ &\leq 2 \EE \Bigl \{ \bigl [ f_j (X_i) - (TQ) (X_i )\bigr ]^4 \Bigr \} \leq 8 \Upsilon  ^2 \cdot V_{\max}^2,
\$
where we define $\Upsilon$ by letting 
\#\label{eq:define_ups}
\Upsilon^2  = \max \Bigl ( 4  V_{\max}^2  \cdot   \log N_{\delta} / n ,   \max _{j \in [N_{\delta}]} \EE \Bigl \{ \bigl [ f_j (X_i) - (TQ) (X_i )\bigr ]^2 \Bigr \}\Bigr ).
\# 
Furthermore, we define 
\#\label{eq:define_T}
T = \sup_{j\in [N_{\delta}] }     \biggl |  \sum_{i=1}^n h_j(X_i, \tilde X_i ) / \Upsilon  \biggr |  .
\#
Combining \eqref{eq:var_bound_first_upper} and \eqref{eq:define_T}, we obtain  \#\label{eq:upper_risk_T}
\| \hat Q - T Q \|_{\sigma}^2 \leq \EE \bigl [ \| \hat Q - TQ \|_n^2 \bigr ] +   \Upsilon / n   \cdot \EE (T)   + 8 V_{\max} \cdot \delta.
\#
In the sequel, we utilize Bernstein's inequality to establish an upper bound for 
$\EE (T)$, which is stated as follows for completeness.
\begin{lemma}[Bernstein's Inequality] \label{lemma:bernstein}
	Let $U_1, \ldots U_n$ be $n$ independent random variables satisfying $\EE (U_i) = 0$ and $|U_i| \leq M$ for all $i \in [n]$. 
	Then  for any $t >  0$, we have 
	\$
	\PP\biggl( \biggl | \sum_{i=1}^n U_i \biggr | \geq t  \biggr ) \leq 2 \exp \biggl ( \frac{-t^2 }{ 2M\cdot t /3 + 2\sigma^2} \biggr ),
	\$
	where $\sigma^2 = \sum_{i=1}^n \Var(U_i)$ is the variance of $\sum_{i=1}^n U_i$.
\end{lemma}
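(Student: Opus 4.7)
The plan is to prove Lemma \ref{lemma:bernstein} via the classical Chernoff-Cram\'er approach, which consists of three main ingredients: exponential Markov, a per-summand moment generating function (MGF) bound that exploits both boundedness and the variance, and optimization over the free parameter.

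First, I would bound $\PP(\sum_{i=1}^n U_i \geq t)$ via the Chernoff-Cram\'er method. For any $\lambda > 0$,
$$
\PP\biggl(\sum_{i=1}^n U_i \geq t\biggr) \leq e^{-\lambda t} \cdot \EE\Bigl[e^{\lambda \sum_{i=1}^n U_i}\Bigr] = e^{-\lambda t} \prod_{i=1}^n \EE[e^{\lambda U_i}],
$$
where independence is used in the last equality. So the task reduces to controlling each MGF $\EE[e^{\lambda U_i}]$.

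Second, I would derive a Bennett-type MGF bound for a bounded, centered random variable. Writing $\EE[e^{\lambda U_i}] = 1 + \sum_{k \geq 2} \lambda^k \EE[U_i^k]/k!$ (the $k=1$ term vanishes since $\EE[U_i]=0$), and using $|U_i| \leq M$ to estimate $|\EE[U_i^k]| \leq M^{k-2} \Var(U_i)$ for $k \geq 2$, together with the elementary inequality $k! \geq 2 \cdot 3^{k-2}$, gives
$$
\EE[e^{\lambda U_i}] \leq 1 + \frac{\lambda^2 \Var(U_i)}{2} \sum_{k \geq 2} \Bigl(\frac{\lambda M}{3}\Bigr)^{k-2} = 1 + \frac{\lambda^2 \Var(U_i)}{2(1 - \lambda M/3)} \leq \exp\biggl(\frac{\lambda^2 \Var(U_i)}{2(1 - \lambda M/3)}\biggr)
$$
for all $\lambda \in (0, 3/M)$, where the last step uses $1 + x \leq e^x$. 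Taking the product over $i$ and plugging back into the Chernoff bound yields
$$
\PP\biggl(\sum_{i=1}^n U_i \geq t\biggr) \leq \exp\biggl(-\lambda t + \frac{\lambda^2 \sigma^2}{2(1 - \lambda M/3)}\biggr).
$$

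Third, I would choose $\lambda = t/(\sigma^2 + Mt/3) \in (0, 3/M)$, which is a near-minimizer of the exponent. A direct computation then gives the one-sided bound
$$
\PP\biggl(\sum_{i=1}^n U_i \geq t\biggr) \leq \exp\biggl(\frac{-t^2}{2\sigma^2 + 2Mt/3}\biggr).
$$
Finally, applying the identical argument to $-U_1, \ldots, -U_n$ (which satisfy the same hypotheses) and taking a union bound doubles the right-hand side, producing the stated two-sided inequality. The main (and only nontrivial) obstacle is the MGF estimate in step two; once the factorial-bound trick $k! \geq 2 \cdot 3^{k-2}$ is invoked, the rest is routine algebra and optimization in $\lambda$.
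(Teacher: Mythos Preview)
Your proof is correct and follows the standard Chernoff--Cram\'er route for Bernstein's inequality. The paper, however, does not actually prove this lemma: it is stated ``for completeness'' as a classical tool and then applied directly, with no argument supplied. So there is nothing to compare against; you have simply filled in a result the authors quote from the literature. Your MGF bound via $k!\geq 2\cdot 3^{k-2}$ and the choice $\lambda = t/(\sigma^2 + Mt/3)$ are exactly the textbook computation, and the algebra checks out.
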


We first apply Bernstein's inequality by setting $U_i = h_j ( X_i, \tilde X_i) / \Upsilon $ for each $i \in [n]$.  Then we take a union bound for all $j \in [N_{\delta}]$ to obtain  
\#\label{eq:apply_bernstein}
 \PP(T  \geq t) = \PP \biggl [ \sup_{j \in [N_{\delta}] } \frac{1}{n} \biggl| \sum_{i=1}^n h_j(X_i, \tilde X_i) / \Upsilon  \biggr | \geq t \bigg] \leq 2 N_{\delta} \cdot   \exp \biggl \{ \frac{-t^2 }{ 8 V_{\max}^2 \cdot [  t / (3\Upsilon ) + n ] } \biggr \}.
\#
Since $T$ is nonnegative,  we have $ \EE (T) = \int_{0}^{\infty} \PP( T \geq t) \ud t$. Thus, for any   $u \in( 0, 3\Upsilon  \cdot n) $, by \eqref{eq:apply_bernstein} it holds that 
\#\label{eq:expected_T}
\EE (T) &\leq u + \int_u^{\infty } \PP( T \geq t) \ud t \leq u + 2 N_{\delta}  \int_{u }^{3 \Upsilon \cdot n} \exp\biggl( \frac{- t^2}{   16 V_{\max}^2\cdot n } \biggr ) ~\ud t + 2 N _{\delta } \int_{3 \Upsilon  \cdot n }^{\infty} \exp\biggl (  \frac{ -3 \Upsilon \cdot t  }{16 V_{\max}^2 } \biggr ) ~ \ud t \notag \\
& \leq u +   32 N_{\delta}  \cdot  V_{\max}^2 \cdot n /u    \cdot \exp\biggl (  \frac{-u^2}{ 
 16 V_{\max}^2 \cdot n} \biggr ) + 32 N_{\delta}  \cdot V_{\max}^2 / (3 \Upsilon ) \cdot  \exp \biggl( \frac{- 9 \Upsilon^2 \cdot n} { 16 V_{\max}^2 } \biggr ) ,
\#
where in the second inequality we use the fact that $ \int_{s}^{\infty} \exp( -t^2/ 2)\ud t \leq 1/s \cdot \exp(-s^2/2)$ for all $s > 0$. Now we set $u = 4 V_{\max} \sqrt{ n \cdot \log N_{\delta } }$ in \eqref{eq:expected_T} and plug in the definition of $\Upsilon$ in \eqref{eq:define_ups} to obtain  
\#\label{eq:expected_T_final}
\EE(T) \leq 4 V_{\max} \sqrt{ n \cdot \log  N_{\delta }} + 8 V_{\max} \sqrt{n / \log N_{\delta} }  + 6  V_{\max} \sqrt{n / \log  N_{\delta } } \leq 8 V_{\max} \sqrt{ n \cdot \log  N_{\delta }},
\#
where the last inequality holds when $\log N_{\delta } \geq 4.$ 
Moreover,  the definition of $\Upsilon$ in \eqref{eq:define_ups} implies that 
$
\Upsilon \leq \max   [ 2   V_{\max}   \sqrt{ \log N_{\delta } / n}, \| \hat Q - TQ \|_{\sigma}  + \delta  ].
$
In the following,  we only need to consider the case where $\Upsilon \leq \| \hat Q - TQ \|_{\sigma}  + \delta$, since we already have \eqref{eq:one_iter_bound} if $ \| \hat Q - TQ \|_{\sigma}  + \delta  \leq 2   V_{\max}   \sqrt{ \log N_{\delta } / n}$, which concludes the proof.

Then, when $\Upsilon \leq \| \hat Q - TQ \|_{\sigma}  + \delta$ holds, combining \eqref{eq:upper_risk_T} and \eqref{eq:expected_T_final} we obtain 
 \#\label{eq:combine_together_risk}
 \| \hat Q - T Q \|_{\sigma}^2 & \leq  \EE \bigl [ \| \hat Q - TQ \|_n^2 \bigr ] +  8  V_{\max} \sqrt{    \log (N)/ n} \cdot  \| \hat Q - T Q \|_{\sigma} +  8  V_{\max} \sqrt{    \log  N_{\delta }/ n} \cdot \delta        + 8 V_{\max} \cdot \delta \notag \\
 & \leq \EE \bigl [ \| \hat Q - TQ \|_n^2 \bigr ] + 8  V_{\max} \sqrt{    \log  N_{\delta } / n} \cdot  \| \hat Q - T Q \|_{\sigma} + 16 V_{\max} \cdot \delta.
 \#
 We apply the inequality in \eqref{eq:relation_quadratic} to \eqref{eq:combine_together_risk} with $a = \| \hat Q - T Q \|_{\sigma}$, $b = 8 V_{\max} \sqrt{    \log   N_{\delta } / n} $, and $c = \EE [ \| \hat Q - TQ \|_n^2 ] + 16 V_{\max} \cdot \delta$. Hence we finally obtain that 
 \#\label{eq:step2_final}
 \| \hat Q - T Q \|_{\sigma}^2 &\leq (1 + \epsilon) \cdot \EE \bigl [ \| \hat Q - TQ \|_n^2 \bigr ] \notag\\
 &\qquad+ ( 1+ \epsilon) ^2 \cdot 64 V_{\max} \cdot \log (N) / (n \cdot \epsilon)  +  ( 1+ \epsilon ) \cdot 18 V_{\max} \cdot \delta,
 \#
which concludes the second step of the proof.

Finally, combining these two steps together, namely, \eqref{eq:step1_final} and \eqref{eq:step2_final}, we conclude that 
\$
\| \hat Q - T Q \|_{\sigma}^2 \leq  ( 1 + \epsilon)^2   \cdot \inf_{f \in \cF}   \EE \bigl[ \| f - TQ \|_{n}^2 \bigr]  + C_1 \cdot  V_{\max }^2 \cdot \log    N_{\delta}  / ( n \cdot \epsilon)  + C_2 \cdot V_{\max} \cdot \delta,
\$
where $C_1$ and $C_2$ are two absolute constants. Moreover, since  $Q \in \cF$, we have 
\$
  \inf_{f \in \cF}   \EE \bigl[ \| f - TQ \|_{n}^2 \bigr]  \leq \sup _{Q \in \cF}  \Bigl \{ \inf_{f \in \cF}   \EE \bigl[ \| f - TQ \|_{n}^2 \bigr]  \Bigr \}, 
\$
which concludes the proof of Theorem \ref{thm:each_term_error}.
\end{proof}


\section{Proof of Theorem \ref{thm:game}}\label{proof:thm:game}

In this section, we present the proof of Theorem \ref{thm:game}. The proof is   
similar to that of 
 Theorem \ref{thm:main}, which is presented in \S\ref{proof:thm:main} in details.  
 In the following, we follow the proof in  \S\ref{proof:thm:main}  and only 
 highlight the differences for brevity.

 \begin{proof}
 	The proof requires two key ingredients, namely the error propagation and the statistical error incurred by a    single step of Minimax-FQI.  We note that \cite{perolat2015approximate} establish error propagation for the   state-value functions in the approximate modified policy iteration algorithm, which is more general than the FQI algorithm. 
 	\begin{theorem}[Error Propagation] \label{thm:err_prop2}
 		Recall that $\{\tilde Q_k\}_{ 0\leq k \leq K} $ are the iterates of Algorithm \ref{algo:fit_Q2} and $(\pi_K, \nu_K)$ is the equilibrium policy with respect to $\tilde Q_K$. Let   $Q_K^*$ be the action-value function corresponding to $(\pi_K, \nu_{\pi_K}^*)$, where $\nu_{\pi_K}^*$ is the best-response policy of the second player against $\pi_K$. 
 	Then under  Assumption \ref{assume:concentrability2}, we have 
 		\#\label{eq:err_prop_final2}
 		\|  Q^* - Q_K^*\|_{1, \mu}  \leq  \frac{2\phi_{\mu, \rho} \cdot \gamma} {(1 -\gamma)^2 } \cdot \varepsilon_{\max} + \frac{4   \gamma^{K+1}   }{(1- \gamma) ^2 }  \cdot R_{\max},
 		\#
 		where we define the maximum one-step approximation error  $\varepsilon_{\max} = \max_{ k \in [K] } \| T \tilde Q_{k-1} - \tilde Q_{k} \|_{\sigma}$, and constant $\phi_{\mu, \nu}$  is specified in Assumption  \ref{assume:concentrability2}. 
 	\end{theorem}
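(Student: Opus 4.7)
The plan is to parallel the MDP-case proof of Theorem~\ref{thm:err_prop} in \S\ref{proof:thm:err_prop}, replacing the greedy structure with the minimax structure of the zero-sum game. The whole argument proceeds in three steps: a one-step sandwich bound on $Q^* - \tilde Q_{k+1}$, a transformation of this into a bound on $Q^* - Q_K^*$, and finally a concentration-coefficient bookkeeping identical to the MDP case. Throughout, let $\pi^*, \nu^*$ denote the equilibrium policies with respect to $Q^*$, let $\tilde \pi_k, \tilde \nu_k$ denote the equilibrium policies with respect to $\tilde Q_k$, and set $\varrho_{k+1} = T\tilde Q_k - \tilde Q_{k+1}$. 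The key facts used repeatedly are: $T^{\pi_Q,\nu_Q}Q = TQ$, $T^{\pi_Q,\nu}Q \geq TQ$ for all $\nu$, and $T^{\pi,\nu_Q}Q \leq TQ$ for all $\pi$, which follow from the minimax identity $\max_{\pi'}\min_{\nu'} \EE_{a\sim\pi', b\sim\nu'}[Q(s,a,b)]$ defining the Bellman operator $T$.

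For Step (i), writing $Q^* - \tilde Q_{k+1} = TQ^* - T\tilde Q_k + \varrho_{k+1}$ and using the upper comparisons $TQ^* \leq T^{\pi^*,\tilde \nu_k}Q^*$ and $T\tilde Q_k \geq T^{\pi^*,\tilde \nu_k}\tilde Q_k$ yields the inequality
\[
Q^* - \tilde Q_{k+1} \leq \gamma P^{\pi^*,\tilde \nu_k}(Q^* - \tilde Q_k) + \varrho_{k+1},
\]
while symmetrically one obtains the lower bound $Q^* - \tilde Q_{k+1} \geq \gamma P^{\tilde \pi_k,\nu^*}(Q^* - \tilde Q_k) + \varrho_{k+1}$. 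Iterating these sandwich inequalities produces multi-step bounds of exactly the form in Lemma~\ref{lemma:multi_step_err}, with composite Markov transition operators now acting on $\cS \times \cA \times \cB$ and built from the joint policy sequences $\{(\pi^*,\tilde \nu_j)\}$ and $\{(\tilde \pi_j,\nu^*)\}$.

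For Step (ii), since $\nu_{\pi_K}^*$ is the best response against $\pi_K$, Bellman's equation gives $Q_K^* = T^{\pi_K,\nu_{\pi_K}^*}Q_K^*$, so $(I - \gamma P^{\pi_K,\nu_{\pi_K}^*})(Q^* - Q_K^*) = TQ^* - T^{\pi_K,\nu_{\pi_K}^*}Q^*$. Introducing the best-response operator $\bar T^{\pi}Q = \min_{\nu}T^{\pi,\nu}Q$, one has the identity $TQ = \max_{\pi}\bar T^{\pi}Q$, hence $T\tilde Q_K = \bar T^{\pi_K}\tilde Q_K$ and $T^{\pi_K,\nu_{\pi_K}^*}Q^* \geq \bar T^{\pi_K}Q^*$. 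Inserting $\pm \bar T^{\pi_K}\tilde Q_K$ produces the telescoping decomposition
\[
TQ^* - T^{\pi_K,\nu_{\pi_K}^*}Q^* \leq (TQ^* - T\tilde Q_K) + (\bar T^{\pi_K}\tilde Q_K - \bar T^{\pi_K}Q^*).
\]
The first term is bounded by the upper sandwich of Step (i). For the second, choosing any minimizer $\nu_2 \in \argmin_{\nu} T^{\pi_K,\nu}Q^*$ gives $\bar T^{\pi_K}\tilde Q_K - \bar T^{\pi_K}Q^* \leq T^{\pi_K,\nu_2}\tilde Q_K - T^{\pi_K,\nu_2}Q^* = -\gamma P^{\pi_K,\nu_2}(Q^* - \tilde Q_K)$. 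Since $(I - \gamma P^{\pi_K,\nu_{\pi_K}^*})^{-1} = \sum_{j\geq 0}\gamma^j (P^{\pi_K,\nu_{\pi_K}^*})^j$ is a positive operator, we obtain
\[
0 \leq Q^* - Q_K^* \leq \gamma (I - \gamma P^{\pi_K,\nu_{\pi_K}^*})^{-1}\bigl[P^{\pi^*,\tilde \nu_K}(Q^* - \tilde Q_K) - P^{\pi_K,\nu_2}(Q^* - \tilde Q_K)\bigr].
\]
Plugging in the upper bound for the first $Q^* - \tilde Q_K$ and the lower bound for the second, both from Step~(i), taking absolute values, and organizing the coefficients yields a uniform bound on $|Q^* - Q_K^*|$ in exactly the form of \eqref{eq:absolute_value_bound}, with convex weights $\{\alpha_i\}_{i=0}^K$ defined as in \eqref{eq:define_alpha_param}.

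For Step (iii), integrating against $\mu$ and applying Cauchy-Schwarz combined with the Radon-Nikodym derivative yields $\tilde \mu_j(f) \leq \kappa(m+j;\mu,\sigma)\cdot \|f\|_{\sigma}$, where the relevant measures $\tilde \mu_j$ are marginals of Markov chains on $\cS \times \cA \times \cB$ driven by joint policies, which is precisely the setting of Assumption~\ref{assume:concentrability2}. The tail summation $(1-\gamma)^2\sum_{m\geq 1}\gamma^{m-1}m\,\kappa(m;\mu,\sigma) \leq \phi_{\mu,\sigma}$ reproduces the $2\phi_{\mu,\sigma}\gamma/(1-\gamma)^2$ prefactor on $\varepsilon_{\max}$, and the remaining $\alpha_K$-weighted term, bounded via $\|Q^* - \tilde Q_0\|_\infty \leq 2V_{\max} = 2R_{\max}/(1-\gamma)$, contributes $4\gamma^{K+1}R_{\max}/(1-\gamma)^2$. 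The main obstacle is Step~(ii): one must carefully distinguish the three player-two policies $\nu_K = \nu_{\tilde Q_K}$, $\nu_{\pi_K}^*$, and $\nu_2$, ensuring that the cancellation $T\tilde Q_K = \bar T^{\pi_K}\tilde Q_K$ is leveraged in a way that produces composite transition operators that are legitimate Markov kernels on $\cS\times\cA\times\cB$, so that Assumption~\ref{assume:concentrability2} can be applied verbatim.
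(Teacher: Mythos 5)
Your proposal is correct and follows essentially the same three-step route as the paper's proof in \S\ref{proof:thm:err_prop2}: a one-step sandwich built from saddle-point comparison inequalities for the Bellman operators, a resolvent identity relating $Q^* - Q_K^*$ to $Q^* - \tilde Q_K$, and the same concentrability bookkeeping. The only differences are cosmetic choices of witness policies (you use $(\pi_k,\nu^*)$ in the lower sandwich and $\nu_2=\nu(\pi_K,Q^*)$ in Step (ii), where the paper uses $(\pi_k,\nu(\pi_k,Q^*))$ and $\nu(\pi_K,Q_K^*)$ respectively); both choices yield legitimate joint-policy Markov kernels, so Assumption \ref{assume:concentrability2} applies either way.
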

 	  
 	\begin{proof}
 		We note that the proof of Theorem \ref{thm:err_prop} cannot be directly applied to prove this theorem. The main reason is that here we also need to consider the role played by the opponent, namely player two. Different from the MDP setting, here $Q_K^*$ is a fixed point of a nonlinear operator due to the fact that player two adopts the optimal policy against $\pi_K$. Thus, we need to conduct a more refined analysis.
 		See \S \ref{proof:thm:err_prop2} for a detailed proof.
 	\end{proof}
 	
 	 By this theorem, we need to derive an upper bound of $\varepsilon_{\max}$. We achieve such a goal by studying the one-step approximation error $\| T \tilde Q_{k-1} - \tilde Q_{k} \|_{\sigma}$ for each $k \in [K]$.

 	\begin{theorem}  [One-step Approximation Error] \label{thm:each_term_error2}
 		Let  $\cF\subseteq \cB( \cS \times \cA\times \cB, V_{\max} ) $ be a family of measurable functions on $\cS \times \cA\times \cB$ that are bounded by  $ V_{\max} =  R_{\max} / (1- \gamma)$. 
 			Also, let   $\{(S_i, A_i, B_i)\}_{i\in [n]}$ be $n$ i.i.d. random variables  following distribution $\sigma\in \cP( \cS \times \cA \times \cB)$. 
 	. For each $i\in [n]$,   let $R_i$ and $S_i'
 		$ be the reward obtained by the first player and the next state following  $(S_i, A_i, B_i)$.   In addition,  for any fixed $Q\in \cF$, we define the response variable as 
 		\#\label{eq:some_new_tgt}
 		Y_i = R_i +   \gamma \cdot \max_{\pi'\in\cP(\cA)}   \min_{\nu'\in\cP(\cB)}\EE_{a  \sim \pi',b  \sim \nu' }   \bigl [Q  (S_{i}' ,  a ,b ) \bigr ]. 
 		\# 
 		 Based on $\{( X_i, A_i, Y_i)\}_{i\in [n] } $, we define $\hat Q$ as   the solution to the least-squares problem \#\label{eq:regress2}
 		 \min_{f\in \cF} \frac{1}{n}  \sum_{i=1}^n \bigl[ f(S_i, A_i) - Y_i \bigr]^2. 
 		 \#
 		Then for any $\epsilon \in (0, 1]$ and any $\delta > 0$,  we have  
 		\#\label{eq:one_iter_bound2}
 		\| \hat Q - T Q \|_{\sigma}^2 \leq  ( 1 + \epsilon)^2 \cdot \sup_{g \in \cF    }   \inf_{f \in \cF } \| f - T g   \|_{\sigma}^2+ C \cdot  V_{\max }^2  / ( n \cdot \epsilon) \cdot \log    N_{\delta}  + C' \cdot V_{\max} \cdot \delta,
 		\#
 		where $C$ and $C'$ are two positive absolute constants, $T$ is the Bellman operator defined in \eqref{eq:bellman_oper22}, $N_\delta$ is the cardinality of the minimal $\delta$-covering of $\cF$ with respect to $\ell_{\infty}$-norm. 
 	\end{theorem}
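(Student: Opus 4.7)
The plan is to reduce this theorem to the analogous result for MDPs (Theorem \ref{thm:each_term_error}) by verifying that the minimax Bellman operator \eqref{eq:bellman_oper22} enjoys the same regression-target structure as the max Bellman operator \eqref{eq:optimal_bellman}. The first step is to check the key identity $\EE[Y_i \given S_i, A_i, B_i] = (TQ)(S_i, A_i, B_i)$, which follows directly from the definition of $T$ together with the fact that the minimax value in \eqref{eq:some_new_tgt} depends only on $Q$ and $S_i'$ and not on the noise in $R_i$. This recasts \eqref{eq:regress2} as a nonparametric regression problem with covariate $(S_i, A_i, B_i)$ and regression function $TQ$. Moreover, since $\|Q\|_\infty \leq V_{\max}$, the inner expression $\max_{\pi'}\min_{\nu'}\EE_{a\sim \pi', b\sim \nu'}[Q(S_i',a,b)]$ is also bounded by $V_{\max}$, so $Y_i$ is bounded by $2V_{\max}$ and the noise $\xi_i = Y_i - (TQ)(X_i)$ is centered and sub-Gaussian in exactly the sense used in the MDP proof, where $X_i = (S_i, A_i, B_i)$.

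Given this reformulation, I will follow verbatim the two-step argument used to prove Theorem \ref{thm:each_term_error}. In Step (i), I bound the empirical risk $\EE[\|\hat Q - TQ\|_n^2]$ by combining the ERM inequality $\sum_i (Y_i - \hat Q(X_i))^2 \leq \sum_i (Y_i - f(X_i))^2$ with a covering argument that controls the cross term $\sum_i \xi_i(\hat Q(X_i) - f(X_i))$. Picking $f_{k^*}$ in a minimal $\delta$-cover of $\cF$ with $\|f_{k^*} - \hat Q\|_\infty \leq \delta$, this cross term splits into a deterministic Cauchy--Schwarz piece of order $V_{\max} \cdot n \delta$ and a stochastic piece controlled by a maximal inequality over the $N_\delta$ self-normalized quantities $Z_j = n^{-1/2}\sum_i \xi_i [f_j(X_i) - (TQ)(X_i)]/\|f_j - TQ\|_n$. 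The conditional sub-Gaussianity of $\xi_i$ gives $\|Z_j\|_{\psi_2} \lesssim V_{\max}$, and Jensen's inequality yields $\EE[\max_j Z_j^2] \lesssim V_{\max}^2 \log N_\delta$; a quadratic-inequality manipulation of the form $a^2 \leq 2ab + c \Rightarrow a^2 \leq (1+\epsilon)^2 b^2/\epsilon + (1+\epsilon)c$ then produces the desired empirical bound.

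In Step (ii), I relate $\|\hat Q - TQ\|_\sigma^2$ to $\EE[\|\hat Q - TQ\|_n^2]$ by a ghost-sample argument combined with Bernstein's inequality. Introducing an independent copy $\{\tilde X_i\}$ and defining $h_j(x,y) = [f_j(y) - (TQ)(y)]^2 - [f_j(x) - (TQ)(x)]^2$, the gap between the two risks reduces to controlling $\sup_{j\in[N_\delta]} n^{-1}|\sum_i h_j(X_i,\tilde X_i)/\Upsilon|$, where the variance proxy $\Upsilon^2$ is chosen as in \S\ref{proof:each_term_error} so that the worst-case variance of $h_j$ is comparable to $\Upsilon^2 V_{\max}^2$. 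Bernstein's inequality together with a union bound over the cover and integration of the tail probability yield the desired upper bound, which combines with Step (i) to produce \eqref{eq:one_iter_bound2} with bias term $\sup_{g\in\cF}\inf_{f\in\cF}\|f - Tg\|_\sigma^2$.

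The only conceptual obstacle is ensuring that the minimax operator shares the structural properties of the max operator that the MDP proof uses, namely pointwise boundedness of $TQ$ by $V_{\max}$ and linearity of the conditional expectation $\EE[Y_i\given X_i] = (TQ)(X_i)$. These follow immediately from $|\max_{\pi'}\min_{\nu'}\EE_{a\sim\pi',b\sim\nu'}[Q(s',a,b)]| \leq \|Q\|_\infty$ and from the definition of $T$ in \eqref{eq:bellman_oper22}. Once these are verified, the probabilistic core of \S\ref{proof:each_term_error} transports to the Markov-game setting with only notational changes, with the joint action space $\cA\times\cB$ replacing $\cA$ and $X_i = (S_i, A_i, B_i)$ replacing $(S_i, A_i)$.
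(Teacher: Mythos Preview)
Your proposal is correct and follows essentially the same approach as the paper: verify that $\EE[Y_i \given S_i, A_i, B_i] = (TQ)(S_i, A_i, B_i)$ so that $TQ$ is the regression function for \eqref{eq:regress2}, then invoke the proof of Theorem~\ref{thm:each_term_error} verbatim with $X_i = (S_i, A_i, B_i)$ in place of $(S_i, A_i)$. The paper's proof is a one-paragraph pointer to exactly this reduction, whereas you have spelled out the two steps of \S\ref{proof:each_term_error} in more detail; the content is the same.
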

 	
 	\begin{proof} 
 		By the definition of $Y_i$ in \eqref{eq:some_new_tgt}, for any $(s,a,b) \in \cS\times \cA \times \min_{\nu'\in\cP(\cB)}$, we have 
 		\$
 		& \EE (Y_i \given S_i = s, A_i=a, B_i = b) \\
 		& \qquad  =  r(s,a) + \gamma \cdot 
 		\EE_{s' \sim P(\cdot \given s,a,b)}  \Bigl\{  \max_{\pi'\in\cP(\cA)}   \min_{\nu'\in\cP(\cB)}\EE_{a ' \sim \pi',b ' \sim \nu' }   \bigl [Q  (s'  ,  a ,b ) \bigr ] \Bigr \} =  (T   Q )(s,a, b).  
 		\$
 		Thus, $TQ$ can be viewed as the ground truth
 		of the nonlinear least-squares regression problem in \eqref{eq:regress2}. Therefore, following the same proof of Theorem 
 		\ref{thm:each_term_error}, we obtain the desired result.
 	\end{proof}

 	Now we let $\cF$ be the family of ReLU Q-networks  $\cF_1$  defined in \eqref{eq:define_cF2} and set $Q = \tilde Q_{k-1} $ in  Theorem \ref{thm:each_term_error2}.  
 	In addition, setting 
 	$\epsilon = 1$ and  $\delta = 1/n$ in \eqref{eq:one_iter_bound2}, 
 we obtain
 	\#\label{eq:apply_thm_each_term2}
 	\| \tilde Q_{k+1} -T\tilde  Q_k \|_{\sigma}^2 & \leq 4\cdot  \sup_{g \in \cF_1    }   \inf_{f \in \cF_1 } \| f - T g   \|_{\sigma}^2  + C \cdot V_{\max}^2  / n \cdot \log N_1 \notag \\
 	& \leq 4\cdot  \sup_{f' \in \cG_1   }   \inf_{f \in \cF_1 }    \| f -   f'   \|_{\infty}^2 + C \cdot V_{\max}^2  / n \cdot \log N_1, 
 	\#
 	where $C$ is a positive absolute constant, $N_1$ 
 	is the $1/n$-covering number of $\cF_1$, and function class $\cG_1$ is defined as 
 	\#
 	\cG _1&= \bigl \{ f\colon \cS \times \cA \rightarrow \RR  \colon f(\cdot, a, b) \in \cG( \{ p_j, t_j , \beta_j ,  H_j \}_{j \in[q]} ) ~\text{for any}~(a,b) \in \cA\times \cB \bigr \}.\label{eq:define_cG2} 
 	\#
 	Here the second inequality follows from Assumption \ref{assume:closedness2}. 
 	
 	Thus, it remains to bound the $\ell_{\infty}$-error of approximating functions in $\cG_1$ using ReLU Q-networks in $\cF_1$ and the $1/n$-covering number of $\cF_1$. In the sequel, obtain upper bounds for these two terms. 
 	
 	By the definition of $\cG_1$ in \eqref{eq:define_cG2}, for any $f\in \cG_1$ and any $(a,b) \in \cA \times \cB$, we have $f(\cdot, a, b) \in \cG(\{(p_j, t_j, \beta_j, H_j)\}_{j \in [q]})$. Following the same construction as in \S\ref{proof:each_term_error}, 
 	we can find a function $\tilde f $ in $ \cF(L^*, \{d_j^*\}_{j=1}^{L^*+1}, s^*  ) $  such that 
 	\$
 	\|   f (\cdot , a, b) - \tilde f \|_{\infty}  \lesssim  \max_{j \in [q] } n^{-2 \beta_j^* / ( 2\beta_j^*  + t_j ) } = n^{\alpha^* -1},
 	\$
 	which implies that 
 	\#\label{eq:trash23}
 	\sup_{f' \in \cG_1   }   \inf_{f \in \cF_1 }    \| f -   f'   \|_{\infty}^2   \lesssim  n^{\alpha^* -1}. 
 	\#
 	
 	Moreover, 
for any $f \in \cF_1$ and any $(a,b) \in \cA\times \cB$, we have $f (\cdot , a, b) \in \cF(L^*, \{d_j^*\}_{j=1}^{L^*+1}, s^*  ) $.  
Let  $\cN_{\delta}$ be  the $\delta$-covering of $ \cF(L^*, \{d_j^*\}_{j=1}^{L^*+1}, s^*  )$ in the $\ell_{\infty}$-norm. Then for any $f \in \cF_1$ and any $(a,b) \in \cA\times \cB$, 
 	 there exists $g_{ab} \in \cN _{\delta} $ such that  
 	$ \| f (\cdot , a, b)  - g_{a,b}  \|_{\infty} \leq \delta$.
 	Thus, the  
  cardinality of the $\cN( \delta , \cF_1, \| \cdot \| _{\infty})  $ satisfies  
 	\#\label{eq:cardinality_bound2}
 	\bigl |\cN( \delta , \cF_1, \| \cdot \| )  \bigr | \leq | \cN_{\delta} | ^{|\cA| \cdot | \cB| }. 
 	\#
 	Combining \eqref{eq:cardinality_bound2} with Lemma \ref{lemma:cover_number} and setting $\delta = 1/n$, we obtain  that 
 	\#\label{eq:covering_final2}
  	\log  N_1&   \leq |\cA| \cdot | \cB|  \cdot \log  | \cN_{\delta} |   \leq | \cA | \cdot  | \cB|  \cdot    (s^* +1) \cdot \log \bigl [ 2 n  \cdot (L^*+1) \cdot D^2 \bigr ]   \notag \\
 	&    \leq  |\cA| \cdot  | \cB|  \cdot s^* \cdot L^* \max_{j \in [L^*]} \log (d_j^*) \lesssim    |\cA| \cdot  | \cB|  \cdot n^{\alpha^*} \cdot (\log n)^ {1 + 2\xi^*}, 
 	\#
 	where $D = \prod_{\ell=1}^{L^*+1} ( d_{\ell}^* +1 )$ and the second inequality follows from 
    \eqref{eq:dnn_hyperparam}.

 	Finally, combining \eqref{eq:err_prop_final2}, \eqref{eq:apply_thm_each_term2},  \eqref{eq:trash23}, and \eqref{eq:covering_final2}, we conclude the proof of Theorem \ref{thm:game}. 
 \end{proof}


\subsection{Proof of Theorem \ref{thm:err_prop2} }    \label{proof:thm:err_prop2}
\begin{proof}
	The proof is similar to the that of  Theorem \ref{thm:err_prop2}. Before presenting the proof, we first introduce the following notation for simplicity.
	For any $k \in \{0, \ldots, K-1\}$, we denote $T \tilde Q_{k}$ by $Q_{k+1}$ and define 
	$
	\varrho_{k} = Q_k - \tilde{Q}_k.
	$   In addition, throughout the proof, for two action-value functions $Q_1$ and $Q_2$, we write $Q_1 \leq Q_2$ if $Q_1(s,a, b) \geq Q_2 (s,a, b) $ for any $(s,a,b) \in \cS\times \cA \times \cB$, and define $Q_1 \geq Q_2$ similarly. Furthermore,   we denote by $(\pi_k, \nu_k)$  and $(\pi^*, \nu^*)$ the equilibrium policies with respect to $\tilde Q_k$ by $Q^*$, respectively. 
Besides, in addition to the Bellman operators $T^{\pi, \nu}$ and $T$ defined in \eqref{eq:bellman_oper21} and \eqref{eq:bellman_oper22}, for any policy $\pi$ of the first  player, we define 
	\#\label{eq:Bellman3}
	T^\pi Q(s,a,b) = r(s,a,b) + \gamma   \cdot  \EE_{s' \sim P(\cdot \given s,a,b)} \Bigl\{ \min_{\nu'\in\cP(\cB)}\EE_{a' \sim \pi ,b' \sim \nu' }\bigl [Q(s', a',b') \bigr ]\Bigl\}, 
	\#
 corresponds to the case where the first player follows policy $\pi$ and player 2 adopts the best policy  in response to $\pi$.  By this definition, it holds that $Q^* = T^{\pi^*} Q^*$.
 Unlike the MDP setting, here $T^{\pi}$ is a nonlinear operator due to the minimization in \eqref{eq:Bellman3}. 
 Furthermore, for any fixed action-value function $Q$, we define the best-response policy against $\pi$ with respect to $Q$, denote by $\nu(\pi, Q)   $, as 
 \#\label{eq:best_response}
 \nu(\pi, Q)   (\cdot \given s) =    \argmin_{\nu ' \in \cP(\cB) }\EE_{a  \sim \pi ,b \sim \nu' }\bigl [Q(s , a,b) \bigr ] .
 \#
 Using this notation, we can write \eqref{eq:Bellman3} equivalently as 
 \$
 T^\pi Q(s,a,b) = r(s,a,b) + \gamma  \cdot \bigl ( P^{\pi , \nu(\pi, Q)} \bigr  ) (s,a,b).
 \$
 Notice that $ P^{\pi , \nu(\pi, Q)}$ is a linear operator and that $\nu_Q = \nu( \pi_Q, Q)$ by definition.

 Now we are ready to present the proof, which can be decomposed into 
	 three key steps. 
	
		\vskip4pt
	{\noindent \bf Step (i):}  In the first step, we establish   recursive upper and lower bounds for   $\{ Q^* - \tilde{Q}_{k} \}_{ 0 \leq k \leq K}$.    For each  $k\in \{0, \ldots, K-1\}$,  similar to  the decomposition in \eqref{eq:one_step1}, we have 
	\#\label{eq:one_step21}
	Q^* - \tilde{Q}_{k+1}  
   = Q^* - T^{\pi^*} \tilde{Q}_k +  ( T^{\pi^*} \tilde{Q}_k - T \tilde{Q}_{k}  ) + \varrho_{k+1} ,
	\#
	where $\pi^*$ is part of the equilibrium policy  with respect to $Q^*$ and   $T^{\pi^*}$ is defined in  \eqref{eq:Bellman3}.
	
	Similar to Lemma \ref{lemma:aux1}, we  utilize the following lemma to show $ T^{\pi^*} \tilde{Q}_k  \geq T \tilde{Q}_{k}$.

	\begin{lemma}\label{lemma:aux2}
		For any action-value function $Q: \cS \times \cA \times \cB \to \R$, let $(\pi_Q, \nu_Q)$ be the equilibrium policy with respect to $Q$. Then for and any policy $\pi $ of the first player,  it holds that 
		\$
		T ^{\pi_Q} Q = T  Q \geq T^{\pi } Q.
		\$
		Furthermore, for any policy $\pi \colon \cS  \rightarrow \cP(\cA) $ of   player one  and any action-value function $Q$, we~have 
		\#\label{eq:second_greedy}
		T^{\pi , \nu(\pi, Q) } Q  =  T^{\pi} Q  \leq T^{\pi, \nu} Q 
		\#
		for any  policy $\nu \colon   \cS \rightarrow \cP(\cB)$, where $\nu(\pi, Q)$ is the best-response policy defined in \eqref{eq:best_response}.
	\end{lemma}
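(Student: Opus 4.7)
The plan is to prove both statements by unpacking the definitions of the operators $T$, $T^\pi$, $T^{\pi,\nu}$, and $P^*$ given in \eqref{eq:bellman_oper21}, \eqref{eq:bellman_oper22}, and \eqref{eq:Bellman3}, and then exploiting the standard fact that the maximum (respectively minimum) of a function over a set is an upper (respectively lower) bound at any specific point. Since $r(s,a,b)$ and the discount factor $\gamma$ appear identically in all operators, it suffices to compare the advantage terms pointwise in $(s,a,b)$, which in turn reduces to comparing the one-step expectations for each $s'$ in the support of $P(\cdot \given s,a,b)$.

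For the first claim, I would fix $(s,a,b)$ and $s'$ and observe that by \eqref{eq:bellman_oper22},
\[
(P^* Q)(s,a,b) = \EE_{s'} \Bigl\{ \max_{\pi'\in \cP(\cA)} \min_{\nu' \in \cP(\cB)} \EE_{a'\sim \pi', b' \sim \nu'}[ Q(s',a',b')] \Bigr\},
\]
while $T^\pi Q$ evaluates the inner min at the fixed policy $\pi$. Plugging in the particular choice $\pi' = \pi(\cdot \given s')$ shows $T Q \geq T^\pi Q$. For the equality $T^{\pi_Q} Q = T Q$, I would invoke the definition of the equilibrium joint policy $(\pi_Q, \nu_Q)$ in \eqref{eq:equi_policy}, which states that $\pi_Q(\cdot \given s')$ attains the outer maximum and $\nu_Q(\cdot \given s')$ attains the inner minimum in the matrix game associated with $Q(s',\cdot,\cdot)$; in particular, for every $s'$ the value of the inner minimization at $\pi' = \pi_Q(\cdot\given s')$ coincides with the saddle value. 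This delivers the equality pointwise and hence $T^{\pi_Q} Q = T Q$.

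For the second claim, the inequality $T^\pi Q \leq T^{\pi,\nu} Q$ is a direct consequence of the definition of $\nu(\pi, Q)$ in \eqref{eq:best_response}: for every state $s'$, $\nu(\pi, Q)(\cdot \given s')$ is by construction the minimizer over $\cP(\cB)$ of $\nu' \mapsto \EE_{a'\sim \pi, b' \sim \nu'}[Q(s',a',b')]$, so substituting any other $\nu(\cdot \given s')$ can only increase the resulting expectation. Taking expectations over $s' \sim P(\cdot \given s,a,b)$ and adding $r(s,a,b)$ preserves the inequality, yielding $T^{\pi,\nu(\pi,Q)} Q \leq T^{\pi,\nu} Q$. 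The identity $T^{\pi, \nu(\pi, Q)} Q = T^\pi Q$ then follows because substituting $\nu' = \nu(\pi,Q)(\cdot \given s')$ in the definition \eqref{eq:Bellman3} recovers exactly the minimum that defines $T^\pi Q$.

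I do not foresee a substantive obstacle: the entire proof consists of carefully matching the definitions and using elementary minimax inequalities. The only minor subtlety is to make sure the minimax in \eqref{eq:bellman_oper22} is recognized as a pointwise-in-$s'$ operation so that the equilibrium policy from \eqref{eq:equi_policy}, which is defined state by state, can be used to realize the saddle value inside the expectation over $s'$; this is a standard measurable selection observation and poses no real difficulty in the finite-action setting assumed throughout the paper.
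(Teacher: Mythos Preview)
Your proposal is correct and follows essentially the same approach as the paper's own proof: both arguments unpack the definitions of $T$, $T^\pi$, $T^{\pi,\nu}$ and the equilibrium/best-response policies, then apply the elementary inequality that a pointwise maximum (respectively minimum) dominates (respectively is dominated by) evaluation at any particular point, state by state in $s'$, before integrating against $P(\cdot\given s,a,b)$.
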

	\begin{proof}
	Note that for any  $s' \in \cS$, by the definition of equilibrium policy, 
	we have 
	$$
	\max _{\pi'\in \cP(\cA )}   \min_{\nu'\in\cP(\cB)}\EE_{a' \sim \pi',b' \sim \nu' }\bigl [Q(s', a',b') \bigr ] = 	   \min_{\nu'\in\cP(\cB)}\EE_{a' \sim \pi _Q ,b' \sim \nu' }\bigl [Q(s', a',b') \bigr ] . 
	$$
		Thus, for any state-action tuple $(s,a, b)$, taking conditional expectations of $s$  with respect to  $   P(\cdot  \given s, a, b)$ on both ends of this equation, we have 
		\$
		&	( T ^{\pi_Q} Q) (s,a,b)   =  r(s,a,b) +  \gamma \cdot  \EE_{s' \sim P(\cdot \given s,a,b)} \Bigl\{ \min_{\nu'\in\cP(\cB)}\EE_{a' \sim \pi_Q  ,b' \sim \nu' }\bigl [Q(s', a',b') \bigr ]\Bigl\}  \\
			&\qquad  =  r(s,a,b) +  \gamma \cdot  \EE_{s' \sim P(\cdot \given s,a,b)} \Bigl\{  	\max _{\pi'\in \cP(\cA )} \min_{\nu'\in\cP(\cB)}\EE_{a' \sim \pi'   ,b' \sim \nu' }\bigl [Q(s', a',b') \bigr ]\Bigl\} = (TQ)(s,a,b),
		\$
		which proves $T^{\pi_Q} Q = TQ$.
	Moreover, for any policy $\pi$ of the first player, it holds that 
\$ 
 \max _{\pi'\in \cP(\cA )}   \min_{\nu'\in\cP(\cB)}\EE_{a' \sim \pi',b' \sim \nu' }\bigl [Q(s', a',b') \bigr ] \geq \min_{\nu'\in\cP(\cB)}\EE_{a' \sim \pi ,b' \sim \nu' }\bigl [Q(s', a',b') \bigr ] .
\$
Taking    expectations with respect to $s' \sim P(\cdot  \given s, a, b)$ 
 on both ends, 
 	we establish $ TQ \geq T^{\pi} Q$.
 	
 	It remains to show the second part of Lemma \ref{lemma:aux2}. 
 	By  the definition of $\nu(\pi, Q)$, we have 
 	$$
    \EE_{a' \sim \pi,b' \sim \nu(\pi, Q) }\bigl [Q(s', a',b') \bigr ] = 	   \min_{\nu'\in\cP(\cB)}\EE_{a' \sim \pi ,b' \sim \nu' }\bigl [Q(s', a',b') \bigr ] ,
 	$$
 	which, combined with the definition of $T^\pi$ in \eqref{eq:Bellman3},  implies that $	T^{\pi , \nu(\pi, Q) } Q  =  T^{\pi} Q $.
  Finally, for any policy $\nu$ of player two, we have 
  $$
 \min_{\nu'\in\cP(\cB)}\EE_{a' \sim \pi ,b' \sim \nu' }\bigl [Q(s', a',b') \bigr ] \geq \EE_{a' \sim \pi ,b' \sim \nu }\bigl [Q(s', a',b') \bigr ] ,
  $$
  which yields $T^{\pi} Q  \leq T^{\pi, \nu} Q$. Thus, we conclude the proof of this lemma. 
	\end{proof}

Hereafter, for notational simplicity,  for each $k $, let $(\pi_k, \nu_k)$ be the equilibrium joint policy with respect to $\tilde Q_k$, and  we denote $\nu( \pi^*, \tilde Q_k)$  and $\nu( \pi_k, Q^*)$ by $\tilde \nu_k$ and $\bar \nu_k$, respectively.
Applying 
  Lemma \ref{lemma:aux2} to \eqref{eq:one_step21} and utilizing the fact that  $Q^* = T^{\pi^*} Q^*$, 
  we have 
  	\# \label{eq:one_step22}
  Q^* - \tilde{Q}_{k+1} &  
  \leq  (  Q^* - T^{\pi^*} \tilde{Q}_k  )+ \varrho_{k+1} =  ( T^{\pi^*} Q^* - T^{\pi^*} \tilde{Q}_k  )+ \varrho_{k+1}  \notag \\
  & \leq \bigl  ( T^{\pi^*, \tilde \nu_k } Q^* - T^{\pi^*, \tilde \nu_k }  \tilde{Q}_k \bigr  ) +  \varrho_{k+1}  = \gamma \cdot P^{\pi^*, \tilde \nu_k } ( Q^* - \tilde Q_k) + \varrho_{k+1} ,
  \#
  where the last inequality follows from \eqref{eq:second_greedy}.
   Furthermore,   
   for a lower bound of $Q^* - \tilde Q_{k+1}$, similar to \eqref{eq:one_step3},  we have 
	\#\label{eq:one_step32}
	Q^* - \tilde{Q}_{k+1}  
	 & =   ( T Q^*  -T^{\pi_k} Q^* ) +  (  T^{\pi_k} Q^* - T^{\pi_k} \tilde{Q}_{k}   ) + \varrho_{k+1} \notag \\
	 & \geq  (  T^{\pi_k} Q^* - T^{\pi_k} \tilde{Q}_{k}   ) + \varrho_{k+1} \geq \gamma  \cdot P^{\pi_k, \bar \nu_k} ( Q^* - \tilde Q_k ) + \varrho _{k+1},
	\#
	where the both  inequalities  follow  from Lemma \ref{lemma:aux2}. 
	Thus, combining \eqref{eq:one_step22} and \eqref{eq:one_step32} we have 
	\# \label{eq:lemma_compact2}
	\gamma \cdot P^{\pi^*, \tilde \nu_k } (Q^* - \tilde{Q}_k) + \varrho_{k+1} \geq Q^* - \tilde{Q}_{k+1} \geq\gamma  \cdot P^{\pi_k, \bar \nu_k} (Q^* - \tilde{Q}_k) + \varrho_{k+1}.
	\# 
	for any $k \in \{0,\ldots, K-1\}$. Similar to the proof of Lemma  \ref{lemma:multi_step_err} , by applying recursion to \eqref{eq:lemma_compact2}, we obtain the following 
   upper and  lower bounds for the error  propagation of    Algorithm \ref{algo:fit_Q2}.

	 \begin{lemma}
	 	[Error Propagation]\label{lemma:multi_step_err2}  
	 	For any  $k, \ell  \in \{0, 1, \ldots, K-1 \}$ with $k < \ell$, we have
	 	\# 
	  Q^* - \tilde{Q}_{\ell}	&  \leq  \sum_{i = k}^{\ell-1} \gamma^{\ell-1-j} \cdot \bigl  (P^{\pi^*, \tilde \nu_{\ell-1}}  
	 	P^{\pi^*, \tilde \nu_{\ell-2}}\cdots P^{\pi^*, \tilde \nu_{i +1}} \bigr )   \varrho _{i +1}  \notag \\
	 	&\qquad\qquad\qquad + \gamma^{\ell-k} \cdot   (P^{\pi^*, \tilde \nu_{\ell-1}}  
	 	P^{\pi^*, \tilde \nu_{\ell-2}}\cdots P^{\pi^*, \tilde \nu_{k }} \bigr )   ( Q^* - \tilde{Q} _k),  \label{eq:multi_step_upper2} \\
	 Q^* - \tilde{Q}_{\ell} 	&  \geq \sum_{i=k}^{\ell-1} \gamma^{\ell-1-i} \cdot \bigl (P^{\pi_{\ell-1}, \bar \nu_{\ell-1}}P^{\pi_{\ell-2}, \bar \nu_{\ell-2} }\cdots P^{\pi_{i+1}, \bar \nu_{i+1}} \bigr ) \varrho_{i+1} \notag \\
	 & \qquad \qquad \qquad + \gamma^{\ell-k} \cdot\bigl (P^{\pi_{\ell-1}, \bar \nu_{\ell-1}}P^{\pi_{\ell-2}, \bar \nu_{\ell-2} }\cdots P^{\pi_{i+1}, \bar \nu_{k}} \bigr )   ( Q^* - \tilde{Q}_k). \label{eq:multi_step_lower2}
	 	\#
	 \end{lemma}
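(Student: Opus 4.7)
The plan is to prove both bounds by induction on $m = \ell - k \geq 1$, using the one-step sandwich inequality \eqref{eq:lemma_compact2} repeatedly together with two elementary properties of the operators $P^{\pi,\nu}$: linearity, and monotonicity (if $f \geq g$ pointwise on $\cS\times\cA\times\cB$, then $P^{\pi,\nu} f \geq P^{\pi,\nu} g$). Monotonicity is immediate because $P^{\pi,\nu}$ is just a conditional expectation operator, as defined right after \eqref{eq:bellman_oper22}.

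For the upper bound \eqref{eq:multi_step_upper2}, the base case $m = 1$ is precisely the upper half of \eqref{eq:lemma_compact2}. Assuming the claim for some $m \geq 1$, I would fix $\ell$ and $k$ with $\ell - k = m+1$ and first apply \eqref{eq:lemma_compact2} with index $\ell - 1$ to obtain
\$
Q^* - \tilde Q_\ell \leq \gamma \cdot P^{\pi^*, \tilde \nu_{\ell-1}} \bigl( Q^* - \tilde Q_{\ell-1} \bigr) + \varrho_\ell.
\$
By the inductive hypothesis applied to $Q^* - \tilde Q_{\ell-1}$ (with the same $k$), the quantity inside the parentheses is bounded above by
\$
\sum_{i=k}^{\ell-2} \gamma^{\ell-2-i} \cdot \bigl( P^{\pi^*,\tilde\nu_{\ell-2}} \cdots P^{\pi^*,\tilde\nu_{i+1}} \bigr)\varrho_{i+1} + \gamma^{\ell-1-k} \cdot \bigl( P^{\pi^*,\tilde\nu_{\ell-2}} \cdots P^{\pi^*,\tilde\nu_{k}} \bigr)\bigl( Q^* - \tilde Q_k \bigr).
\$
Applying $\gamma \cdot P^{\pi^*,\tilde\nu_{\ell-1}}$ to both sides—legitimate by monotonicity and linearity—and then adding $\varrho_\ell$ recovers exactly the claimed form of \eqref{eq:multi_step_upper2}, since the leading $\varrho_\ell$ corresponds to the $i = \ell - 1$ term in the sum (with the empty product of operators interpreted as the identity) and the geometric factors align correctly.

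For the lower bound \eqref{eq:multi_step_lower2}, I would run the same induction using the other half of \eqref{eq:lemma_compact2}, this time with the operators $P^{\pi_j, \bar\nu_j}$ for $j = \ell-1, \ell-2, \ldots, k$. The monotonicity of these operators preserves the inequality direction, and the identical bookkeeping yields the claimed lower bound. The only mild subtlety I anticipate is verifying that, at each step of the induction, the newly introduced operator (either $P^{\pi^*,\tilde\nu_{\ell-1}}$ or $P^{\pi_{\ell-1},\bar\nu_{\ell-1}}$) is composed on the outermost left, producing the ordering $P^{\cdot,\cdot}_{\ell-1} P^{\cdot,\cdot}_{\ell-2}\cdots P^{\cdot,\cdot}_{i+1}$ displayed in the statement; this follows directly from the fact that we first bound $Q^* - \tilde Q_\ell$ in terms of $Q^* - \tilde Q_{\ell-1}$ before unfolding the remaining recursion. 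No other obstacle arises: unlike the role of Lemma \ref{lemma:aux2} in deriving \eqref{eq:lemma_compact2}, the passage from the one-step bound to the multi-step bound is purely algebraic once monotonicity and linearity of $P^{\pi,\nu}$ are in hand.
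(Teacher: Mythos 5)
Your proof is correct and follows essentially the same route as the paper, whose entire argument is the one-line remark that the lemma follows from applying the one-step sandwich inequality \eqref{eq:lemma_compact2} multiple times together with the linearity of $P^{\pi,\nu}$; your induction on $\ell-k$ is exactly that iteration made explicit, with the correct bookkeeping of the operator ordering and the interpretation of the $i=\ell-1$ term as an empty product. Your explicit appeal to monotonicity of $P^{\pi,\nu}$ is a detail the paper leaves implicit (it mentions only linearity), but it is genuinely needed to propagate the inequalities, so your version is if anything the more careful one.
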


	 \begin{proof}
	 	  The desired results follows  from applying the inequalities in \eqref{eq:lemma_compact2} multiple times and the linearity  of the operator $P^{\pi,\nu}$ for any joint policy $(\pi, \nu)$.
	 \end{proof}
	 
	 The above lemma establishes recursive upper and lower bounds for the error terms  $\{ Q^*- \tilde Q_k \}_{ 0 \leq k \leq K-1}$, which completes the first step of the proof.

	\vskip4pt
	{\noindent \bf Step (ii):} 
	In the second step, we characterize the suboptimality of the equilibrium policies constructed by Algorithm \ref{algo:fit_Q2}. Specifically, for each $\pi_k$, we denote by $Q_k^*$ the action-value function obtained when agent one follows $\pi_k$ while agent two adopt the best-response policy against $\pi_k$. In other words, $Q^*_k$ is the fixed point of Bellman operator $T^{\pi_k}$ defined in \eqref{eq:Bellman3}. In the following, we obtain an upper bound of $Q^* - Q_k^*$, which establishes the  a notion of suboptimality of policy $(\pi_k, \nu_k)$  from the perspective of the first player.  
	
	To begin with, for any $k$, we first decompose $Q^* - Q_k^*$ by 
	\#
	\label{eq:decom222}
	 Q^* - Q_k^* = \bigl ( T ^{\pi^* } Q^* -  T ^{\pi^* } \tilde Q_k \bigr )  +  \bigl (  T ^{\pi^* } \tilde Q_k -  T ^{\pi_k } \tilde Q_k  \bigr )  + \bigl ( T ^{\pi_k } \tilde Q_k - T^{\pi_k} Q_k^*   \bigr )  .
	\#
	Since $\pi_k$ is the  equilibrium policy with respect to $\tilde Q_k$, 
	by Lemma \ref{lemma:aux2}, we have 
	$
	T ^{\pi^* } \tilde Q_k \leq  T ^{\pi_k } \tilde Q_k. 
	$
	Recall that $(\pi^*, \nu^*)$ is the joint equilibrium policy with respect to $Q^*$. 
The second argument of Lemma \ref{lemma:aux2} implies that 
	\#\label{eq:upper222}
	T ^{\pi^* } Q^* \leq T^{\pi^*, \tilde \nu_k} Q^* , \qquad T ^{\pi_k } \tilde Q_k \leq T ^{\pi_k, \hat \nu_k } \tilde Q_k, 
	\#
	where $\tilde \nu_k = \nu( \pi^*, \tilde Q_k)  $ and we define $\hat \nu_k = \nu(\pi_k, Q_k^*)$. Thus, combining \eqref{eq:decom222} and \eqref{eq:upper222} yields that 
	\#\label{eq:upper223}
	0 \leq Q^* - Q_k^*  &  \leq \gamma \cdot P^{\pi^*, \tilde \nu_k} \bigl ( Q^* - \tilde Q_k \bigr  ) + \gamma \cdot P^{\pi_k, \hat \nu_k} \bigl  ( \tilde Q_k - Q_k^* \bigr ) \notag \\
	& =  \gamma \cdot  \bigl ( P^{\pi^*, \tilde \nu_k} - P^{\pi_k, \hat \nu_k} \bigr )  \bigl ( Q^* - \tilde Q_k \bigr  ) +\gamma \cdot P^{\pi_k, \hat \nu_k} \bigl  (  Q^*  - Q_k^* \bigr )    .
	\#
	Furthermore, since $I - \gamma \cdot P^{\pi_k, \hat \nu_k} $ is invertible, by \eqref{eq:upper223} we have 
	\#\label{eq:upper224}
	0 \leq Q^* - Q_k^* \leq \gamma \cdot \bigl ( I - \gamma \cdot P^{\pi_k, \hat \nu_k}  \bigr )^{-1} \cdot  \bigl [ P^{\pi^*, \tilde \nu_k} \bigl ( Q^* - \tilde Q_k \bigr  )  - P^{\pi_k, \hat \nu_k} \bigl ( Q^* - \tilde Q_k \bigr  ) \bigr ].
		\#
Now we  apply Lemma \ref{lemma:multi_step_err2} to  the right-hand side of \eqref{eq:upper224}. Then for any $k \leq  \ell$,   we have 
\#
P^{\pi^*, \tilde \nu_\ell}  \bigl ( Q^* - \tilde{Q}_{\ell} \bigr ) 	&  \leq  \sum_{i = k}^{\ell-1} \gamma^{\ell-1-j} \cdot \bigl  (P^{\pi^*, \tilde \nu_{\ell}}  
P^{\pi^*, \tilde \nu_{\ell-1}}\cdots P^{\pi^*, \tilde \nu_{i +1}} \bigr )   \varrho _{i +1}  \notag \\
&\qquad\qquad\qquad + \gamma^{\ell-k} \cdot  \bigl  (P^{\pi^*, \tilde \nu_{\ell}}  
P^{\pi^*, \tilde \nu_{\ell-1}}\cdots P^{\pi^*, \tilde \nu_{k }} \bigr )   ( Q^* - \tilde{Q} _k),   \label{eq:upper225} \\
P^{\pi_\ell , \hat \nu_\ell}  \bigl ( Q^* - \tilde{Q}_{\ell} 	\bigr)&  \geq \sum_{i=k}^{\ell-1} \gamma^{\ell-1-i} \cdot \bigl (P^{\pi_{\ell }, \bar \nu_{\ell}}P^{\pi_{\ell}, \bar \nu_{\ell-2} }\cdots P^{\pi_{i+1}, \bar \nu_{i+1}} \bigr ) \varrho_{i+1} \notag \\
& \qquad \qquad \qquad + \gamma^{\ell-k} \cdot\bigl (P^{\pi_{\ell}, \bar \nu_{\ell}}P^{\pi_{\ell-1}, \bar \nu_{\ell-1} }\cdots P^{\pi_{k}, \bar \nu_{k}} \bigr )   ( Q^* - \tilde{Q}_k). \label{eq:upper226}
\#
Thus, setting $\ell = K$ and $k=0$ in \eqref{eq:upper225} and \eqref{eq:upper226}, we have
\#\label{eq:upper227}
& Q^* - Q_K^*   \leq \bigl ( I - \gamma \cdot P^{\pi_K, \hat \nu_K}  \bigr )^{-1} \cdot  \\
&  \qquad \qquad\qquad \qquad\biggl \{ 
\sum_{i=0}^{K-1} \gamma^{K-1} \cdot  \Bigl [ \bigl (P^{\pi^*, \tilde \nu_{K}}  
P^{\pi^*, \tilde \nu_{K-1}}\cdots P^{\pi^*, \tilde \nu_{i +1} } \bigr ) - \bigl ( P^{\pi_{K }, \bar \nu_{K}}P^{\pi_{K-1}, \bar \nu_{K-1} }\cdots P^{\pi_{i+1}, \bar \nu_{i+1}}
 \bigr ) \Bigr ]\varrho _{i+1} \notag \\
	& \qquad \qquad \qquad + \gamma^{K+1} \cdot \Bigl [   \bigl( P^{\pi^*, \tilde \nu_{K}}  
	P^{\pi^*, \tilde \nu_{K-1}}\cdots P^{\pi^*, \tilde \nu_{0 }} \bigr )  - \bigl( P^{\pi_{K}, \bar \nu_{K} }P^{\pi_{K-1}, \bar \nu_{K-1} }\cdots P^{\pi_{0}, \bar \nu_{0}} 
			\bigr)  \Bigr ] ( Q ^* - \tilde Q_0)  \biggr \}. \notag
			\#
	 To simplify the notation,  we define $\{ \alpha_{i} \}_{i = 0}^K$   as in \eqref{eq:define_alpha_param}. Note that we have $\sum_{i=0}^K \alpha_i = 1$ by definition. Moreover, we define $K+1$ 
linear operators $\{O_k \}_{ k=0}^K $ as follows. For any $i \leq K-1$, let 
	\$
	O_i &= \frac{1 - \gamma}{2} \cdot ( I - \gamma \cdot P^{\pi_K, \hat \nu_K}  \bigr )^{-1}    \Bigl [ \bigl (P^{\pi^*, \tilde \nu_{K}}  
	P^{\pi^*, \tilde \nu_{K-1}}\cdots P^{\pi^*, \tilde \nu_{i +1} } \bigr ) - \bigl ( P^{\pi_{K }, \bar \nu_{K}}P^{\pi_{K-1}, \bar \nu_{K-1} }\cdots P^{\pi_{i+1}, \bar \nu_{i+1}}
	\bigr ) \Bigr ] . 
	\$
	Moreover,  we define $O_K$ by
	 \$
	O_K & = \frac{1 - \gamma}{2} \cdot ( I - \gamma \cdot P^{\pi_K, \hat \nu_K}  \bigr )^{-1}  \Bigl [   \bigl( P^{\pi^*, \tilde \nu_{K}}  
	P^{\pi^*, \tilde \nu_{K-1}}\cdots P^{\pi^*, \tilde \nu_{0 }} \bigr )  - \bigl( P^{\pi_{K}, \bar \nu_{K} }P^{\pi_{K-1}, \bar \nu_{K-1} }\cdots P^{\pi_{0}, \bar \nu_{0}} 
	\bigr)  \Bigr ].
	\$
Therefore, taking 
	  absolute values on both sides of  \eqref{eq:upper227}, we obtain that   
	\#\label{eq:upper228}
&	\bigl | Q^* (s,a,b) - Q_{K}^*  (s,a,b) \bigr | \notag \\
	& \qquad 
	\leq   \frac{2 \gamma ( 1 - \gamma^{K+1} ) }{(1- \gamma)^2}  \cdot    	 
	 \biggl [ \sum_{i=0}^{K-1} \alpha_i \cdot \bigl  ( O_i | \varrho_{i+1} | \bigr ) (s,a, b) + \alpha_K  \cdot \bigl ( O_K | Q^* - \tilde Q_0 | \bigr ) (s,a,b) \biggr ], 
	\# 
	for any $(s, a, b) \in \cS \times \cA\times \cB$, which concludes  
  the second step of the proof.

	\vskip4pt
	{\noindent \bf Step (iii):} We note that \eqref{eq:upper228} is  nearly the same as  \eqref{eq:absolute_value_bound} for the MDP setting. Thus, in the last step, we follow the same proof strategy as in    {\bf{Step (iii)}}  in  \S\ref{proof:thm:err_prop}. For notational simplicity, for any function $f \colon \cS\times \cA \times \cB \rightarrow \RR$ and any probability distribution  $\mu \in \cP( \cS\times \cA \times \cB  )$, we denote the expectation of $f$ under $\mu$ by $\mu(f)$. By taking expectation with respect to $\mu$ in \eqref{eq:upper228}, we have 
	\#\label{eq:upper229}
	\|  Q^* - Q_{K}^*  \|_{1, \mu}   \leq   \frac{2 \gamma ( 1 - \gamma^{K+1} ) }{(1- \gamma)^2}  \cdot  \biggl [  \sum_{i=0}^{K-1}  \alpha_i \cdot  \mu \bigl ( O_{ i} |\varrho_{i+1}|   \bigr )+ \alpha_{K } \cdot \mu \bigl ( O_{K } | Q^* - \tilde Q_0 |   \bigr )\biggr ]  . 
	\#
	By the definition of $O_i$, we can write $\mu ( O_{ i} |\varrho_{i+1}|     )$ as 
	\#\label{eq:bound_oi2}
	\mu\bigl ( O_{ i} |\varrho_{i+1}|  \bigr)  &   = \frac{1-\gamma}{2}\cdot  \mu \biggl \{  \sum_{j=0}^\infty \gamma^j \cdot \Bigl [ \bigl (P^{\pi_K, \hat \nu_K} \bigr )^j  
	\bigl (P^{\pi^*, \tilde \nu_{K}}  
	P^{\pi^*, \tilde \nu_{K-1}}\cdots P^{\pi^*, \tilde \nu_{i +1} } \bigr ) \\
	& \qquad \qquad \qquad\qquad\qquad + \bigl (P^{\pi_K, \hat \nu_K} \bigr )^j    \bigl ( P^{\pi_{K }, \bar \nu_{K}}P^{\pi_{K-1}, \bar \nu_{K-1} }\cdots P^{\pi_{i+1}, \bar \nu_{i+1}}
	\bigr ) 
\Bigr ] |\varrho_{i+1}|   \biggr \}. \notag 
\#
	To upper bound the right-hand side of \eqref{eq:bound_oi2}, 
	we consider the following quantity 
	\$
	\mu \bigl[   P^{\tau_{m}}   \cdots P^{\tau_{1}} )  f \bigr ]= \int_{\cS \times \cA\times \cB} \bigl[(P^{\tau_m} P^{\tau_{m-1}}   \cdots P^{\tau_{1}} ) f\bigr ] ( s, a, b)  \ud\mu(s, a, b), 
	\$
	where $\{ \tau_t \colon \cS \rightarrow \cP(\cA \times \cB) \}_{t\in [m]}$ are $m$ joint policies of the two-players. 
	By Cauchy-Schwarz inequality, it holds that 
	\$
		\mu \bigl[   P^{\tau_{m}}   \cdots P^{\tau_{1}} )  f \bigr ] &\leq \Biggl [  \int _{\cS \times \cA \times \cB } \biggl | \frac{    \ud (  P^{\tau_m} P^{\tau_{m-1}}  \cdots P^{\tau_1} \mu )    } { \ud \sigma} (s,a,b) \Biggr | ^2   \ud\sigma(s,a,b)  \biggr ] ^{1/2} \notag \\
		& \qquad \cdot  \biggl [ \int_{\cS \times \cA\times \cB } | f(s,a, b)|  ^{2 } \ud\sigma(s,a,b)\biggr ]^{1/2}  \leq \kappa  (m ; \mu, \sigma) \cdot \| f \| _{\sigma},
	\$ 
	where  $ \kappa  (m ; \mu, \sigma) $ is the $m$-th  concentration parameter defined in \eqref{eq:concentration2}. 
	Thus, by \eqref{eq:bound_oi2}we have 
	\#\label{eq:other_terms22}
	\mu\bigl ( O_{ i} |\varrho_{i+1}| \bigr)  \leq (1-\gamma )  \cdot \sum_{j=0}^\infty \gamma^j  \cdot  \kappa ( K - i + j; {\mu, \nu}) \cdot \| \varrho_{i+1}\|_{\sigma}.
	\#
	Besides, 
	since both $Q^*$ and $\tilde Q_0$ are bounded by $  R_{\max} / ( 1-\gamma )$ in $\ell_{\infty}$-norm, we have 
	\#\label{eq:last_term2}
	\mu \bigl ( O_{K } | Q^* - \tilde Q_0 |   \bigr ) \leq    2 \cdot R_{\max} / ( 1- \gamma).
	\#
	Finally, combining 
  \eqref{eq:upper229}, \eqref{eq:other_terms22}, and \eqref{eq:last_term2}, we obtain that 
	\$
& 	\|  Q^* - Q^{\pi_K }\|_{1, \mu}  \\  
  &  \qquad  \leq \frac{2 \gamma ( 1 - \gamma^{K+1} ) }{(1- \gamma) }   \cdot  \biggl [  \sum_{i=0}^{K-1} \sum_{j=0}^{\infty}\alpha_i \cdot  \gamma^j \cdot \kappa( K- i + j; \mu, \nu) \cdot \| \varrho_{i+1} \|_{\sigma} \biggr ] \ + \frac{4 \gamma ( 1 - \gamma^{K+1} ) }{(1- \gamma) ^3 } \cdot \alpha_K \cdot  R_{\max}  \notag\\
   & \qquad  \leq   \frac{2 \gamma ( 1 - \gamma^{K+1} ) }{(1- \gamma) }   \cdot  \biggl [  \sum_{i=0}^{K-1} \sum_{j=0}^{\infty} \frac{( 1 - \gamma) \gamma ^{K-i - 1} }{ 1 - \gamma ^{K+1} } \cdot \gamma^j \cdot \kappa( K- i + j; \mu, \nu)   \biggr ] \cdot \varepsilon_{\max} + \frac{4   \gamma^{K+1}   }{(1- \gamma) ^2 }  \cdot R_{\max},\notag
	\$
	where  the last inequality follows from the fact that 
	$\varepsilon_{\max} = \max_{i \in [K] } \| \varrho_{i} \|_{\sigma}$.  Note that in  \eqref{eq:simplify_computation} we show that it holds under Assumption 
	\ref{assume:concentrability2} that
	 \$ 
	 \sum_{i=0}^{K-1} \sum_{j=0}^{\infty} \frac{( 1 - \gamma) \gamma ^{K-i - 1} }{ 1 - \gamma ^{K+1} } \cdot \gamma^j \cdot \kappa( K- i + j; \mu, \nu)      
	    \leq       \frac{\phi_{\mu, \nu}}{(1 - \gamma ^{K+1}) (1 -\gamma) }  .
	\$ Hence, we  obtain \eqref{eq:err_prop_final2} and thus
conclude the   proof of Theorem \ref{thm:err_prop2}.
\end{proof}

\newpage
\bibliographystyle{ims}
\bibliography{rl_ref}
\end{document}